\documentclass{article}
\usepackage{iclr2026_conference,times}

\usepackage{amsmath,amsfonts,bm}

\def\eqref#1{equation~\ref{#1}}

\def\1{\bm{1}}

\DeclareMathAlphabet{\mathsfit}{\encodingdefault}{\sfdefault}{m}{sl}
\SetMathAlphabet{\mathsfit}{bold}{\encodingdefault}{\sfdefault}{bx}{n}

\usepackage{hyperref}
\usepackage{url}
\usepackage{amsmath}
\usepackage{amssymb}
\usepackage{wrapfig}
\usepackage{amsthm}
\usepackage{cancel}
\usepackage{xspace}
\usepackage{graphicx}
\usepackage{wrapfig}
\usepackage{float}
\usepackage{url}
\usepackage{color}
\usepackage{multirow}
\usepackage{gensymb}
\usepackage{longtable}
\usepackage{tikz}
\usepackage{comment}
\usepackage{dsfont}
\usepackage{framed}
\usepackage{enumitem}
\usepackage{bm}
\usepackage{pifont}
\usepackage{booktabs}
\usepackage{multibib}
\usepackage{makecell}
\usepackage[ruled,vlined]{algorithm2e}
\usepackage[font=small,skip=2pt]{caption}
\usepackage{textcomp}

\newcommand{\note}[1]{\textcolor{black}{ #1}}

\newtheorem{theorem}{Theorem}
\newtheorem{lemma}{Lemma}
\newtheorem{assumption}{Assumption}
\newtheorem{corollary}{Corollary}
\theoremstyle{definition}
\newtheorem{definition}{Definition}
\newtheorem{remark}{Remark}

\newcommand{\pdro}{\texttt{P-DRO}\xspace}
\newcommand{\ouralg}{\texttt{GAS-DRO}\xspace}

\newcommand{\dagalg}{\texttt{InnerMax}\xspace}
\newcommand{\dro}{DRO\xspace}

\newcommand{\dragen}{\texttt{DRAGEN}\xspace}
\newcommand{\kldro}{\texttt{KL-DRO}\xspace}

\newcommand{\wdro}{\texttt{W-DRO}\xspace}
\newcommand{\ml}{\texttt{ML}\xspace}
\newcommand{\dml}{\texttt{DML}\xspace}
\newcommand{\ood}{OOD\xspace}
\newcommand{\kl}{KL\xspace}

\newcommand{\ddpm}{DDPM\xspace}
\newcommand{\zinit}{z_0\xspace}

\newcommand{\vae}{\text{VAE}\xspace}

\title{Distributionally Robust Optimization via \\ Generative Ambiguity Modeling}

\iclrfinalcopy

\author{Jiaqi Wen and Jianyi Yang \thanks{Correspondence to Jianyi Yang \{jyang66@uh.edu\}}\\
Department of Computer Science \\
University of Houston\\}

\begin{document}

\maketitle

\begin{abstract}
This paper studies Distributionally Robust Optimization (DRO), a fundamental framework for enhancing the robustness and generalization of statistical learning and optimization. An effective ambiguity set for DRO must involve distributions that remain consistent to the nominal distribution while being diverse enough to account for a variety of potential scenarios. Moreover, it should lead to tractable DRO solutions. To this end, we propose generative model-based ambiguity sets that capture various adversarial distributions beyond the nominal support space while maintaining consistency with the nominal distribution. Building on this generative ambiguity modeling, we propose DRO with Generative Ambiguity Set (\ouralg), a tractable DRO algorithm that solves the inner maximization over the parameterized generative model space. We formally establish the stationary convergence performance of \ouralg. We implement \ouralg with a diffusion model and empirically demonstrate its superior Out-of-Distribution (OOD) generalization performance in ML tasks. \footnote{Source Code Link: \href{https://github.com/CIGLAB-Houston/GAS-DRO}{https://github.com/CIGLAB-Houston/GAS-DRO}}
\end{abstract}

\section{Introduction}

 Distributionally Robust Optimization (\dro) is a fundamental framework for enhancing the robustness of statistical learning and optimization problems, particularly under Out-of-Distribution (\ood) scenarios \cite{OOD_generalization_liu2021towards,OOD_ML_arjovsky2020out,DRL_environment_generation_ren2022distributionally}.
\dro formulates a minimax optimization problem, where the inner maximization identifies the worst-case distribution within an ambiguity set, and the outer minimization optimizes the decision variable against this worst-case scenario \cite{DRL_chen2020distributionally, DRQL_liu2022distributionally, DR_regression_chen2018robust, WDRL_kuhn2019wasserstein, DRRL_smirnova2019distributionally}.
Unlike non-probabilistic robust optimization, \dro leverages probabilistic ambiguity modeling to enable improved generalization performance. This property has made \dro increasingly important in statistical learning and optimization tasks for addressing distribution shifts, noisy data, and adversarial conditions \cite{distribution_shift_quinonero2022dataset,robut_learning_simulation_muratore2022robot,DRL_environment_generation_ren2022distributionally,noisy_label_li2019learning,noisy_data_veit2017learning}.

The performance of \dro algorithms significantly depends on the design of the ambiguity set, which must contain diverse distributional variations around the nominal distribution. A common approach is to model the ambiguity set using $\phi$-divergences, such as the Kullback--Leibler (KL) divergence~\cite{KL-DRO_hu2013kullback, DR_BO_divergence_husain2023distributionally, DRBO_kirschner2020distributionally, Conic_reformulation_KL_DRO_kocuk2020conic, DRQL_liu2022distributionally}. 
Although such $\phi$-divergence-based DRO formulations can sometimes yield closed-form solutions~\cite{KL-DRO_hu2013kullback,DR_BO_divergence_husain2023distributionally}, they require any distribution $P$ in the ambiguity set to be \textit{absolutely continuous} with respect to the nominal distribution $P_0$ (denoted $P \ll P_0$), meaning that for any measurable set $\mathcal{A}$, if $P_0(\mathcal{A}) = 0$, then $P(\mathcal{A}) = 0$. This implicit constraint limits robustness of DRO in scenarios with support shifts \cite{DRO_kernel_methods_staib2019distributionally,DRRL_data_collection_lu2024distributionally,DRO_wasserstein_gao2023distributionally}. 

While the ambiguity set defined by Wasserstein distance allows for support shifts, solving Wasserstein-DRO over the infinite probability space is challenging and computationally inefficient. Some approaches~\cite{DRO_mohajerin2018data,DRL_chen2020distributionally,Wasserstein_DRO_gao2023distributionally} reformulate Wasserstein-DRO as a finite-dimensional optimization problem based on the convex assumptions which typically do not hold in deep learning. Other methods approximate Wasserstein-DRO via adversarial optimization~\cite{staib2017distributionally,Wasserstein_DRO_gao2023distributionally}; however, these relaxations are often overly conservative, which restricts their ability to exploit probabilistic modeling for effectively balancing average-case performance with worst-case robustness.

Recent advances have sought to address the challenges of ambiguity modeling in \dro. Some works \cite{Sinkhorn_DRO_wang2025sinkhorn,sinkhorn_DRO_yang2025nested,sinkhorn_DRO_blanchet2023unifying} study DRO using ambiguity sets based on the Sinkhorn distance, which preserve computational tractability and enhance the expressiveness of adversarial distributions. 
Generative models have also been utilized in the traditional ambiguity modeling frameworks. Among them, \cite{DRL_environment_generation_ren2022distributionally} construct a Wasserstein-based ambiguity set in the latent space of generative models. \cite{michelmodeling,micheldistributionally} exploit parameterized generative models to represent the adversarial distributions or the likelihood
ratios within the KL-divergence-based ambiguity modeling framework and derive tractable solutions with additional approximations.  Nevertheless, the traditional ambiguity modeling frameworks still introduce tractability issues or restricts the support space of adversarial distributions, limiting the utilization of the distributional representation capacity of generative models.

To the best of our knowledge, this is the first work to model the ambiguity sets in the parameterized space of generative models \footnote{This claim means designing new ambiguity sets that directly constrain the parameters of generative models without replying on traditional distributional discrepancy measures.} \footnote{We design concrete algorithms with likelihood-based generative models that satisfy the constraints on inclusive KL divergence in Lemma~\ref{KL_bound_diffusion}, but the proposed framework can employ other generative models which potentially provide other guarantees on consistency with the nominal distribution.
}, such as diffusion models and Variational Autoencoders (VAEs), which offers several advantages:  
(1) Generative models have a strong capability to represent the underlying data distribution, ensuring that the distributions in the ambiguity set remain consistent with the nominal distribution.  
(2) Generative models are capable of producing diverse samples beyond the nominal support space, thereby enabling the discovery of distributions with worst-case optimization performance.
(3) Generative models provide a finite, parameterized optimization space, avoiding the need to solve problems over an infinite probability space.

Our main contributions are summarized below:  
\textbf{(1)} 
Based on the probability modeling of generative models, we introduce a novel \emph{Generative Ambiguity Set} (GAS) for \dro which involves diverse distributions while preserving consistency with the nominal distribution. While GAS is introduced based on the formulations of diffusion models and VAEs, it can be flexibly adapted to other likelihood-based generative models.
\textbf{(2)} We design \ouralg (Algorithm~\ref{DAG-DRL}), which gives a direct solution to DRO with the proposed Generative Ambiguity Sets.  We solve the inner maximization in \ouralg based on dual learning and policy optimization techniques, enabling tractable iterative optimization within a finite, parameterized space.  
\textbf{(3)} 
We prove in Theorem~\ref{thm:analysis} that the inner maximizer of \ouralg converges to the optimal maximization oracle and, asymptotically, generates distributions consistent with the nominal distribution. Building on this inner-maximization error bound, Theorem~\ref{thm:convergence_minmax} further establishes the stationary convergence of \ouralg.
\textbf{(4)} We implement \ouralg on series prediction and image classification tasks and demonstrate its  superiority by comparing it against state-of-the-art DRO baselines based on generative models.

\section{Related Work}

\textbf{Distributionally Robust Optimization}.
The performance of \dro relies largely on the design of ambiguity sets.  
Tractable DRO solutions can be obtained based on the ambiguity sets defined by $\phi$-divergence \cite{DR_BO_divergence_husain2023distributionally,DRBO_kirschner2020distributionally,KL-DRO_hu2013kullback,Conic_reformulation_KL_DRO_kocuk2020conic}.  However, the definition of $\phi$-divergence requires any distribution $P$ in the ambiguity set to be absolutely continuous with respect to the nominal distribution, which can cause support mismatch problems \cite{DRO_kernel_methods_staib2019distributionally}.  
By contrast, Wasserstein-distance-based ambiguity set \cite{WDRL_kuhn2019wasserstein,DRO_mohajerin2018data,Wasserstein_DRO_gao2023distributionally} has no restrictions on the support of the distribution, but it introduces much computational difficulty. Wasserstein-based DRO is solved by either conservative relaxations \cite{DRO_adversarial_training_sinha2017certifying,staib2017distributionally} or reformulations based on some assumptions on the objective \cite{WDRL_kuhn2019wasserstein,DRO_mohajerin2018data,DRL_chen2020distributionally,Wasserstein_DRO_gao2023distributionally}.
 
 A line of recent studies focuses on addressing the challenges in ambiguity modeling for DRO~\cite{flow_based_DRO_xu2024flow,DRO_iterative_zhu2024distributionally,DRL_environment_generation_ren2022distributionally,NEURIPS2022_da535999,sinkhorn_DRO_yang2025nested}. 
Among them,
\cite{NEURIPS2022_da535999,knowledge_WDRO_wang2025knowledge} inject prior knowledge into the design of ambiguity sets to mitigate excessive conservativeness. \cite{ma2024differentiable} develop an end-to-end framework in which an ML model is trained to predict the ambiguity set for downstream \dro tasks.  Moreover, recent studies \cite{Sinkhorn_DRO_wang2025sinkhorn,sinkhorn_DRO_yang2025nested,sinkhorn_DRO_blanchet2023unifying} investigate DRO with Sinkhorn-distance-based ambiguity set which includes continuous adversarial distributions and preserves tractability.
 Different from these methods, we utilize the strong distribution learning capability of generative models to build the ambiguity set, enabling the discovery of worst-case and realistic distributions. Meanwhile, the proposed algorithm \ouralg converts \dro into a finite tractable problem in the parameterized space of generative models. 
 
\textbf{Generative Models for Robust Learning}.
While generative models are typically used for generative tasks \cite{data_synthesis_diffusion_zhangmixed,generative_model_data_synthetic_li2025generative,diffusion_multi-modal_chen2024diffusion}, they have also been used for (robust) discriminative or classification tasks \cite{diffusion_classifier_li2023your,robust_classification_diffusion_chen2023robust,score-based-model_robust_classification_tonglatent}.
A line of works use generative models to synthesize adversarial samples for robust training
\cite{diffusion_adversarial_example_generation_dai2024advdiff,diffusion_adversarial_dai2024diffusion,generative_attack_feature_learning_xie2024generative,ADL_allocation_du2022adversarial}. The target of these works is to generate adversarial attacking examples which is fundamentally different from the worst-case distribution generation which is studied in this paper and aims to improve \ood generalization. 
Recently, \cite{Gen-DFL_wang2025gen} introduced a Generate-then-Optimize framework, where a diffusion model is trained to generate data for downstream statistical optimization with a focus on the conditional value-at-risk (CVaR) objective. 

Prior work has utilized generative models to represent adversarial distributions within traditional ambiguity modeling frameworks. For example, DRAGEN~\cite{DRL_environment_generation_ren2022distributionally} models adversarial distributions by a generative model. However, this approach remains within the Wasserstein-based DRO framework and continues to suffer from intractability and relaxation-related issues.
In addition, \cite{michelmodeling} represent adversarial distributions using generative models within a traditional KL-divergence-based ambiguity set and derive an approximated tractable objective. Similarly, \cite{micheldistributionally} employ generative models to parameterize the likelihood ratio between the adversarial and nominal probability densities. However, both~\cite{michelmodeling} and~\cite{micheldistributionally} remain within the KL-divergence-based DRO framework, which requires the adversarial distributions to share the same support as the nominal distribution. This constraint restricts the expressive power of generative models in constructing diverse adversarial distributions and ultimately limits the robustness performance. Different from these methods, our generative ambiguity modeling overcomes the limitations of traditional ambiguity sets: It provably constrains the reverse KL divergence that does not restrict the support space of adversarial distributions and enables flexibly tractable solutions through policy-optimization techniques. 

Our work is also related to studies on generative models for worst-case environment generation. In this line of research, \cite{adversarial_world_model_berdica2024robust} select adversarial world models from a set of world models trained on a single dataset. In addition, \cite{Adversarial_world_model_MARL_hill2025learning} conduct an experimental study on learning adversarial world models aimed at defeating defender agents in multi-agent reinforcement learning. Moreover, \cite{worst_case_generation_Wasserstein_cheng2025worst} learn a transport map for out-of-sample generation within the Wasserstein space and establish convergence guarantees.

\section{Ambiguity Modeling in DRO}\label{sec:DRO_preliminaries}
\textbf{Distributionally Robust Optimization.}
 \dro optimizes for the worst-case performance given an ambiguity set constructed on a nominal distribution $P_0$. An empirical dataset $S_0$ can be drawn from the nominal distribution.   Consider an objective function $f(w,x)$ with the decision variable $w\in\mathcal{W}$ and the random parameter $x\in \mathcal{X}\in\mathcal{R}^d$. Given the nominal distribution $P_0(x)$ of the random parameter $x$, DRO solves the following minimax optimization problem.
\begin{equation}\label{eqn:DRO}
w = \min_{w\in \mathcal{W}}\max_{P\in \mathcal{B}(P_0,\epsilon)} \mathbb{E}_{x\sim P}[f(w,x)]
\end{equation}
where $\mathcal{B}(P_0,\epsilon)$ is the ambiguity set containing possible testing distributions and is typically modeled as a distribution ball $\mathcal{B} (P_0,\epsilon)= \left\{ P \mid D(P,P_0) < \epsilon \right\}$ given a distribution discrepancy measure $D$ and an adversary budget $\epsilon$. 

\textbf{Challenges of Ambiguity Modeling in \dro.}  
The choice of the ambiguity set in \dro has a significant impact on both generalization performance and solution tractability. An effective ambiguity set should satisfy three key properties.  
First, it should capture a broad range of distributions with different support spaces, enabling the identification of worst-case scenarios.  
Second, the distributions within the ambiguity set should remain consistent with the nominal distribution since inconsistent distributions can result in overly-conservative \dro solutions with poor average-case performance.  
Finally, the ambiguity set should enable a tractable \dro solution despite the complexity of the infinite probability space in \dro.

The ambiguity sets based on classic distribution discrepancy measures do not satisfy one or more of the above properties. 
Ambiguity sets based on $\phi$-divergences~\cite{KL-DRO_hu2013kullback, DR_BO_divergence_husain2023distributionally, Conic_reformulation_KL_DRO_kocuk2020conic, DRQL_liu2022distributionally} requires any distribution $P$ in the ambiguity set to share the same support $\mathcal{X}$ as the nominal distribution $P_0$, i.e. $P$ is absolutely continuous with $P_0$ ($P\ll P_0$). This implicit constraint will restrict the search of worst-case distributions and limit the robustness under testing cases with support shift.  In addition, Wasserstein-based ambiguity set ~\cite{DRO_mohajerin2018data,DRL_chen2020distributionally,Wasserstein_DRO_gao2023distributionally,staib2017distributionally,Wasserstein_DRO_gao2023distributionally} introduces too much complexity in solving the inner maximization over an infinite probability space. 
The relaxation of the Wasserstein-based ambiguity set can include a lot of invalid distributions that are not consistent with the nominal distribution $P_0$, leading to overly-conservative \dro solutions.

We observe that the challenge of ambiguity modeling stems from a fundamental tension between expressiveness and tractability. When probability distributions are restricted to the same support as the nominal distribution, as in $\phi$-divergence-based ambiguity sets, robustness performance is often degraded due to limited expressiveness. In contrast, when no restrictions are imposed on the support space, as in Wasserstein-based ambiguity sets, the resulting DRO is not easily tractable because of the infinite-dimensional probability space. This observation motivates us to leverage the strong distribution modeling capabilities of generative models to construct ambiguity sets over a finite, parameterized distribution space, with the goal of achieving both expressiveness and tractability.

\section{Ambiguity Modeling based on Generative Models}

\subsection{Probability Modeling via Generative Models}\label{sec:variational_models}
The foundation of the generative ambiguity sets is the probability modeling based on generative models. Here, we representatively introduce the probability models of diffusion models and VAEs since they are both widely-used generative models.

\textbf{Diffusion Models}.
The diffusion models learn data distributions through a forward process and a reverse denoising process which are introduced based on the formulations of variational diffusion models \cite{NEURIPS2020_4c5bcfec,diffusion_models_luo2022understanding}.

\textit{Forward Process.} 
The forward process in a diffusion model begins with an initial sample $x_0 \in \mathcal{R}^d$ drawn from the nominal distribution $P_0$, and evolves according to a Markov chain that gradually adds Gaussian noise to the data:
\begin{equation}\label{eqn:forward_vdm}
q(x_{1:T}\mid x_0):= \prod_{t=1}^T q(x_t \mid x_{t-1}), \; q(x_t \mid x_{t-1}) = \mathcal{N}\!\left(x_t; \sqrt{\alpha_t}\,x_{t-1}, (1-\alpha_t)\mathbf{I} \right)
\end{equation}
where $\alpha_t \in (0,1)$ is variance schedule at step $t$. With a slight abuse of notation, we also write the nominal distribution $P_0$ as $q(x_0)$. 
A sample of the forward process at step $t$ can be written as $x_t=\sqrt{\bar{\alpha}_t}\,x_0+(1-\bar{\alpha}_t) \nu$, where $\nu$ is a standard Gaussian noise and $\bar{\alpha}_t = \prod_{\tau=1}^t \alpha_{\tau}$.

\textit{Reverse Process.} 
By reversing the forward process, a diffusion model learns to recover the original data distribution starting from $x_T \sim p'$ where $p'$ is a prior distribution usually selected as a standard Gaussian distribution. The reverse process can be represented by transitions parameterized by $\theta$:
\begin{equation}\label{eqn:backward_vdm}
    P_{\theta}(x_{0:T})=p'(x_T)\prod_{t=1}^T p_{\theta}(x_{t-1}\mid x_t),\; p_\theta(x_{t-1} \mid x_t) = \mathcal{N}\!\left(x_{t-1}; \mu_\theta(x_t, t), \Sigma_\theta(x_t, t)  \right)
\end{equation}
where $\mu_\theta(x_t, t)$ and $\Sigma_\theta(x_t, t)$ are parameterized mean and variance.   

The loss function to train the diffusion model is derived to guide the reverse process to approximate the forward process. It holds by Bayesian's rule that the denoising transition corresponding to the forward process is $q(x_{t-1}\mid x_t, x_0)=\mathcal{N}(x_{t-1}; \mu_q(x_t,t),\sigma_t \mathbf{I})$ where $\mu_q(x_t,t)=\frac{1}{\sqrt{\alpha_t}}x_t-\frac{1-\alpha_t}{\sqrt{1-\bar{\alpha}_t}\sqrt{\alpha_t}}\nu$ with $\nu$ being the standard Gaussian noise to sample $x_t$ and $\sigma_t=\frac{(1-\alpha_t)(1-\bar{\alpha}_{t-1})}{1-\bar{\alpha}_t}$. Thus, to match the denoising mean and variance, we express the parameterized denoising mean as $\mu_{\theta}(x_t,t)=\frac{1}{\sqrt{\alpha_t}}x_t-\frac{1-\alpha_t}{\sqrt{1-\bar{\alpha}_t}\sqrt{\alpha_t}}s_{\theta}(x_t,t)$, \note{where $s_{\theta}(x_t,t)$ is a ML model parameterized by $\theta$ to predict the noise that determines the mean of $x_{t-1}$ from $x_t$}, and $\Sigma_{\theta}(x_t,t) = \sigma_t^2\,\mathbf{I}$. It can be proved that maximizing the log probability $\log P_{\theta}(x_0)$ averaged by $P_0$ is equivalent to minimizing the loss below
\begin{equation}\label{eqn:diffusion_loss}
J_{\mathrm{DM}}(\theta,P_0) 
=  \mathbb{E}_{x_0,\nu_{1:t}}\Bigg[\sum_{t=1}^T \iota_t\left[\|\nu_{t} - s_\theta(x_t,t)\|_2^2\right]\Bigg]
\end{equation}
where $x_t=\sqrt{\bar{\alpha}_t}\,x_0+(1-\bar{\alpha}_t) \nu_t$ with $\nu_t$ being a standard Gaussian noise, and  $\iota_t=\frac{1}{2\sigma_t^2}\frac{(1-\alpha_t)^2}{(1-\bar{\alpha}_t)\alpha_t}$. Note that although the forward and reverse processes are formulated based on variational diffusion models \cite{NEURIPS2020_4c5bcfec}, the probability modeling $P_{\theta}$ and similar loss can also be derived by score-based generative models 
\cite{score_matching_song2019generative,score-based_diffusion_SDE_song2021maximum} as detailed in Appendix \ref{sec:score_matching}.

\textbf{Variational Autoencoders (VAEs)}. 
VAE \cite{VAE_kingma2019introduction} encodes an input $x$ into a latent variable $z$ and reconstruct the input $x$ from the latent variable. The encoder of VAE is a $\varphi-$ parameterized distribution $q_\varphi(z|x)$ conditioned on the input $x$. The decoder
$p_\theta(x|z)$ parameterized by $\theta$ is used to reconstruct $x$ from the latent variable $z$. The probability model of VAE is  
\begin{equation}
    p_\theta(x) = \int p_\theta(x, z) \, dz 
    = \mathbb{E}_{z\sim p'}[p_\theta(x|z)]
\end{equation}
where $p'(z)$ is the prior distribution.

VAE is trained to maximize the log-likelihood $\log p_\theta(x)$. Since directly maximizing $\log p_\theta(x)$ is intractable, we usually minimize the average Evidence Lower Bound (ELBO) loss:
\begin{align}\label{eqm:elbo}
    \mathbb{E}_{x\sim P_0}[\mathcal{L}_{\mathrm{ELBO}}(\theta,\varphi,x)]
    = \mathbb{E}_{x\sim P_0}\left[\mathbb{E}_{q_\varphi(z|x)}\!
       \left[-\ln p_\theta(x|z)\right]\right]
       + \mathbb{E}_{x\sim P_0}\left[D_{\mathrm{KL}}\!\left(q_\varphi(z|x) \,\|\, p'(z)\right)\right]
\end{align}
The first term, \note{$J_{\mathrm{VAE}}(\theta,P_0)=\mathbb{E}_{x\sim P_0}\left[\mathbb{E}_{q_\varphi(z|x)}\!
       \left[-\ln p_\theta(x|z)\right]\right]$, is the reconstruction loss that encourages the learned model to regenerate data that is consistent with the original data distribution.  The 
second term , $R_{\mathrm{VAE}}(p',\varphi, P_0)=\mathbb{E}_{x\sim P_0}\left[D_{\mathrm{KL}}\!\left(q_\varphi(z|x) \,\|\, p'(z)\right)\right]$, is the prior matching loss which measures how similar the learned latent distribution is to a prior belief held over latent variables.}
In practice, the encoder is usually parameterized as a Gaussian distribution with mean and variance given by neural 
networks
\(
    q_\varphi(z|x) = 
    \mathcal{N}\!\big(z; \mu_\varphi(x), 
    \mathrm{diag}(\sigma_\varphi^2(x))\big)
\) such that the latent variable can be sampled as  $z = \mu_\varphi(x) + \sigma_\varphi(x) \odot \boldsymbol{\nu}, \boldsymbol{\nu} \sim \mathcal{N}(\mathbf{0}, \mathbf{I})$ where $\odot$ is element wise product. 

\subsection{Generative Ambiguity Set}
From the probabilistic modeling perspective of generative models, a well-trained generative model with sufficient expressive capacity can closely approximate the nominal distribution, while perturbations of such a generative model can induce distributions that deviate from the nominal distribution. To build the generative ambiguity set, we need to establish the consistency of the perturbed generative models with respect to the nominal distribution. Thus, we show in Lemma~\ref{KL_bound_diffusion} that for likelihood-based generative models, the discrepancy between the nominal distribution $P_0$ and the generative distribution $P_{\theta}$ can be formally bounded through the \emph{inclusive} KL-divergence.
\begin{lemma}\label{KL_bound_diffusion}

The inclusive KL-divergence between the nominal distribution $P_0$ and the sampling distribution $P_{\theta}$ of a likelihood-based generative model can be bounded as
\[
D_{\mathrm{KL}}(P_0|| P_{\theta})\leq J(\theta,P_0) + R(p', P_0)+C_1
\]
where $J(\theta,P_0)$ is the reconstruction loss which can be $J_{\mathrm{DM}}(\theta,P_0) $ in \eqref{eqn:diffusion_loss} or $J_{\mathrm{VAE}}(\theta,P_0) $ in the first term of \eqref{eqm:elbo}, $R(p', P_0)$ is a prior matching loss relying on a prior distribution $p'$ but not relying on $\theta$, and $C_1$ is a constant that does not rely on $p'$ or $\theta$.
\end{lemma}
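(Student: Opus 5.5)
The plan is to write the inclusive KL divergence as a negative entropy minus an expected log-likelihood,
\[
D_{\mathrm{KL}}(P_0\|P_\theta)=\mathbb{E}_{x_0\sim P_0}[\log q(x_0)]-\mathbb{E}_{x_0\sim P_0}[\log P_\theta(x_0)].
\]
The first term is $-H(P_0)$, which does not depend on $\theta$ or $p'$, so it goes into $C_1$. It remains to upper bound $-\mathbb{E}_{P_0}[\log P_\theta(x_0)]$ by the negative ELBO. Averaged over $P_0$, the negative ELBO then splits into a $\theta$-dependent reconstruction part, a prior-matching part that depends only on $p'$ and $P_0$, and constants.

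\textbf{VAE case.} Jensen's inequality with the encoder $q_\varphi(z|x)$ gives $-\log p_\theta(x)\le \mathcal{L}_{\mathrm{ELBO}}(\theta,\varphi,x)$. Averaging over $P_0$ yields $J_{\mathrm{VAE}}(\theta,P_0)+R_{\mathrm{VAE}}(p',\varphi,P_0)$. Since $R_{\mathrm{VAE}}$ involves only $p'$, the encoder $\varphi$, and $P_0$, it has no $\theta$-dependence. One could also take the infimum over $\varphi$ if a $\varphi$-free statement is wanted. So the bound holds with $R=R_{\mathrm{VAE}}$ and $C_1=-H(P_0)$.

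\textbf{Diffusion case.} I would use the standard variational diffusion decomposition, with Jensen applied to the forward kernel $q(x_{1:T}|x_0)$:
\[
-\log P_\theta(x_0)\le \mathbb{E}_{q}\Big[-\log p_\theta(x_0|x_1)+D_{\mathrm{KL}}(q(x_T|x_0)\|p'(x_T))+\sum_{t=2}^{T}D_{\mathrm{KL}}\big(q(x_{t-1}|x_t,x_0)\|p_\theta(x_{t-1}|x_t)\big)\Big].
\]
The prior term $\mathbb{E}_{P_0}D_{\mathrm{KL}}(q(x_T|x_0)\|p')$ is defined as $R(p',P_0)$; it does not involve $\theta$. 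Each denoising KL is a KL between Gaussians with the common covariance $\sigma_t^2\mathbf{I}$, since the paper chose $\Sigma_\theta=\sigma_t^2\mathbf{I}$ to match. Each such KL therefore reduces exactly to $\frac{1}{2\sigma_t^2}\|\mu_q-\mu_\theta\|^2$. Substituting the two noise parametrizations of the means turns this into $\iota_t\|\nu_t-s_\theta(x_t,t)\|^2$ with the stated weight $\iota_t$. The $t=1$ reconstruction term $-\log p_\theta(x_0|x_1)$ is a Gaussian log-density with fixed variance. It equals a quadratic in $x_0-\mu_\theta(x_1,1)$, which is the $t=1$ summand of $J_{\mathrm{DM}}$ up to weighting, plus the normalizing constant $\frac d2\log(2\pi\sigma_1^2)$. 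That constant is absorbed into $C_1$. Summing the terms gives $J_{\mathrm{DM}}(\theta,P_0)+R(p',P_0)+C_1$.

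\textbf{Main obstacle.} I expect the $t=1$ term and the exact weights to be the hardest part. The $t=1$ term's weight must match $\iota_1$, which depends on conventions for $\bar\alpha_0$ and $\sigma_1$. If it does not match exactly, I would bound it by a constant multiple and enlarge constants, or treat it via the convention $q(x_0|x_1,x_0)$ as a degenerate limit. I would also need to confirm that the means identity (the noise-prediction reparametrization) holds as stated in the paper, and that all leftover terms (entropy of $P_0$, Gaussian normalizers, log-variance terms) are independent of $\theta$ and $p'$. Integrability of $\log q(x_0)$ is assumed so that $C_1$ is finite.
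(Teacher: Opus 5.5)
Your proposal is correct and follows essentially the same route as the paper: write $D_{\mathrm{KL}}(P_0\|P_\theta)$ as cross-entropy minus entropy, and bound the cross-entropy by the negative ELBO. For VAEs this is the standard ELBO. For diffusion models it is the variational decomposition into the $t=1$ reconstruction term, the prior-matching term $R(p',P_0)=\mathbb{E}_{P_0}[D_{\mathrm{KL}}(q(x_T|x_0)\|p'(x_T))]$, and the Gaussian denoising KLs, with $-H(P_0)$ and the $t=1$ normalizer $d\log(\sqrt{2\pi}\sigma_1)$ collected into $C_1$. Your worry about the $t=1$ weight is unnecessary: since $\bar\alpha_1=\alpha_1$, one gets $\frac{1}{2\sigma_1^2}\|x_0-\mu_\theta(x_1,1)\|^2=\iota_1\|\nu_1-s_\theta(x_1,1)\|^2$ exactly for any fixed $\sigma_1>0$, so no fallback is needed. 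That is fortunate, because bounding that term by a constant multiple would only give a multiple of $J_{\mathrm{DM}}$, not the stated bound.
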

The proof of this lemma can be found in Appendix \ref{sec:prooflemma1}. 
Lemma~\ref{KL_bound_diffusion} shows that if the reconstruction loss of a diffusion model or VAE is bounded, i.e., $J(\theta,P_0) \le \epsilon$, then the KL divergence $D_{\mathrm{KL}}(P_0 \,\|\, P_\theta)$ is bounded by $\epsilon$ plus additional terms not relying on the parameter $\theta$.  The prior matching term $R(p', P_0)$ depends on the generative model designs. For example, it holds that $R(p', P_0)=\mathbb{E}_{P_0(x)}\bigg[D_{\mathrm{KL}}\bigg[ q(x_T|x_0)\|p'(x_T)\bigg]\bigg]$ for diffusion models, and a well-designed forward process can contribute to nearly zero prior matching error \cite{NEURIPS2020_4c5bcfec,DDIM_song2020denoising}. The constant $C_1$ relies on the negative entropy $-H(P_0)$ of the nominal distribution.  

Importantly, the \emph{inclusive} KL-divergence $D_{\mathrm{KL}}(P_0 \,\|\, P_\theta)$ in Lemma~\ref{KL_bound_diffusion} is \emph{not} the \emph{exclusive} KL-divergence $D_{\mathrm{KL}}(P_\theta \,\|\, P_0)$ in \kldro \cite{KL-DRO_hu2013kullback, Conic_reformulation_KL_DRO_kocuk2020conic, DRQL_liu2022distributionally,michelmodeling}. The \emph{inclusive} KL-divergence $D_{\mathrm{KL}}(P_0 \,\|\, P_\theta)$ in Lemma~\ref{KL_bound_diffusion}  allows the distribution $P_\theta$ of generative models to have a broader support space than $P_0$ (i.e., $P_0 \ll P_\theta$).

Given a likelihood-based generative model satisfying Lemma~\ref{KL_bound_diffusion}, the induced probability distribution $P_{\theta}$ maintains a constrained discrepancy from the nominal distribution $P_0$ whenever its reconstruction loss $J(\theta, P_0)$ remains bounded. At the same time, $P_{\theta}$ is capable of capturing a broader support space than $P_0$. This observation motivates us to define ambiguity sets in \dro using generative models, leading to the formulation of \dro with Generative Ambiguity Sets (GAS).
\begin{equation}\label{eqn:objective}
\min_{w \in \mathcal{W}} \max_{\theta \in \Theta} \mathbb{E}_{x \sim P_\theta}[f(w, x)] 
\quad \mathrm{s.t.} \quad J(\theta,P_0) \le \epsilon
\end{equation}
where $J(\theta,P_0)$ is the reconstruction loss of a likelihood-based generative model such as diffusion models or VAE, $P_\theta$ denotes the sampling distribution of the generative model, and $\epsilon$ is the adversarial budget.  

The generative ambiguity set has the following advantages.
Due to the distribution modeling capability of diffusion models, the constraint on the reconstruction loss $J(\theta, P_0)$ ensures that the generative distributions remain consistent with the nominal distribution $P_0$, mitigating over-conservativeness issues in \dro.  
In addition, it leverages generative models’ ability to generate diverse samples beyond the nominal support space, enabling the identification of distributions with the worst-case optimization performance. 
Furthermore, the inner maximization in \eqref{eqn:objective} operates in a finite, parameterized space, leading to tractable solutions.

\section{Algorithm for DRO with Generative Ambiguity Set} \label{sec:dro_generative_ambiguity_set}
Despite the advantages of generative ambiguity sets, it is challenging to solve DRO with generative ambiguity set in \eqref{eqn:objective} given the complexity of generative models.  In this section, we provide an algorithm to solve DRO with generative-ambiguity set in \ouralg (Algorithm \ref{DAG-DRL}). 
Illustrated in Figure~\ref{fig:illustration}, \ouralg includes an inner maximization ((\dagalg)) loop and an outer minimization loop which are introduced below.

\begin{wrapfigure}[13]{r}{0.5\textwidth}
    \centering
    \includegraphics[width=0.5\textwidth]{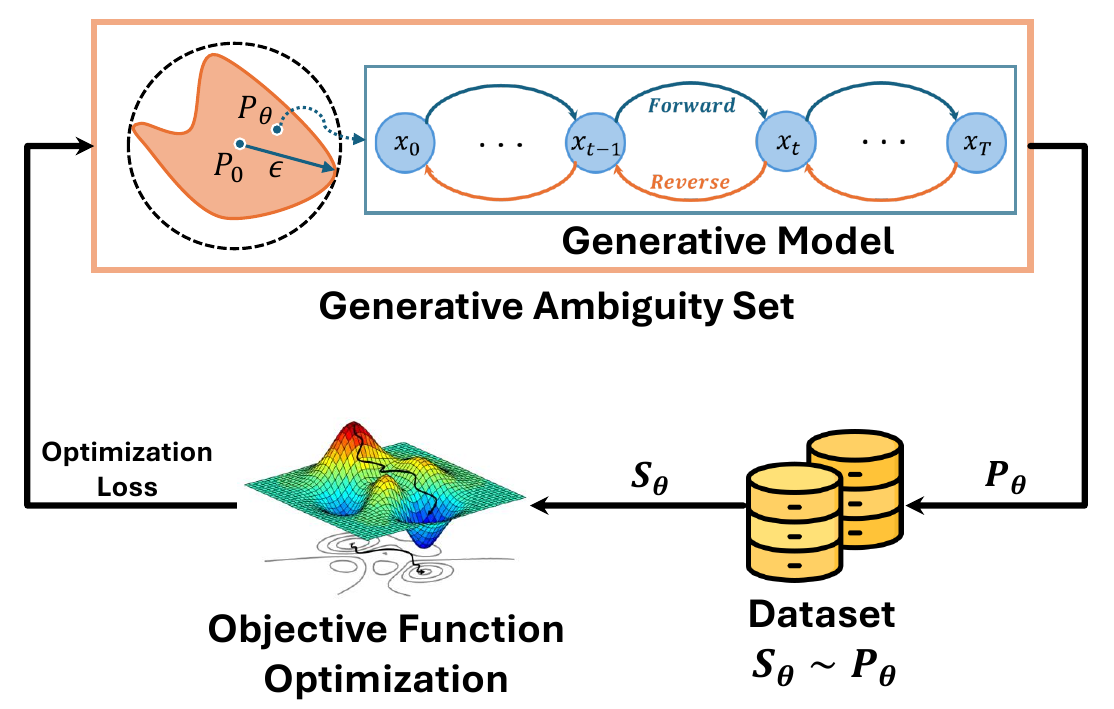}
    \caption{\dro with generative ambiguity set}
    \label{fig:illustration}
\end{wrapfigure}
\textbf{Inner Maximization (\dagalg)}.
The inner maximization of \eqref{eqn:objective} presents the following challenges. First, the inner maximization is a constrained optimization over the diffusion parameter space, which can not be directly solved by gradient-based methods. Second, the objective is an expectation over the distribution parameterized by the generative model, which cannot be directly differentiated. Therefore, we need to reformulate the objective into a tractable form.

To solve the constrained maximization problem, we adopt a dual learning approach: introducing a Lagrangian dual $\mu>0$ to reformulate it as the unconstrained problem:
\begin{equation}\label{eqn:lagrangian_objective}
\max_{\theta} \; \mathbb{E}_{x\sim P_{\theta}}[f(w,x)] - \mu J(\theta,P_0)
\end{equation}
and updating $\mu$ via dual gradient descent. As shown in Algorithm \ref{DAG-DRL}, the Lagrangian dual variable $\mu$ in \eqref{eqn:lagrangian_objective} is adaptively updated to balance the maximization of the expected objective with the enforcement of the constraint. Intuitively, Algorithm \ref{DAG-DRL} increases $\mu$ when $J(\theta, P_0)$ exceeds the budget $\epsilon$, thereby placing greater emphasis on reducing constraint violations, and decreases $\mu$ otherwise to prioritize objective maximization.

Next, we apply the policy optimization methods to transform the objective in \eqref{eqn:lagrangian_objective} into a tractable form. Here, we give the transformed objectives based on the diffusion model formulations while deferring the objective reformulations based on VAE formulations to Appendix \ref{sec:VAE-DRO}. 

Based on the policy gradient trick \cite{policy_gradient_sutton1999policy}, we can transform the objective \eqref{eqn:lagrangian_objective} into the following tractable form as detailed in Appendix \ref{sec:objective_derivation},
\begin{equation}\label{eqn:vpg}
  \max_{\theta} \hat{\mathbb{E}}_{ P_{\theta}(x_{0:T})}[
    \ln P_{\theta}(x_{0:T}) \cdot f(w,x_0)]-\mu\cdot J_{\mathrm{DM}}(\theta,P_0)
\end{equation}
where $\hat{\mathbb{E}}_{ P_{\theta}(x_{0:T})}$ is the empirical mean based on the $T-$step samples for the backward process of the diffusion model $P_{\theta}$, and $\ln P_{\theta}(x_{0:T})=-\sum_{t=1}^T[x_{t-1}-\mu_{\theta}(x_t,t)]^2+C_2$ where $C_2$ is a constant. 

Proximal Policy Optimization (PPO) \cite{PPO_schulman2017proximal} is believed to have more stable performance which transforms the objective  \eqref{eqn:lagrangian_objective} into a differentiable form. \begin{equation}\label{eqn:ppo}
  \max_{\theta} \hat{\mathbb{E}}_{P_{\theta_{\mathrm{old}}}(x_{0:T})}[
   \min(r_{\theta}(x_{0:T})f(w,x_0)), \mathrm{clip}(r_{\theta}(x_{0:T}),1-\kappa,1+\kappa)\cdot f(w,x_0))]-\mu\cdot J_{\mathrm{DM}}(\theta, P_0)
\end{equation}
where  $P_{\theta_{\mathrm{old}}}$ is a reference diffusion model which can be selected as a pre-trained diffusion model on the nominal distribution $P_0$,  the probability ratio is $r_{\theta}(x_{0:T})=\frac{P_{\theta }(x_{0:T})}{P_{\theta_{\mathrm{old}}}(x_{0:T})}=\exp\{-\sum_{t=1}^T(\frac{\|x_{t-1}-\mu_{\theta}(x_t,t)\|^2}{2\sigma_{t}^2}-\frac{\|x_{t-1}-\mu_{\theta_{\mathrm{old}}}(x_t,t)\|^2}{2\sigma_{t}^2})\}$, and $\kappa\in(0,1)$ is the clipping parameter to avoid overly-large policy updates. To reduce the training complexity, instead of optimizing all the $T$ steps, we can only optimize the last $T'$ steps of the backward process by choosing $r_{\theta}(x_{0:T'})=\exp\{-\sum_{t=1}^{T'}(\frac{\|x_{t-1}-\mu_{\theta}(x_t,t)\|^2}{2\sigma_{t}^2}-\frac{\|x_{t-1}-\mu_{\theta_{\mathrm{old}}}(x_t,t)\|^2}{2\sigma_{t}^2})\}$ and only keep the loss terms of corresponding steps in $J(\theta, P_0)$. The performance of this trick to reduce computation complexity is verified by our experiments in Section \ref{sec:experiments}. More derivation details for the diffusion-based objective can be found in Appendix~\ref{sec:ppo}.

\begin{algorithm}[t!]
    \caption{\dro with Generative Ambiguity Set (\ouralg)}
    \label{DAG-DRL}
    \KwIn{Nominal distribution $P_0$; Adversary budget $\epsilon>0$; Step size $\eta>0$, $\lambda>0$.}
    
    \textbf{Init:} Initialize decision variable $w$, sampling model parameter $\theta$ and Lagrangian weight $\mu>0$\;
    
    \For{$j=1,2,\cdots,I$}{
        \tcp{\scriptsize Inner maximization (\dagalg) to update the generative model} 
        \For{$k=1,2,\cdots,K$}{
            Update generative model parameter $\theta_k$ by solving \eqref{eqn:lagrangian_objective} given $\mu$\;
            Update Lagrangian parameter: $\mu \leftarrow \max\{0,\mu + \eta \big(J(\theta_k,P_0)-\epsilon\big)\}$\;
        }
        
        \tcp{\scriptsize Outer minimization to update decision variable}
        Generate dataset $S_{j}$ with generative model $P_{\theta^{(j)}}$, 
        $\;\theta^{(j)} \sim \mathrm{Unif}\{\theta_1,\dots,\theta_K\}$\;
        Update decision variable $w$: $w_j=w_{j-1}-\lambda\cdot \nabla_w \mathbb{E}_{x\in S_{j}}[f(w_{j-1},x)]$\;
    }    
    \Return$w$ uniformly selected from $[w_1,\cdots,w_I]$\;
\end{algorithm}

\textbf{Min-Max Solution}.
With the inner maximization solution, we integrate the design methodology of gradient descent with max-oracle \cite{nonconvex_nonconcave_mini_max_jin2020local} into the min-max solution in Algorithm \ref{DAG-DRL}. Specifically, in each iteration, we first run \dagalg to search for the worst-case generative model $P_{\theta^{(j)}}$ that maximizes the expected loss given the optimization variable $w$. Next,  in order to provide diverse samples for the statistical optimization based on the worst-case probability model, we generate an adversarial dataset $S_{j}$ based on the diffusion model $P_{\theta^{(j)}}$ and use it to update $w$. The convergence of Algorithm~\ref{DAG-DRL} in proved in Theorem \ref{thm:analysis} and Theorem \ref{thm:convergence_minmax}.

\section{Analysis}
We provide the convergence analysis which holds for Algorithm \ref{DAG-DRL} in this section. We first prove the convergence of the inner maximization, followed by the overall convergence of the DRO with generative ambiguity sets.
    
    \begin{theorem}[Convergence of Inner Maximization]\label{thm:analysis}
       Let $\theta^*$ be the optimal diffusion parameter that solves the inner maximization \eqref{eqn:objective} given a variable $w$. 
       If the reconstruction loss is bounded as $J(\theta)\leq \bar{J}$ and the step size is chosen as $\eta\sim\mathcal{O}(\frac{1}{\sqrt{K}})$, the inner maximization error holds that    \begin{equation}\label{eqn:adversarial_stren}
         \Delta':= \mathbb{E}_{x\sim P_{\theta^*}}[f(w,x)]-\mathbb{E}_k \mathbb{E}_{x\sim P_{\theta_k}}[f(w,x)]\leq \frac{1}{\sqrt{K}} \max\{\epsilon, \bar{J}\}\|\mu^{(1)}\|
       \end{equation}
       where the outer expectation is taken over the randomness of output selection.
       In addition, the inclusive KL-divergence with respect to the nominal distribution is bounded as
\begin{equation}\label{eqn:consistency}
     \mathbb{E}_k[D_{\mathrm{KL}}(P_0||P_{\theta_k})]\leq \epsilon + \frac{\max\{\epsilon, \bar{J}\}|\mu_C-\mu^{(1)}|}{\sqrt{K} (\mu_C-\mu^*)}+ R(\pi,P_0)+C_1
 \end{equation}
    where $\mu_C>\mu^*$ with $\mu^*$ being the optimal dual variable for the constrained optimization in \eqref{eqn:objective}, $R(\pi,P_0)$ is the prior matching loss with $\pi$ being the prior distribution, and $C_1$ is a constant. 
    \end{theorem}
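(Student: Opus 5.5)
The plan is to treat the inner loop of Algorithm~\ref{DAG-DRL} as an online primal--dual (Lagrangian) scheme. At each iteration $k$, $\theta_k$ is an exact best response to the current multiplier $\mu^{(k)}$, and $\mu$ is updated by projected online gradient descent on the dual function. Write $F(\theta)=\mathbb{E}_{x\sim P_\theta}[f(w,x)]$ and $g(\theta)=J(\theta,P_0)-\epsilon$. Take the Lagrangian $L(\theta,\mu)=F(\theta)-\mu g(\theta)$, which differs from \eqref{eqn:lagrangian_objective} only by the constant $\mu\epsilon$, so it has the same maximizer. Since $0\le J\le\bar J$, the dual subgradients satisfy $|g(\theta_k)|\le\max\{\epsilon,\bar J\}=:G$; this is the only regularity I intend to use.

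For the primal gap, use best-response optimality: $L(\theta_k,\mu^{(k)})\ge L(\theta^*,\mu^{(k)})$. Because $\theta^*$ is feasible ($g(\theta^*)\le 0$) and $\mu^{(k)}\ge0$, this gives $F(\theta_k)-\mu^{(k)}g(\theta_k)\ge F(\theta^*)$, i.e.
\[
F(\theta^*)-F(\theta_k)\le -\mu^{(k)}g(\theta_k).
\]
Next, apply the standard projected online gradient descent inequality to the linear losses $\mu\mapsto -\mu g(\theta_k)$ with comparator $\mu=0$. From $\mu^{(k+1)}=\max\{0,\mu^{(k)}+\eta g(\theta_k)\}$,
\[
(\mu^{(k+1)})^2\le(\mu^{(k)})^2+2\eta\mu^{(k)}g(\theta_k)+\eta^2G^2 ,
\]
and summing over $k$ gives
\[
\sum_k -\mu^{(k)}g(\theta_k)\le \frac{(\mu^{(1)})^2}{2\eta}+\frac{\eta K G^2}{2}.
\]
Averaging over $k$ (the uniform output selection) and choosing $\eta=\mu^{(1)}/(G\sqrt K)$, i.e. $\eta\sim\mathcal O(1/\sqrt K)$, yields $\Delta'\le G\mu^{(1)}/\sqrt K$, which is \eqref{eqn:adversarial_stren}.

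For \eqref{eqn:consistency}, the goal is to bound the average violation $\frac1K\sum_k g(\theta_k)$. Running the same OGD inequality with a comparator $\mu_C>\mu^*$ instead of $0$ gives
\[
\sum_k(\mu_C-\mu^{(k)})g(\theta_k)\le \frac{(\mu_C-\mu^{(1)})^2}{2\eta}+\frac{\eta KG^2}{2}.
\]
Strong duality (assumed for the parameterized problem, with optimal dual $\mu^*$) gives $F(\theta_k)-\mu^*g(\theta_k)\le F(\theta^*)$. Combining this with the best-response inequality from the previous step gives $\sum_k(\mu^*-\mu^{(k)})g(\theta_k)\ge 0$. Subtracting the two bounds,
\[
(\mu_C-\mu^*)\sum_k g(\theta_k)\le \frac{(\mu_C-\mu^{(1)})^2}{2\eta}+\frac{\eta KG^2}{2}.
\]
Dividing by $K(\mu_C-\mu^*)$ and taking $\eta\propto|\mu_C-\mu^{(1)}|/(G\sqrt K)$ gives
\[
\mathbb{E}_k J(\theta_k,P_0)\le\epsilon+\frac{G|\mu_C-\mu^{(1)}|}{\sqrt K(\mu_C-\mu^*)}.
\]
Lemma~\ref{KL_bound_diffusion}, applied to each $\theta_k$ and averaged, then gives \eqref{eqn:consistency}, with $R(\pi,P_0)$ and $C_1$ independent of $\theta$.

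The main obstacle is justifying the duality step. The problem in $\theta$ is nonconvex, so $\mu^*$ with a zero duality gap must either be assumed or replaced by a perturbation argument on the convex hull of the mixture distributions. Averaging over $k$ is exactly a mixture, and $F$ is linear in the distribution, so the mixture view is the version I would try first. A second, minor issue is that the single step size must serve both bounds: I would state the result with $\eta=c/\sqrt K$ and absorb the constants, which the $\mathcal O(\cdot)$ in the statement permits. I also assume each inner update solves \eqref{eqn:lagrangian_objective} exactly.
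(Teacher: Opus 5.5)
The first bound follows the paper's argument: best response plus feasibility of $\theta^*$ (i.e.\ weak duality), then the projected dual-descent regret bound with comparator $\mu=0$. Your regret inequality with $\eta/2$ is slightly sharper than Lemma~\ref{lma:dual_convergence}, which is why your constant comes out exactly as stated.

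The second bound has the same structure as the paper (regret with comparator $\mu_C>\mu^*$, then Lemma~\ref{KL_bound_diffusion}). The gap is the step you flag. Strong duality $F(\theta^*)=\min_{\mu\ge 0}Q(\mu)$ is not available: the problem in $\theta$ is nonconvex and the theorem assumes no zero duality gap. As written, your derivation of $\sum_k(\mu^*-\mu^{(k)})g(\theta_k)\ge 0$ therefore does not go through. The fix is not a convex-hull or mixture argument; you should compare through the dual function rather than through the primal optimum.

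Let $Q(\mu)=\max_\theta L(\theta,\mu)$ and $\mu^*\in\arg\min_{\mu\ge 0}Q(\mu)$, which is what ``optimal dual variable'' means here. Best response and the definition of $Q$ as a maximum give, for every $k$,
\[
F(\theta_k)-\mu^{(k)}g(\theta_k)=Q(\mu^{(k)})\ \ge\ Q(\mu^*)\ \ge\ L(\theta_k,\mu^*)=F(\theta_k)-\mu^*g(\theta_k).
\]
Hence $(\mu^*-\mu^{(k)})g(\theta_k)\ge 0$ termwise, using only the minimality of $\mu^*$; $F(\theta^*)$ never enters. This is exactly the paper's step $\sum_k\mu_kb_k=\sum_k\bigl(Q(\mu_k)-F(\theta_k)\bigr)\ge KQ(\mu^*)-\sum_kF(\theta_k)\ge-\mu^*\sum_kv_k$. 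With this replacement the rest of your argument goes through unchanged.

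On the shared step size: the paper also uses a single $\eta=\mu^{(1)}/(\sqrt{K}\max\{\epsilon,\bar J\})$ for both parts. What it actually proves for the violation is the constant $\max\{\epsilon,\bar J\}\bigl(\mu^{(1)}+|\mu_C-\mu^{(1)}|^2/(2\mu^{(1)})\bigr)/(\mu_C-\mu^*)$, not the displayed one. So your ``up to constants'' reading matches what is established.
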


Proofs of Theorem \ref{thm:analysis} are provided in Appendix \ref{sec:proofconvergencemax}. The bound shows that when the inner iteration number $K$ is sufficiently large, the inner maximization error will be small enough, so \ouralg can find a worst-case distribution in the Diffusion Ambiguity Set.  
Moreover, the \emph{inclusive} KL-divergence w.r.t. the nominal distribution is bounded by the budget $\epsilon$, the prior matching error $R(\pi,P_0)$ (e.g. $D_{\mathrm{KL}}(p_T\| \pi)$ in diffusion models), and the constant gap $C_1$ which relying on the negative entropy $-H(P_0)$. This substantiates that inner maximization of \ouralg can find a worst-case distribution that is consistent with the nominal distribution by adjusting  $\epsilon$.

\begin{theorem}[Convergence of \ouralg]
  \label{thm:convergence_minmax} Assume that objective $f(w,x)$ is $\beta-$smooth and $L-$Lipschitz with respect to $w$ and is upper bounded by $\bar{f}$. If each  dataset $S_j$ sampled from the generative model contains $n$ examples and the step size is $\lambda\sim\mathcal{O}(\sqrt{\frac{1}{\beta L^2 H}})$, then with probability $1-\delta, \delta\in(0,1)$, the average norm of Moreau envelope of  $\phi(w):= \max_{\theta}\mathbb{E}_{P_{\theta}}[f(w,x)]$ satisfies
   \begin{equation}
\begin{split}
\mathbb{E}_{j,k}\left[\|\nabla\phi_{\frac{1}{2\beta}}(w)\|^2\right]
\leq 4\beta \Delta'+\frac{V_1}{\sqrt{n}}+\frac{V_2}{\sqrt{H}}
\end{split}
\end{equation}
where the expectation is taken over the randomness of output selection, $\Delta'$ is the error of inner maximization bounded in Theorem \ref{thm:analysis}, $V_1=8\beta \bar{f}\sqrt{\log(2/\delta)}$ and $V_2=4L\sqrt{(\phi_{\frac{1}{2\beta}}(w_{1})-\min_w\phi(w))\beta}$.
\end{theorem}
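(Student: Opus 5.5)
We follow the standard analysis of gradient descent with an approximate max-oracle for nonconvex–nonconcave minimax problems (in the style of \cite{nonconvex_nonconcave_mini_max_jin2020local}), working with the weakly convex function $\phi(w)=\max_\theta \mathbb{E}_{P_\theta}[f(w,x)]$. Since $f(\cdot,x)$ is $\beta$-smooth, each $w\mapsto \mathbb{E}_{P_\theta}[f(w,x)]$ is $\beta$-weakly convex. Their supremum $\phi$ is therefore also $\beta$-weakly convex, so the Moreau envelope $\phi_{1/(2\beta)}(w)=\min_{w'}\{\phi(w')+\beta\|w-w'\|^2\}$ is well defined and differentiable. Moreover $\|\nabla\phi_{1/(2\beta)}(w)\|=2\beta\|w-\hat w\|$, where $\hat w$ is the proximal point. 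We use $\phi_{1/(2\beta)}$ as a potential function and track its change along $w_{j-1}\to w_j$. Here $H$ denotes the number of outer iterations (the $I$ of Algorithm~\ref{DAG-DRL}).

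\textbf{Step 1: one-step descent inequality.} Let $\hat w_{j-1}$ be the proximal point of $w_{j-1}$, and let $g_j=\nabla_w\hat{\mathbb{E}}_{S_j}[f(w_{j-1},x)]$ be the gradient actually used. By the definition of the envelope,
\[
\phi_{1/(2\beta)}(w_j)\le \phi(\hat w_{j-1})+\beta\|w_j-\hat w_{j-1}\|^2 .
\]
Expanding $w_j=w_{j-1}-\lambda g_j$ and using $\|g_j\|\le L$ gives
\[
\phi_{1/(2\beta)}(w_j)\le \phi_{1/(2\beta)}(w_{j-1})+2\beta\lambda\langle g_j,\hat w_{j-1}-w_{j-1}\rangle+\beta\lambda^2L^2 .
\]

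\textbf{Step 2: control of the inner product.} Decompose $g_j$ into the population gradient $\nabla_w\mathbb{E}_{P_{\theta^{(j)}}}[f(w_{j-1},x)]$ plus a sampling error. For the population part, weak convexity of $\psi_j(w):=\mathbb{E}_{P_{\theta^{(j)}}}[f(w,x)]$ gives
\[
\langle\nabla\psi_j(w_{j-1}),\hat w_{j-1}-w_{j-1}\rangle\le \psi_j(\hat w_{j-1})-\psi_j(w_{j-1})+\tfrac{\beta}{2}\|\hat w_{j-1}-w_{j-1}\|^2 .
\]
Since $\psi_j\le\phi$, we have $\psi_j(\hat w_{j-1})\le\phi(\hat w_{j-1})$. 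Averaged over the uniform choice of $\theta^{(j)}$, Theorem~\ref{thm:analysis} gives $\psi_j(w_{j-1})\ge \phi(w_{j-1})-\Delta'$, with $\phi$ evaluated at the max-oracle $\theta^*$. Optimality of $\hat w_{j-1}$ gives $\phi(\hat w_{j-1})-\phi(w_{j-1})\le-\beta\|\hat w_{j-1}-w_{j-1}\|^2$. Together these yield
\[
\langle\cdot,\cdot\rangle\le \Delta'-\tfrac{\beta}{2}\|\hat w_{j-1}-w_{j-1}\|^2=\Delta'-\tfrac{1}{8\beta}\|\nabla\phi_{1/(2\beta)}(w_{j-1})\|^2 .
\]

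\textbf{Step 3: sampling error.} The $n$ samples in $S_j$ are i.i.d.\ from $P_{\theta^{(j)}}$ and the objective is bounded by $\bar f$. A Hoeffding/McDiarmid bound at confidence $1-\delta$ therefore controls the deviation term by $\mathcal{O}(\bar f\sqrt{\log(2/\delta)/n})$, which produces $V_1/\sqrt n$ after multiplying by the factor $4\beta$ (or $8\beta$) coming from Step 1. This is the step I expect to be the main obstacle. The bound is naturally on function values, whereas the error enters through $\langle g_j-\nabla\psi_j,\hat w_{j-1}-w_{j-1}\rangle$. I would first try to absorb it via the same weak-convexity argument applied to the empirical objective $\hat\psi_j$. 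That gives the inequality with $\hat\psi_j$ in place of $\psi_j$, so only the function-value gaps $|\hat\psi_j-\psi_j|$ at $w_{j-1}$ and $\hat w_{j-1}$ appear, each bounded by Hoeffding. A union bound, or a martingale/Azuma argument across $j$, keeps the overall probability at $1-\delta$.

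\textbf{Step 4: telescoping.} Sum over $j=1,\dots,H$ and use $\phi_{1/(2\beta)}(w_H)\ge\min_w\phi$. Dividing by $H\lambda/4$ gives
\[
\mathbb{E}_j\|\nabla\phi_{1/(2\beta)}(w_j)\|^2\le 4\beta\Delta'+\frac{V_1}{\sqrt n}+\frac{4(\phi_{1/(2\beta)}(w_1)-\min\phi)}{\lambda H}+4\beta\lambda L^2 .
\]
Choosing $\lambda=\sqrt{(\phi_{1/(2\beta)}(w_1)-\min\phi)/(\beta L^2H)}$ balances the last two terms to $V_2/\sqrt H$, completing the plan.
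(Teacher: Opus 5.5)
Your route is the paper's own (the Jin et al.\ argument). You use the Moreau envelope as the potential and apply the smoothness inequality to the \emph{empirical} objective, so only function-value deviations at $w_{j-1}$ and $\hat{w}_{j-1}$ need McDiarmid/Hoeffding. You then telescope and use the same $\lambda$. Your idea of handling the $H$ rounds with an Azuma-type martingale argument is more careful than the paper, which applies McDiarmid round by round and sums without a union bound. The real problem is the constants, which the statement fixes explicitly.

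In Step 2 you use only the optimality of $\hat{w}_{j-1}$, i.e.\ $\phi(\hat{w}_{j-1})-\phi(w_{j-1})\le-\beta\|\hat{w}_{j-1}-w_{j-1}\|^2$. This leaves $-\frac{\beta}{2}\|\hat{w}_{j-1}-w_{j-1}\|^2=-\frac{1}{8\beta}\|\nabla\phi_{\frac{1}{2\beta}}(w_{j-1})\|^2$. Plugged into Step 1, each round decreases the potential by $\frac{\lambda}{4}\|\nabla\phi_{\frac{1}{2\beta}}\|^2$, against an error $2\beta\lambda(\Delta'+2\bar f\sqrt{\log(2/\delta)/n})+\beta\lambda^2L^2$. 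Dividing by $\lambda H/4$ then gives:
\begin{itemize}
\item $8\beta\Delta'$, not $4\beta\Delta'$;
\item $2V_1/\sqrt n$;
\item $\frac{4D}{\lambda H}+4\beta\lambda L^2$, with $D=\phi_{\frac{1}{2\beta}}(w_1)-\min_w\phi(w)$. At your $\lambda$ this equals $8L\sqrt{\beta D/H}=2V_2/\sqrt H$, not $V_2/\sqrt H$.
\end{itemize}
So your Step 4 display does not follow from Steps 1--3. As written, you prove the theorem only with every constant doubled; your hesitation ``$4\beta$ (or $8\beta$)'' is exactly this issue.

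The fix is to use the $\beta$-weak convexity of $\phi$, which you already established. The map $u\mapsto\phi(u)+\beta\|u-w_{j-1}\|^2$ is then $\beta$-strongly convex with minimizer $\hat{w}_{j-1}$. Evaluating at $u=w_{j-1}$ gives
\[
\phi(w_{j-1})\;\ge\;\phi(\hat{w}_{j-1})+\beta\|\hat{w}_{j-1}-w_{j-1}\|^2+\tfrac{\beta}{2}\|\hat{w}_{j-1}-w_{j-1}\|^2 .
\]
Step 2 then gives $-\beta\|\hat{w}_{j-1}-w_{j-1}\|^2=-\frac{1}{4\beta}\|\nabla\phi_{\frac{1}{2\beta}}(w_{j-1})\|^2$, so each round decreases the potential by $\frac{\lambda}{2}\|\nabla\phi_{\frac{1}{2\beta}}\|^2$. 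Dividing by $\lambda H/2$ gives $4\beta\Delta'+8\beta\bar f\sqrt{\log(2/\delta)}/\sqrt n+\frac{2D}{\lambda H}+2\beta\lambda L^2$, which at your $\lambda$ is exactly $4\beta\Delta'+V_1/\sqrt n+V_2/\sqrt H$.

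This strong-convexity step is what the paper's coefficient $\frac{1}{4\beta}$ relies on. The paper's displayed line $\ge\frac{\beta}{2}\|\hat{w}_j-w_j\|^2=\frac{1}{4\beta}\|\nabla\phi_{\frac{1}{2\beta}}(w_j)\|^2$ is itself off by the same factor as written, and is repaired the same way.
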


The proof of Theorem \ref{thm:convergence_minmax} is deferred to Appendix \ref{sec:proofconvergenceminmax}. It shows that with sufficiently large iteration number $H$ and sampling size $n$, the average gradient norm of the Moreau envelope for the optimal inner maximization function $\phi(w)$ is bounded by the error $\Delta'$ of the maximization oracle. Since $\Delta'$ decreases as the inner iteration number $K$ becomes sufficiently large as proved by Theorem \ref{thm:analysis}, the average gradient norm of the Moreau envelope can be small enough. 
The convergence of the Moreau envelope gradient norm indicates that the algorithm converges to an approximately stationary point as explained in Appendix \ref{sec:Moreau_envelop}.  The intuitive reason is that if the gradient norm of Moreau envelope is bounded by a small enough value $\Delta$ for a decision variable $w$, i.e. $\|\nabla\phi_{\frac{1}{2\beta}}(w)\|\leq \Delta$, there exists $\hat{w}$ which is close enough to $w$ ($\|\hat{w}_j-w_j\|\leq \frac{\Delta}{2\beta}$) and is an approximately stationary solution for minimizing the optimal inner maximization function $\phi$.  Thus, \ouralg can converge to an approximately stable solution of \eqref{eqn:DRO} with enough maximization and minimization iterations. 

\textbf{Impact of the expressive capacity of generative models.}
While the generative models with sufficient expressive capacity can define an ambiguity set with diverse adversarial distributions, the real world generative models always have finite capacity. The expressive capacity of generative models determines whether a real worst-case distribution can be effectively identified by the inner maximization and further affects the robustness performance of DRO. To quantify the impact of the expressivity of generative models, we formally define the $\Gamma-$expressivity of the generative model.
\begin{definition}[$\Gamma-$expressivity of the generative model]\label{asm:expressive}
 A generative model $P_{\theta}$ is $\Gamma-$expressive ($\Gamma>0$) if for any testing distribution $P$ in the set of all possible testing distributions $\mathcal{P}$, there exists a $\theta$ such that $D_{\mathrm{W}}(P_{\theta},P)\leq \Gamma$ where  $D_{\mathrm{W}}$ is the Wasserstein metric. 
\end{definition}
Next, we analyze the impact of the $\Gamma-$expressivity on the DRO performance.
Given the set of all possible testing distributions $\mathcal{P}$, \ouralg has better robustness performance if it can identify the real worst-case distribution $\tilde{P}=\arg\max_{\mathcal{P}}\mathbb{E}_{x \sim P}[f(w, x)]$ with a smaller error.
Thus, we bound the error of the inner maximization $\tilde{\Delta}':= \mathbb{E}_{x\sim \tilde{P}}[f(w,x)]-\mathbb{E}_k \mathbb{E}_{x\sim P_{\theta_k}}[f(w,x)]$ in Corollary~\ref{cor:expressive_inner} (in Appendix~\ref{sec:expressivity}).
We observe that $\Gamma-$expressivity affects the inner maximization performance by introducing an additional error $L_x\Gamma$ where $L_x$ is the Lipschitz continuity parameter of $f(w,x)$ with respect to $x$. When the expressivity parameter $\Gamma$ is smaller, the generative model has a stronger ability to express the possible testing distributions and GAS-DRO can identify adversarial distributions that are more consistent with the true worst-case testing distribution, as reflected by a smaller inner maximization gap. Consequently, by optimizing the parameter $w$ based on the learned adversarial distribution, \ouralg can achieve better robustness across the true testing distribution.
 In the ideal case when $\Gamma = 0$, the error bound of the inner maximization coincides with the inner maximization error bound in Theorem~\ref{thm:analysis}, which asymptotically converges to zero as the number of inner iterations $K$ increases.

\section{Experiments}\label{sec:experiments}

This section reports the main experiment results on a representative \ml task—time series forecasting based on the \cite{electricitymaps_platform} —to verify the effectiveness of the proposed algorithm. Additionally, we evaluate the performance of \ouralg on an image classification task based on the MNIST dataset \cite{726791} and USPS dataset \cite{291440}, with details in Appendix~\ref{sec:classification}.

\textbf{Task.}
Accurate forecasting of renewable is critical for power grid optimization, energy-efficient computing, sustainable power management, and greenhouse gas emission reduction. 
We evaluate \ouralg\ on time-series prediction with the 
\cite{electricitymaps_platform} datasets, using an $n$-step sliding window to 
predict emissions at step $(n+1)$.

\textbf{Datasets.}
We conduct experiments using datasets from the \cite{electricitymaps_platform} 
platform, which provides high-resolution spatio-temporal data on electricity 
operations, including carbon intensity (gCO$_2$eq/kWh) and energy mix. We construct training and OOD testing datasets based on the  
hourly records from 2021--2024 across four regions: California 
(\textbf{BANC\_21$\sim$24}), Texas (\textbf{ERCO\_21$\sim$ 24}), Queensland 
(\textbf{QLD\_21$\sim$24}), and the United Kingdom (\textbf{GB\_21$\sim$24}). The training set is constructed by merging \textbf{BANC\_23$\sim$24 } (\textbf{BANC\_2324})
into 438 sequence samples, while evaluation is performed on independent test 
sets of 312 samples each from other years and regions. To measure distributional 
shift between training and test sets, we compute the Wasserstein distance, reported in Table~\ref{main_table_full}.

\textbf{Baselines.} 
The baselines which are compared with \ouralg in our experiments are introduced as below. \textbf{\ml} is a method that trains the standard ML to minimize the time series forecasting error without \dro. 
\textbf{\dml} represents an \ml model fine-tuned with diffusion-generated augmented datasets. \textbf{\wdro} represents the FWDRO algorithm proposed in \cite{staib2017distributionally}, where the ambiguity set is characterized by the Wasserstein metric. \textbf{\kldro} represents the classic KL-divergence-based \dro framework, in which the ambiguity set is characterized by the \kl- divergence \cite{KL-DRO_hu2013kullback}. \textbf{\dragen} represents a Wasserstein DRO framework, in which the ambiguity set is constructed as a Wasserstein ball in the latent space of a learned generative model \cite{DRL_environment_generation_ren2022distributionally}. \textbf{\pdro}~\cite{michelmodeling} is a DRO framework that parameterizes the adversarial distributions using a neural generative model within a KL-divergence--based ambiguity set.

\textbf{Setups.} 
Detailed training setups of \ouralg and other baselines are provided in Appendix~\ref{app:exp}.

\textbf{Main Results.}
The main results are given in Table \ref{main_table_full}.
We observe that all \dro methods improve testing performance on \ood datasets compared to \ml by optimizing the worst-case expected loss, while \dml improves upon \ml by leveraging diffusion-generated augmented data. Using \ml as the reference baseline, \ouralg achieves the highest performance gain of 63.7\%, followed by \dragen (48.9\%),\pdro (42.5\%), \dml (39.7\%), \kldro (36.1\%), and \wdro (24.0\%). Although both \dml and traditional \dro methods deliver noticeable improvements, \ouralg still surpasses them by an average margin of 25.4\%, likely because it exploits the distribution learning capability of diffusion models to construct ambiguity sets, thereby generating adversarial distributions that are both strong and realistic and ultimately achieving superior \ood generalization.

In addition, our method outperforms the DRO baselines with generative models. Although \dragen also employs a generative model to construct its ambiguity set, it still relies on relaxing Wasserstein DRO framework which is limited in overcoming the inherent limitations of \wdro. Moreover, \pdro, as a DRO algorithm that uses generative models to model the adversarial distributions, remains restricted by the KL-divergence-based ambiguity set, preventing it from modeling diverse worst-case distributions. Also, the crude approximations in \pdro may bring harmful effects.  By contrast, \ouralg directly defines the ambiguity set on the parameterized space of generative models, enabling it to generate worst-case distributions that are both diverse and consistent with the nominal distribution, thereby offering a principled and tractable DRO algorithm.

\begin{wraptable}{r}{0.7\textwidth}
\centering
\caption{Test Result on Different Datasets.}
\renewcommand{\arraystretch}{1.2}
\resizebox{0.7\textwidth}{!}{%
\begin{tabular}{lcccccccc}
\toprule
\multicolumn{1}{c}{\multirow{2}{*}{\makecell{\textbf{Datasets} \\ (\textbf{Wasserstein Distance})}}}   & \multicolumn{6}{c}{\textbf{MSE}} \\
\cmidrule(lr){2-8}
 & \textbf{\ouralg} & \textbf{\dragen} & \textbf{\pdro} & \textbf{\kldro} & \textbf{\wdro} &  \textbf{\dml}  & \textbf{\ml}\\
\midrule

\multicolumn{1}{c}{\makecell{\textbf{BANC\_22}  (0.0240)}} & \textbf{0.0047} &	0.0069 &	0.0079 &	0.0086 &	0.0073 &	0.0078 &	0.0183   \\
\multicolumn{1}{c}{\makecell{\textbf{BANC\_21}  (0.1213)}} & \textbf{0.0054} &	0.0087 &	0.0074 &	0.0112 &	0.0121 &	0.0093 &	0.0238   \\
\midrule

\multicolumn{1}{c}{\makecell{\textbf{QLD\_24}  (0.2171)}} & \textbf{0.0450} &	0.0618 &	0.0752 &	0.0754 &	0.0823 &	0.0766 &	0.0887   \\
\multicolumn{1}{c}{\makecell{\textbf{QLD\_23}  (0.2033)}} & \textbf{0.0509} &	0.0681 &	0.0820 &	0.0831 &	0.0879 &	0.0834 & 	0.0946   \\
\multicolumn{1}{c}{\makecell{\textbf{QLD\_22}  (0.2782)}} & \textbf{0.0192} &	0.0298 &	0.0331 &	0.0379 &	0.0557 &	0.0352 &	0.0667   \\
\multicolumn{1}{c}{\makecell{\textbf{QLD\_21}  (0.3054)}} & \textbf{0.0186} &	0.0288 &	0.0322 &	0.0377 &	0.0574 &	0.0339 &	0.0696   \\
\midrule

\multicolumn{1}{c}{\makecell{\textbf{GB\_24}  (0.0419)}} & \textbf{0.0119} &	0.0146 &	0.0191 &	0.0178 &	0.0176 &	0.0172 &	0.0285   \\
\multicolumn{1}{c}{\makecell{\textbf{GB\_23}  (0.0666)}} & \textbf{0.0100} &	0.0141 &	0.0154 &	0.0178 &	0.0200 &	0.0164 &	0.0311   \\
\multicolumn{1}{c}{\makecell{\textbf{GB\_22}  (0.1255)}} & \textbf{0.0105} &	0.0153 &	0.0158 &	0.0197 &	0.0245 &	0.0172 &	0.0360    \\
\multicolumn{1}{c}{\makecell{\textbf{GB\_21}  (0.1359)}} & \textbf{0.0094} &	0.0140 &	0.0137 &	0.0181 &	0.0229 &	0.0158 &	0.0340    \\
\midrule

\multicolumn{1}{c}{\makecell{\textbf{ERCO\_24}  (0.1206)}} & \textbf{0.0158} &	0.0201 &	0.0211 &	0.0241 &	0.0266 &	0.0224 &	0.0379   \\
\multicolumn{1}{c}{\makecell{\textbf{ERCO\_23}  (0.1207)}} & \textbf{0.0106} &	0.0146 &	0.0148 &	0.0179 &	0.0196 &	0.0162 &	0.0319    \\
\multicolumn{1}{c}{\makecell{\textbf{ERCO\_22}  (0.1581)}} & \textbf{0.0076} &	0.0115 &	0.0103 &	0.0146 &	0.0187 &	0.0123 &	0.0312    \\
\multicolumn{1}{c}{\makecell{\textbf{ERCO\_21}  (0.1417)}} & \textbf{0.0093} &	0.0141 &	0.0143 &	0.0189 &	0.0263 &	0.0160 &	0.0382   \\
\midrule

\multicolumn{1}{c}{\textbf{Average}} & \textbf{0.0163} &	0.0230 &	0.0259 &	0.0288 &	0.0342 &	0.0271 &	0.0450    \\
\midrule
    \multicolumn{1}{c}{\textbf{Worst}} & \textbf{0.0509} &	0.0681 &	0.0820 &	0.0831 &	0.0879 &	0.0834 &	0.0946   \\

\bottomrule
\end{tabular}
}
\label{main_table_full}
\end{wraptable}

In Appendix~\ref{sec:more_test}, we provide additional results and more detailed analysis. Ablation studies are conducted to evaluate the impacts of the adversarial budgets and the distributional discrepancies caused by different noise types and strengths. Furthermore, we empirically validate the effect of the expressive capacity by evaluating \ouralg with diffusion models with different architectures and a VAE model in Appendix~\ref{sec:capacity_expressive_empirical}. Moreover, we measure and discuss the computational overhead of \ouralg and other baselines in Appendix~\ref{sec:computation_overhead}.

\section{Conclusion}

In this paper, we introduce a novel generative-model–based ambiguity modeling framework for DRO and propose \ouralg to solve \dro with generative ambiguity sets. While the method is developed based on the formulations of diffusion models or VAEs, the proposed framework is general and can be adapted to other likelihood-based generative models. By exploiting the expressive distributional modeling capacity of generative models, \ouralg can effectively identify adversarial distributions that induce worst-case performance while remaining consistent with the nominal distribution, thereby yielding tractable DRO solutions with strong generalization performance . We establish the stationary convergence guarantees of \ouralg and demonstrate through experiments its superior \ood generalization performance by comparing with the state-of-the-art baselines.

\textbf{Future Directions}. 
It would be interesting to formally establish consistency bounds for the training losses of other generative models and to exploit them for constructing ambiguity sets in DRO. In addition, improving the training efficiency of generative models would directly improve the computational efficiency of \ouralg. Moreover, generative ambiguity sets provide a flexible interface for incorporating prior knowledge about the testing environment into the generative model, which can further enhance robustness and generalization under distributional shifts.

\textbf{Broader Impacts}. 
The proposed algorithm provides an effective generative model--based approach to ambiguity modeling that addresses fundamental challenges in DRO. It has the potential to bridge the gap between imperfect training environments and real-world deployment, thereby enhancing the safety and robustness of machine learning models across a broad range of critical real-world tasks.

\section*{Acknowledgment}
This work were supported in part by University of Houston Start-up Funds 74825 [R0512039] and 74833 [R0512042]. 

We thank the ICLR reviewers for the constructive comments which helped us to improve the draft.

\newpage
\section*{Ethics Statement}

This work adheres to the ICLR Code of Ethics. 
Our research does not involve human subjects, personally identifiable information, or sensitive 
personal data, and thus no IRB approval was required. All datasets used in this study are publicly 
available, widely recognized in the research community, and utilized strictly in accordance with 
their licenses. For any preprocessing, we provide clear documentation to ensure transparency. The proposed methods are designed to advance the theoretical and practical understanding of 
distributionally robust optimization and generative modeling. These contributions are primarily 
academic in nature and intended to support future work in reliable machine learning. We do not 
anticipate direct harmful applications.

\section*{Reproducibility Statement}
We have taken extensive measures to ensure the reproducibility of our work. 
A complete description of the proposed model architecture, training objectives, and optimization procedure 
is provided in Section~\ref{sec:dro_generative_ambiguity_set}. The datasets used in our experiments are all 
publicly available and widely adopted in the research community. We provide a detailed account of the 
data preprocessing pipeline, including normalization and sampling strategies, in Appendix~\ref{app:exp}. 

To further enhance reproducibility, all hyperparameters, model configurations, and training schedules 
are reported explicitly in Section~\ref{sec:experiments} and Appendix~\ref{app:exp}, along with the 
evaluation metrics and criteria employed for performance comparison. For theoretical contributions, 
we clearly state all assumptions and provide complete, rigorous proofs of our claims in 
Appendix~\ref{sec:proofs}. 

We also include ablation studies and sensitivity analyses to illustrate the robustness of our results with respect to key hyperparameters, which may assist future researchers in replicating or extending our work. Finally, we provide our source code in the link \href{https://github.com/CIGLAB-Houston/GAS-DRO}{https://github.com/CIGLAB-Houston/GAS-DRO}.

\newpage
\appendix

\section{Details of \ouralg (Algorithm \ref{DAG-DRL}) with Diffusion Models}\label{sec:algo_detail}

In this section, we provide details to transform the objective of inner maximization in Algorithm \ref{DAG-DRL} based on the formilations of diffusion models.
\subsection{Objective Derivation by Policy Gradient} \label{sec:objective_derivation}

Let $x_{0:T}$ be the output vector of each step in the backward process of the diffusion model. Denote $P_{0:T,\theta}$ as the joint distribution of $x_{0:T}$.   Since $x_0\sim P_{\theta}$, we can express the first term of the Lagrangian-relaxed objective as
\begin{equation}
    \mathbb{E}_{x\sim P_{\theta}}[f(w,x)]=  \mathbb{E}_{x_{0:T}\sim P_{0:T,\theta}}[f(w,x_0)]
\end{equation}

Then, the gradient of Lagrangian-relaxed objective can be expressed as
\begin{equation}\label{eqn:policy_gradient}
  \begin{split}
     &\nabla_{\theta}\left(\mathbb{E}_{x\sim P_{\theta}}[f(w,x)]-\mu J(\theta, P_0)\right) \\
=& \nabla_{\theta}\left(\mathbb{E}_{x_{0:T}\sim P_{0:T,\theta}}[f(w,x_0)]-\mu J(\theta, P_0)\right)\\
=& \int_{x_{0:T}} f(w,x_0)\nabla_{\theta} P_{0:T,\theta}(x_{0:T})\mathrm{d}{x_{0:T}}-\mu \nabla_{\theta}J(\theta,P_0)\\
=& \int_{x_{0:T}} P_{0:T,\theta}(x_{0:T})f(w,x_0)\nabla_{\theta} \ln P_{0:T,\theta}(x_{0:T})\mathrm{d}x_{0:T}-\mu \nabla_{\theta}J(\theta,P_0)\\
=&\mathbb{E}_{x_{0:T}\sim P_{0:T,\theta}}\left[f(w,x_0)\nabla_{\theta} \ln P_{0:T,\theta}(x_{0:T})\right]-\mu \nabla_{\theta}J(\theta,P_0)
  \end{split}
\end{equation}
The first term $\mathbb{E}_{x_{0:T}\sim P_{0:T,\theta}}$ can be calculated by empirical mean $\hat{\mathbb{E}}_{x_{0:T}\sim P_{0:T,\theta}}$ based on the examples sampled from $P_{0:T,\theta}$. 
Thus, we can equivalently implement the gradient ascent by optimizing the objective in \eqref{eqn:vpg}:
\begin{equation}
  \max_{\theta} \hat{\mathbb{E}}_{x_{0:T}\in P_{0:T,\theta}}[
    \ln P_{0:T,\theta}(x_{0:T}) \cdot f(w,x_0)]-\mu\cdot J(\theta, P_0)
\end{equation}

Next, we derive the expression for 
$\ln P_{0:T,\theta}(x_{0:T})$.
Consider a discrete-time diffusion backward process as
\begin{equation}
x_{t-1} = \mu_{\theta}(x_t, t)+\sigma_t w_t
\end{equation}
where $w_t$ is a standard multi-dimensional Gaussian variable. Given $\theta$, the conditional probability at each step $t$ is 
\begin{equation}
    P_{t-1,\theta}(x_{t-1}\mid x_t)=\mathcal{N}(x_{t-1},\mu_{\theta}(x_t, t),\sigma_t^2 \mathbf{I})
\end{equation}
We can explicitly express the joint distribution $P_{0:T,\theta}$ of $x_{0:T}$ for all $T$ backward steps:
\begin{equation}
    P_{0:T,\theta}(x_{0:T})=P(x_T)\prod_{t=1}^T P_{t-1,\theta}(x_{t-1}\mid x_t)=\frac{1}{\sqrt{2\pi}}  e^{-\frac{\|x_T\|^2}{2}}\cdot \frac{1}{\sqrt{2\pi}\sigma_t}e^{-\sum_{t=1}^T\frac{\|x_{t-1}-\mu_{\theta}(x_t,t)\|^2}{2\sigma_t^2}}
\end{equation}
Thus, we can get the expression of $\ln P_{0:T,\theta}(x_{0:T})$ as
\begin{equation}
  \ln P_{0:T,\theta}(x_{0:T})=-\sum_{t=1}^T\|x_{t-1}-\mu_{\theta}(x_t,t)\|^2/(2\sigma_t^2)+C_2
\end{equation}
where $C_2=-\ln(2\pi \sigma_t)-\frac{\|x_T\|^2}{2}$.

\subsection{Objective Derivation by Proximal Policy Optimization} \label{sec:ppo}
In this PPO method, we convert the first term of \eqref{eqn:lagrangian_objective} as a PPO-like objective given a reference diffusion model $P_{\theta_0}$, i.e.
\begin{equation}
\begin{split}
    \mathbb{E}_{x\sim P_{\theta}}[f(w,x)]&=  \mathbb{E}_{x_{0:T}\sim P_{0:T,\theta}}[f(w,x_0)]\\
    & = \mathbb{E}_{x_{0:T}\sim P_{0:T,\theta_0}}\left[\frac{P_{0:T,\theta }(x_{0:T})}{P_{0:T,\theta_0}(x_{0:T})}f(w,x_0)\right]
\end{split}
\end{equation}
where the probability ratio is \[r_{\theta}(x_{0:T})=\frac{P_{0:T,\theta }(x_{0:T})}{P_{0:T,\theta_0}(x_{0:T})}=\exp\left\{-\sum_{t=1}^T(\frac{\|x_{t-1}-\mu_{\theta}(x_t,t)\|^2}{2\sigma_{t}^2}-\frac{\|x_{t-1}-\mu_{\theta_0}(x_t,t)\|^2}{2\sigma_{t}^2})\right\}\] given the joint probability expression.
The expectation can be approximated by empirical mean based on the examples sampled by $P_{0:T,\theta_0}$. Like PPO, we apply clipping on the ratio to avoid overly-large updates, and we can get the objective in Eqn.~\eqref{eqn:ppo}:
\begin{equation}
  \max_{\theta} \hat{\mathbb{E}}_{ P_{0:T,\theta_0}(x_{0:T})}[
   \min(r_{\theta}(x_{0:T})f(w,x_0)), \mathrm{clip}(r_{\theta}(x_{0:T}),1-\kappa,1+\kappa)\cdot f(w,x_0))]-\mu\cdot J(\theta, P_0)
\end{equation}
where $\kappa\in (0,1)$ is the clipping parameter.

\section{Details of \ouralg (Algorithm \ref{DAG-DRL}) with VAEs}\label{sec:VAE-DRO}

\subsection{Ambiguity set For \vae}

Given the formulations of VAE in Section \ref{sec:variational_models}, as is proved in Appendix \ref{sec:prooflemma1}, Lemma ~\ref{KL_bound_diffusion} can be restated as below.

\textbf{Lemma~\ref{KL_bound_diffusion}} (Restated for VAEs).
\textit{
The inclusive KL-divergence between the empirical distribution $P_0(x)$ and the VAE distribution $P_\theta(x)$ is bounded as
\[
D_{\mathrm{KL}}\!\big(q_0(x) \,\|\, p_\theta(x)\big)
\leq J_{\vae}(\varphi,\theta,P_0) +\mathbb{E}_{P_0(x)}\bigg[D_{\mathrm{KL}}\!\big(q_\varphi(z \mid x) \,\|\, p'(z)\big)\bigg]-H(P_0(x))
\]
where \(
J_{\vae}(\varphi,\theta,P_0) =\mathbb{E}_{P_0(x)}\mathbb{E}_{q_\varphi(z \mid x)} \big[ -\log p_\theta(x \mid z) \big]
\) is the reconstruction loss of VAE,  $q_\varphi(z \mid x)$ denotes the encoder with parameter $\varphi$ (approximate posterior), $p_\theta(x \mid z)$ denotes the decoder model with parameter $\theta$, and $p'(z)$ is the prior distribution. }

To construct a VAE-based ambiguity set for DRO, we first train the a VAE model based on the nominal distribution $P_0$:
\begin{equation}
  (\varphi^\ast, \theta^\ast) 
= \min_{\varphi,\theta} [
    - \mathbb{E}_{x \sim P_0}\,
      \mathbb{E}_{z \sim q_\varphi(z|x)}\!\left[\log p_\theta(x|z)\right]
    + \mathbb{E}_{x \sim P_0}\,
      [D_{\mathrm{KL}}\!\big(q_\varphi(z|x)\,\|\,p(z)\big)]
]  
\end{equation}
Then, we keep $\varphi=\varphi^\ast$ and change $\theta$ in the set $\{\theta\mid J_{\vae}(\varphi^*,\theta,P_0)\leq \epsilon\}$. In this way, we get DRO with VAE-based generative ambiguity set as
\begin{equation}\label{eqn:objective_vae}
\min_{w \in \mathcal{W}} \max_{\theta \in \Theta} \mathbb{E}_{x \sim P_\theta}[f(w, x)], 
\quad \mathrm{s.t.} \quad J_{\vae}(\varphi^*,\theta,P_0)\leq \epsilon
\end{equation}
where \(
J_{\vae}(\varphi,\theta,P_0) =\mathbb{E}_{P_0(x)}\mathbb{E}_{q_\varphi^*(z \mid x)} \big[ -\log p_\theta(x \mid z) \big].
\)

\subsection{Algorithm Details of \ouralg with VAEs}
 
As in Algorithm \ref{DAG-DRL}, to solve the constrained maximization problem, we adopt a dual learning approach: introducing a Lagrangian dual $\mu>0$ to reformulate it as the unconstrained problem and updating $\mu$ via dual gradient descent:
\begin{equation}\label{eqn:lagrangian_objective_vae}
\max_{\theta} \; \mathbb{E}_{x\sim P_{\theta}}[f(w,x)] - \mu J_{\vae}(\varphi^*,\theta,q_0(x))
\end{equation}
In order to get $J_{\vae}(\varphi^*,\theta,q_0(x))=\mathbb{E}_{P_0(x)}\mathbb{E}_{q_\varphi^*(z \mid x)} \big[ -\log p_\theta(x \mid z) \big]$,  we first encode the initial dataset $S_0\sim P_0$ into latent variables $Z_0$ using the pretrained encoder $\varphi^*$. Then, we can empirically compute the reconstruction loss as $J_{\vae}(\varphi^*,\theta,q_0(x))= \hat{\mathbb{E}}_{x\sim P_0,z\sim Z_0}\big[ -\log p_\theta(x \mid z) \big]$.

Next, to solve the inner maximization in Algorithm \ref{DAG-DRL},
we apply the policy optimization methods to transform the objective in \eqref{eqn:lagrangian_objective_vae} into a tractable form. As shown in Appendix \ref{sec:objective_derivation}, based on policy gradient \cite{policy_gradient_sutton1999policy}, we can transform the objective \eqref{eqn:lagrangian_objective_vae} into
\begin{equation}\label{eqn:vpg_vae}
  \max_{\theta} \hat{\mathbb{E}}_{ p_{\theta}(x,z)}[
    \ln p_{\theta}(x,z) \cdot f(w,x)]-\mu\cdot J_{\vae}(\varphi^*,\theta,q_0(x))
\end{equation}
where $\hat{\mathbb{E}}_{ p_{\theta}(x,z)}$ is the empirical mean based on the decoding process of the \vae model.

Similar to diffusion models, PPO \cite{PPO_schulman2017proximal} provides enhanced stability by transforming the objective \eqref{eqn:lagrangian_objective_vae} into a differentiable objective function.
\begin{equation}\label{eqn:ppo_vae}
  \max_{\theta} \hat{\mathbb{E}}_{p_{\theta_{\mathrm{old}}}(x,z)}[
   \min(r_{\theta}(x,z)f(w,x)), \mathrm{clip}(r_{\theta}(x,z),1-\kappa,1+\kappa)\cdot f(w,x))]-\mu\cdot J_{\vae}(\varphi,\theta,q_0(x))
\end{equation}
where  $p_{\theta_{\mathrm{old}}}(x,z)$ is a reference \vae decoder which can be selected as a pre-trained \vae decoder on the nominal distribution $P_0(x)$,  the probability ratio is $r_{\theta}(x,z)=\frac{P_{\theta }(x,z)}{p_{\theta_{\mathrm{old}}}(x,z)}$, and $\kappa\in(0,1)$ is the clipping parameter to avoid overly-large policy updates.

\section{Theorem Proofs}\label{sec:proofs}

\subsection{Proof of Lemma \ref{KL_bound_diffusion}}\label{sec:prooflemma1}
\textbf{Lemma 1}. \textit{The inclusive KL-divergence between the nominal distribution $P_0$ and the sampling distribution $P_{\theta}$ of a likelihood-based generative model can be bounded as
\[
D_{\mathrm{KL}}(P_0|| P_{\theta})\leq J(\theta,P_0) + R(p', P_0)+C_1
\]
where $J(\theta,P_0)$ is the reconstruction loss which can be $J_{\mathrm{DM}}(\theta,P_0) $ in \eqref{eqn:diffusion_loss} or $J_{\mathrm{VAE}}(\theta,P_0) $ in the first term of \eqref{eqm:elbo}, $R(p', P_0)$ is a prior matching loss relying on a prior distribution $\pi$ but not relying on $\theta$, and $C_1$ is a constant that does not rely on $p'$ or $\theta$.}
\begin{proof}
Likelihood-based generative models are typically trained to maximize the lower bounds of their expected log-likelihood given their sampling distribution $P_{\theta}$ and a nominal distribution $P_0$, i.e.   $\mathbb{E}_{P_0(x)}\bigg[\log P_{\theta}(x)\bigg]$. Applying the relationship between KL divergence and cross-entropy, we have
 \begin{equation}
     \begin{split}
         D_{\mathrm{KL}}(P_0(x_0)\|P_{\theta}(x_0))&=- \mathbb{E}_{P_0(x)}\bigg[\log P_{\theta}(x)\bigg]+\mathbb{E}_{P_0(x)}\bigg[\log P_{0}(x)\bigg],
     \end{split}
 \end{equation}
 where the first term can be bounded by the training loss $J(\theta, P_0)$ of likelihood-based generative models plus an average prior matching error $R(p',P_0)$, and the second term is the entropy of $P_0$. Thus, we can get a general bound of the KL divergence in Lemma \ref{KL_bound_diffusion}. 
 
 Next, we give the concrete bounds for variational diffusion models and VAEs. Also, the bound for score-matching-based diffusion models is given in Lemma~\ref{lma:KLbound_scorematching} based on the SDE formulations of diffusion models.

  We first give the bound based on the variational diffusion models. According to \cite{diffusion_models_luo2022understanding}, it holds that
   \begin{equation}
 \begin{split}
     \mathbb{E}_{P_0(x)}\bigg[\log P_{\theta}(x)\bigg]&
     \geq \mathbb{E}_{P_0(x)}\bigg[\mathbb{E}_{q(x_{1}\mid x_0)}\!\left[\log p_\theta(x_0 \mid x_1)\right]\bigg]- \mathbb{E}_{P_0(x)}\bigg[D_{\mathrm{KL}}\bigg[ q(x_T|x_0)\|p(x_T)\bigg]\bigg]\\&- \mathbb{E}_{P_0(x)}\bigg[\sum_{t=2}^T \mathbb{E}_{q(x_t\mid x_0)}\left[D_{\mathrm{KL}}(q(x_{t-1}\mid x_t, x_0)\|p_\theta(x_{t-1}\mid x_t))\right]\bigg]\\
     &=-J_\mathrm{DM}(\theta, P_0)-d\log(\sqrt{2\pi}\sigma_1)-\mathbb{E}_{P_0(x)}\bigg[D_{\mathrm{KL}}\bigg[ q(x_T|x_0)\|p(x_T)\bigg]\bigg]
     \end{split}
 \end{equation}
 where $J_\mathrm{DM}(\theta, P_0)$ is the loss in \eqref{eqn:diffusion_loss} and $\sigma_1$ is the variance for the last reverse step in \eqref{eqn:backward_vdm}. Based on the relationship between KL divergence and cross-entropy, we have
 \begin{equation}
     \begin{split}
         D_{\mathrm{KL}}(P_0(x_0)\|P_{\theta}(x_0))&=- \mathbb{E}_{P_0(x)}\bigg[\log p_{\theta}(x)\bigg]+\mathbb{E}_{P_0(x)}\bigg[\log P_{0}(x)\bigg]\\
         &\leq J_\mathrm{DM}(\theta, P_0)+\mathbb{E}_{P_0(x)}\bigg[D_{\mathrm{KL}}\bigg[ q(x_T|x_0)\|p(x_T)\bigg]\bigg]+C_1
     \end{split}
 \end{equation}
 where $C_1=d\log(\sqrt{2\pi}\sigma_1)-H(P_0)$. Therefore, we get the bound in Lemma~\ref{KL_bound_diffusion} with $R(p', P_0)=\mathbb{E}_{P_0(x)}\bigg[D_{\mathrm{KL}}\bigg[ q(x_T|x_0)\|p(x_T)\bigg]\bigg]$.

 Next, we prove Lemma~\ref{KL_bound_diffusion} for VAEs. According to \cite{VAE_kingma2019introduction}, we can bound the the log-likelihood $\log p_\theta(x) $ by its EVIDENCE LOWER BOUND (ELBO):
\begin{equation}
    \log p_\theta(x) 
\geq D_{\mathrm{KL}}\!\big(q_\varphi(z \mid x) \,\|\, p(z)\big) - \mathbb{E}_{q_\varphi(z \mid x)} \!\big[ \log p_\theta(x \mid z) \big]
\end{equation}
Applying the relationship between KL divergence and cross-entropy, we have
\begin{align}
D_{\mathrm{KL}}\!\big(P_0(x) \,\|\, P_\theta(x)\big)
\leq \mathbb{E}_{P_0(x)} \mathbb{E}_{q_\varphi(z \mid x)} \!\big[ -\log p_\theta(x \mid z) \big] + \mathbb{E}_{P_0(x)}D_{\mathrm{KL}}\!\big(q_\varphi(z \mid x) \,\|\, p'(z)\big) - H(P_0(x))
\label{eq:elbo4}
\end{align}
Thus, we get the inclusive KL-divergence bound in Lemma~\ref{KL_bound_diffusion} for VAEs with  $J(\theta,P_0)=J_{\mathrm{VAE}}(\theta,P_0)=\mathbb{E}_{P_0(x)} \mathbb{E}_{q_\varphi(z \mid x)} \!\big[ -\log p_\theta(x \mid z) \big]$, $R(p',P_0)=\mathbb{E}_{P_0(x)}D_{\mathrm{KL}}\!\big(q_\varphi(z \mid x) \,\|\, p'(z)\big)$, and $C_1=- H(P_0(x))$.

\end{proof}

\subsection{Proof of Theorem \ref{thm:analysis}}\label{sec:proofconvergencemax}
\textbf{Theorem 1}.
  \textit{ Let $\theta^*$ be the optimal diffusion parameter that solves the inner maximization \eqref{eqn:objective} given a variable $w$. 
       If the expected score-matching loss is bounded as $J(\theta)\leq \bar{J}$ and the step size is chosen as $\eta\sim\mathcal{O}(\frac{1}{\sqrt{K}})$, the inner maximization error holds that    \begin{equation}
         \Delta':= \mathbb{E}_{x\sim P_{\theta^*}}[f(w,x)]-\mathbb{E}_k \mathbb{E}_{x\sim P_{\theta_k}}[f(w,x)]\leq \frac{1}{\sqrt{K}} \max\{\epsilon, \bar{J}\}\|\mu^{(1)}\|
       \end{equation}
       where the outer expectation is taken over the randomness of output selection.
       In addition, the inclusive KL-divergence with respect to the nominal distribution is bounded as
      \(\mathbb{E}_k[D_{\mathrm{KL}}(P_0||P_{\theta_k})]\leq \epsilon + \frac{\max\{\epsilon, \bar{J}\}|\mu_C-\mu^{(1)}|}{\sqrt{K} (\mu_C-\mu^*)}+ D_{\mathrm{KL}}(P_T||\pi)+C_1 \),
    where $\mu_C>\mu^*$ and $C_1$ are constants, $P_T$ is the output distribution of the forward process, and $\pi$ is the initial distribution of the reverse process.  }

\subsubsection{Convergence of Dual Gradient Descent}
To prove Theorem \ref{thm:analysis}, we need the convergence analysis of a general dual gradient descent in the following lemma. 
\begin{lemma}\label{lma:dual_convergence}
Assume that the dual variable is updated following
\[
\mu_{k+1}=\max\{\mu_k-\eta\cdot b_k,0\}
\]
where $\eta>0$ and $b_k$ has the same dimension as $\mu$ and $\max\{\cdot,0\}$ is an element-wise non-negativity operation.
With $\eta$ as the step size in dual gradient descent, given any $\mu>0$, we have
\begin{equation}
  \frac{1}{K}\sum_{k=1}^K\left\langle\mu_k-\mu, b_k\right\rangle\leq  \eta \frac{1}{K}\sum_{k=1}^K \|b_k\|^2 + \frac{1}{2K\eta}\|\mu-\mu^{(1)}\|^2
\end{equation}
\end{lemma}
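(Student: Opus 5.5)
The plan is the standard online (projected) gradient descent regret argument applied to the dual iterates. Projection onto the nonnegative orthant is nonexpansive, so the distance to any fixed feasible comparator $\mu$ decreases in a controlled way. The inner products $\langle \mu_k-\mu, b_k\rangle$ then appear as telescoping differences of squared distances.

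\textbf{Step 1 (one-step inequality).} Write the update as $\mu_{k+1}=\Pi_{\ge 0}(\mu_k-\eta b_k)$, where $\Pi_{\ge 0}$ is the element-wise operation $\max\{\cdot,0\}$, i.e.\ the Euclidean projection onto the closed convex set $\{\nu\ge 0\}$. Fix a comparator $\mu>0$, which lies in this set, so $\Pi_{\ge0}(\mu)=\mu$. Nonexpansiveness of the projection gives
\[
\|\mu_{k+1}-\mu\|^2\le \|\mu_k-\eta b_k-\mu\|^2=\|\mu_k-\mu\|^2-2\eta\langle \mu_k-\mu,b_k\rangle+\eta^2\|b_k\|^2 .
\]
Rearranging yields
\[
\langle \mu_k-\mu,b_k\rangle\le \frac{\|\mu_k-\mu\|^2-\|\mu_{k+1}-\mu\|^2}{2\eta}+\frac{\eta}{2}\|b_k\|^2 .
\]
Nonexpansiveness can be checked coordinatewise if one wants to be self-contained: for each coordinate $i$ with $\mu^i\ge0$, we have $|\max\{a,0\}-\mu^i|\le|a-\mu^i|$.

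\textbf{Step 2 (telescoping).} Sum the one-step inequality over $k=1,\dots,K$. The distance terms telescope to
\[
\frac{\|\mu_1-\mu\|^2-\|\mu_{K+1}-\mu\|^2}{2\eta}\le \frac{\|\mu-\mu^{(1)}\|^2}{2\eta},
\]
where we drop the nonpositive final term and identify $\mu_1=\mu^{(1)}$ as the initialization. Dividing by $K$ gives
\[
\frac1K\sum_{k=1}^K\langle \mu_k-\mu,b_k\rangle\le \frac{\eta}{2}\cdot\frac1K\sum_{k=1}^K\|b_k\|^2+\frac{1}{2K\eta}\|\mu-\mu^{(1)}\|^2 .
\]
Since $\eta/2\le\eta$, this is at least as strong as the stated bound.

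There is no real obstacle here. The only point requiring care is that the comparator must lie in the feasible set of the projection; the hypothesis $\mu>0$ guarantees this, and the inequality extends to $\mu\ge0$ by the same argument. One should also note the sign convention. In Algorithm~\ref{DAG-DRL} the update is $\mu\leftarrow\max\{0,\mu+\eta(J(\theta_k,P_0)-\epsilon)\}$, which matches the lemma with $b_k=\epsilon-J(\theta_k,P_0)$. This identification is how the lemma will be used in the proof of Theorem~\ref{thm:analysis}: combined with $J\le\bar J$, it gives $\|b_k\|\le\max\{\epsilon,\bar J\}$, and the choice $\eta\sim\mathcal O(1/\sqrt K)$ then balances the two terms.
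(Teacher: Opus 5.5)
Your proof is correct and follows essentially the same projected-gradient telescoping argument as the paper. The only difference is how the one-step inequality is obtained: you get it in one line from nonexpansiveness of the projection onto the orthant, while the paper derives it from the coordinatewise optimality condition of the projection combined with the three-point identity and Young's inequality. Both routes in fact give the constant $\frac{\eta}{2}$, which the paper relaxes to $\eta$. Your remark that the bound also holds for $\mu\ge 0$ is worth keeping, since the proof of Theorem~\ref{thm:analysis} applies the lemma with $\mu=0$.
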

\begin{proof}
For any dimension $j$ such that $\mu_{k,j}\geq \eta b_{k,j}$, we have $\mu_{k+1,j}=\mu_{k,j}-\eta\cdot b_{k,j}$, so it holds for any $\mu>0$ that
\begin{equation}\label{eqn:optimality_condition}
    \left(b_{k,j}+\frac{1}{\eta}(\mu_{k+1,j}-\mu_{k,j})\right)(\mu_j-\mu_{k+1,j})= 0
\end{equation}
For any dimension $j$ such that  $\mu_{k,j}< \eta b_{k,j}$, we have $\mu_{k+1,j}=0$, and it holds for any $\mu>0$ that  
\begin{equation}\label{eqn:optimality_condition_2}
   \left(b_{k,j}+\frac{1}{\eta}(\mu_{k+1,j}-\mu_{k,j})\right) (\mu_j-\mu_{k+1,j})=(b_{k,j}-\frac{\mu_{k,j}}{\eta})\mu_j\geq (b_{k,j}-\frac{\eta b_{k,j}}{\eta})\mu_j =0
\end{equation}
Combing \eqref{eqn:optimality_condition} and \eqref{eqn:optimality_condition_2} and we have for any $\mu>0$ that
\begin{equation}\label{eqn:optimality_condition_all}
    \left(b_{k}+\frac{1}{\eta}(\mu_{k+1}-\mu_{k})\right)^\top(\mu-\mu_{k+1})\geq 0
\end{equation}

Therefore, it holds for any $k\in[K]$ and $\mu>0$ that
\begin{equation}
\begin{split}
  (\mu_k-\mu)^\top b_k =& (\mu_k-\mu_{k+1})^\top b_k+(\mu_{k+1}-\mu)^\top b_k\\
  \leq &(\mu_k-\mu_{k+1})^\top b_k + \frac{1}{\eta}(\mu_{k+1}-\mu_k)^\top (\mu-\mu_{k+1})\\
  = & (\mu_k-\mu_{k+1})^\top b_k + \frac{1}{2\eta}\left(\|\mu-\mu_k\|^2-\|\mu-\mu_{k+1}\|^2-\|\mu_{k+1}-\mu_{k}\|^2\right)\\
  \leq& \eta\|b_k\|^2+\frac{1}{2\eta}\left(\|\mu-\mu_k\|^2-\|\mu-\mu_{k+1}\|^2\right)
\end{split}
\end{equation}
where the first inequality holds by \eqref{eqn:optimality_condition_all}, the second equality holds by three-point property, and the last inequality holds because $\|a\|^2+\|b\|^2\geq 2a^\top b$.

Taking the sum over $k\in [K]$, we have
\begin{equation}
    \sum_{k=1}^K\left\langle\mu_k-\mu, b_k\right\rangle\leq \eta\sum_{k=1}^K\|b_k\|^2+\frac{1}{2\eta}\left(\|\mu-\mu_{1}\|^2\right)
\end{equation}
which proves Lemma \ref{lma:dual_convergence}.
\end{proof}

\subsubsection{Bound of Expected Objective in Theorem \ref{thm:analysis}}
\begin{proof}
    Denote $F(\theta)=\mathbb{E}_{x\sim P_{\theta}}[f(w,x)]$ and the optimal parameter to solve the inner maximization of \eqref{eqn:objective} is $\theta^*$. Define the dual problem of the inner maximization of \eqref{eqn:objective} as
    \begin{equation}
        Q(\mu)= \max_{\theta} F(\theta)+\mu( \epsilon- J(\theta, S_0))
    \end{equation}
    Given any $\mu>0$, it holds be weak duality that
    \begin{equation}
        F(\theta^*)\leq F(\theta^*)+\mu( \epsilon- J(\theta^*, S_0)) \leq Q(\mu)
    \end{equation}
    where the second inequality holds because $Q(\mu)$ maximizes $F(\theta)+\mu( \epsilon- J(\theta, S_0))$ over $\theta$. 
    Thus, the average gap between the expected loss of $\theta^*$ and $\theta_k$ is
    \begin{equation}
    \begin{split}
    \frac{1}{K}\sum_{k=1}^K (F(\theta^*)-F(\theta_k))&\leq  \frac{1}{T}\sum_{t=1}^T (Q(\mu_k)-F(\theta_k))\\
    &=\frac{1}{K}\sum_{k=1}^K\mu_k( \epsilon- J(\theta_k, S_0))\\
    &\leq  \eta \frac{1}{K}\sum_{k=1}^K (\epsilon- J(\theta_k, S_0))^2 +  \frac{1}{2\eta} \frac{1}{K}\|\mu^{(1)}\|^2\\
    &\leq  \eta (\max\{\epsilon, \bar{J}\})^2 +  \frac{1}{\eta} \frac{1}{K}\|\mu^{(1)}\|^2
    \end{split}
    \end{equation} 
    where the equality holds because $\theta_k=\arg\max_{\theta} F(\theta)+\mu_k( \epsilon- J(\theta, S_0))$ by \eqref{eqn:lagrangian_objective} and so $Q(\mu_k)=F(\theta_k)+\mu_k( \epsilon- J(\theta_k, S_0))$, the second inequality holds by Lemma \ref{lma:dual_convergence} with the choice of $\mu=0$, and the last inequality holds by the assumption $J(\theta_k, S_0)\leq \bar{J}$.

    Choosing $\eta=\frac{\mu^{(1)}}{\sqrt{K}\max\{\epsilon,\bar{J}\}}$, it holds that
\begin{equation}
     \frac{1}{K}\sum_{k=1}^K (F(\theta^*)-F(\theta_k)) \leq \frac{1}{\sqrt{K}} \max\{\epsilon, \bar{J}\}\mu^{(1)}
\end{equation}
which proves the bound of the expected loss given the uniformly selected $k\in[K]$.

\subsubsection{Bound of KL Divergence in Theorem \ref{thm:analysis}}
For iteration $k$, denote $b(\theta)=\epsilon-J(\theta,S_0)$  and $b_k=\epsilon-J(\theta_k,S_0)$. Denote the constraint violation on the score matching loss at round $k$ as $v_k=J(\theta_k,S_0)-\epsilon$.
Denote the optimal dual variable as $\mu^*=\arg\min_{\mu} Q(\mu)$.
Choose a dual variable $\mu_C>\mu^*$, we have the following decomposition.
\begin{equation}\label{eqn:decomposition}
    \sum_{k=1}^K (\mu_k-\mu_C)b_k= \sum_{k=1}^K \mu_k\cdot b_k+\sum_{k=1}^K (-\mu_C)\cdot b_k
\end{equation}
For the first term, we have
\begin{equation}
\begin{split}
\sum_{k=1}^K \mu_k\cdot b_k &=\sum_{k=1}^K F(\theta_k) + \mu_k\cdot b_k -  F(\theta_k)= \sum_{k=1}^K Q(\mu_k) -  F(\theta_k)\\
&\geq K Q(\mu^*)- \sum_{k=1}^K F(\theta_k)\\
&\geq K Q(\mu^*)- \sum_{k=1}^K \max_{\theta\in \{\theta\mid J(\theta,S_0)\leq \epsilon+v_k\}}F(\theta)\\
&\geq K Q(\mu^*)- \sum_{k=1}^K \max_{\theta}(F(\theta)+\mu^* (\epsilon+v_k-J(\theta,S_0)))\\
&= K Q(\mu^*)-KQ(\mu^*) -\mu^*\sum_{k=1}^K v_k = -\mu^*\sum_{k=1}^K v_k
\end{split}
\end{equation}
where the first inequality is because $\mu^*$ minimizes $Q(\mu_k)$, the second inequality holds because $\theta_k\in\{\theta\mid J(\theta,S_0)\leq \epsilon+v_k\}$, the third inequality holds by weak duality for $\mu^*$. Continuing with \eqref{eqn:decomposition}, we have
\begin{equation}\label{eqn:convergence_inner_proof_5}
    \sum_{k=1}^K (\mu_k-\mu_C)b_k\geq \sum_{k=1}^K (-\mu^*v_k +\mu_C (J(\theta_k,S_0)-\epsilon))= (-\mu^* +\mu_C )\sum_{k=1}^K v_k
\end{equation}
where the equality holds because $v_k= J(\theta_k,S_0)-\epsilon$.

By Lemma \ref{lma:dual_convergence} with the choice of $\mu=\mu_C$, we have
\begin{equation}
 \frac{1}{K}\sum_{k=1}^K (\mu_k-\mu_C)b_k \leq \eta \frac{1}{K}\sum_{k=1}^K \|b_k\|^2 + \frac{1}{2K\eta}(\mu_C-\mu^{(1)})^2\leq \eta  (\max\{\epsilon,\bar{J}\})^2 + \frac{1}{2K\eta}(\mu_C-\mu^{(1)})^2
\end{equation}
If the step size is chosen as $\eta=\frac{\mu^{(1)}}{\sqrt{K}\max\{\epsilon,\bar{J}\}}$, it holds that
\begin{equation}
 \frac{1}{K}\sum_{k=1}^K (\mu_k-\mu_C)b_k\leq \frac{1}{\sqrt{K}} \max\{\epsilon, \bar{J}\}\left(\mu^{(1)}+\frac{|\mu_C-\mu^{(1)}|^2}{2\mu^{(1)}}\right)
\end{equation}
Since $\mu_C$ is larger than $\mu^*$, by \eqref{eqn:convergence_inner_proof_5}, we have 
\begin{equation}
 \frac{1}{K}\sum_{k=1}^K v_k\leq \frac{1}{K(\mu_C-\mu^*)} \sum_{k=1}^K (\mu_k-\mu_C)b_k\leq \frac{\max\{\epsilon, \bar{J}\}\left(\mu^{(1)}+\frac{|\mu_C-\mu^{(1)}|
 ^2}{2\mu^{(1)}}\right)}{\sqrt{K} (\mu_C-\mu^*)}=\frac{C_4}{\sqrt{K}}
\end{equation}
which means $\frac{1}{K}\sum_{k=1}^K J(\theta_k,S_0)\leq \epsilon + \frac{C_4}{\sqrt{K}}$.

Since $J(\theta_k,S_0)$ is the denoising score matching loss $J_{DSM}(\theta,r(\cdot)^2)$, by Lemma \ref{KL_bound_diffusion}, we complete the proof by 
\begin{equation}
\begin{split}
   \frac{1}{K}\sum_{k=1}^K D_{\mathrm{KL}}(P_0,P_{\theta_k})&\leq  \frac{1}{K}\sum_{k=1}^K J(\theta_k,S_0)+D_{\mathrm{KL}}(P_T\|\pi)+C_1\\
   &\leq \epsilon + \frac{C_4}{\sqrt{K}}+ D_{\mathrm{KL}}(P_T||\pi)+C_1
\end{split}
\end{equation}
where $ C_1
   = \int_{t=0}^T r(t)^2 \left(\mathbb{E}_{P_{0,t}(x_0,x_t)}\left[\frac{1}{2}\|\nabla_{x_t}\log P_t(x_t)\|_2^2-\frac{1}{2}\|\nabla_{x_t} \log P_{t\mid 0}(x_t \mid x_0)\|_2^2\right]\right)\mathrm{d}t$,  $C_4=\frac{\max\{\epsilon, \bar{J}\}\left(\mu^{(1)}+\frac{|\mu_C-\mu^{(1)}|
 ^2}{2\mu^{(1)}}\right)}{(\mu_C-\mu^*)}$ with $\mu_C>\mu^*$.
\end{proof}

\subsection{Proof of Theorem \ref{thm:convergence_minmax}}
\note{Theorem~\ref{thm:analysis} relies on the following assumptions.
\begin{assumption}
    The objective function $f(w,x)$ is $L-$Lipschitz with respect to $w$, i.e. for any $x\in \mathcal{X}$, $w,w'\in\mathcal{W}$, it holds that 
    \[
    |f(w,x)-f(w',x)|\leq L\cdot \|w-w'\|_2.
    \]
    Moreover, $f(w,x)$ is bounded by $\bar{f}$, i.e. for any $x\in \mathcal{X}$, $w\in\mathcal{W}$, $\|f(w,x)\|\leq \bar{f}$.
\end{assumption}}

\note{
\begin{assumption}
    The objective function $f(w,x)$ is $\beta-$smooth with respect to $w$, i.e. for any $x\in \mathcal{X}$, $w,w'\in\mathcal{W}$, it holds that 
    \[
    \|\nabla_w f(w,x)-\nabla_w f(w',x)\|_2\leq \beta\cdot \|w-w'\|_2.
    \]
\end{assumption}}

\label{sec:proofconvergenceminmax}
\textbf{Theorem \ref{thm:convergence_minmax}.}
\textit{
   Assume that the DRO objective $f(w,x)$ is $\beta-$smooth and $L-$Lipschitz with respect to $w$ and is upper bounded by $\bar{f}$. If each sampled dataset $S_j$ from diffusion model has $n$ samples and the step size for minimization is chosen as $\lambda\sim\mathcal{O}(\sqrt{\frac{1}{\beta L^2 H}})$, then with probability $1-\delta, \delta\in(0,1)$, the average Moreau envelope of the optimal inner maximization function $\phi(w):= \max_{\theta}\mathbb{E}_{P_{\theta}}[f(w,x)]$ satisfies
   \begin{equation}
\begin{split}
\mathbb{E}\left[\|\nabla\phi_{\frac{1}{2\beta}}(w)\|^2\right]
\leq 4\beta \Delta'+\frac{V_1}{\sqrt{n}}+\frac{V_2}{\sqrt{H}}
\end{split}
\end{equation}
where $w$ is the output ML parameter by Algorithm~\ref{DAG-DRL}, $\phi_{\frac{1}{2\beta}}(w):=\min_{w'}\phi(w')+\beta\|w-w'\|^2$ is the Moreau envelope of $\phi$, $\Delta'$ is the error of inner maximization bounded in Theorem \ref{thm:analysis}, $V_1=8\beta \bar{f}\sqrt{\log(2/\delta)}$ and $V_2=4L\sqrt{(\phi_{\frac{1}{2\beta}}(w_{1})-\min_w\phi(w))\beta}$.
}
\begin{proof}
The proof of Theorem \ref{thm:convergence_minmax} follows the techniques of  \cite{nonconvex_nonconcave_mini_max_jin2020local} with the difference that the maximization oracle is an empirical approximation. 
The optimization variable $w$ is updated as $w_j=w_{j-1}-\lambda\cdot \nabla_w \mathbb{E}_{x\in S_j}[f(w_{j-1},x)]$.
Define $\phi(w):= \max_{\theta}\mathbb{E}_{P_{\theta}}[f(w,x)]$ as the inner maximization function and define $\phi_{\rho}(w):=\min_{w'}\phi(w')+\frac{1}{2\rho}\|w-w'\|^2$ as the Moreau envelope of $\phi$. 

Denote $\hat{w}_j=\arg\min_w \phi(w)+\beta \|w-w_j\|^2$ as the proximal point of the Moreau envelope $\phi_{\frac{1}{2\beta}}(w)$.  Since $f$ is $\beta-$smooth, we have
\begin{equation}\label{eqn:smoothness}
  f(\hat{w}_{j},x) \geq f(w_j,x)+\left\langle\nabla_wf(w_j,x),\hat{w}_{j}-w_j\right\rangle-\frac{\beta}{2}\|\hat{w}_{j}-w_j\|^2
\end{equation}
Thus, it holds with probability at least $1-\delta, \delta\in (0,1)$ that
\begin{equation}\label{eqn:bound_phi}
\begin{split}
    \phi(\hat{w}_j)&\geq \mathbb{E}_{P_{\theta_j}}[f(\hat{w}_j,x)]\geq \mathbb{E}_{S_{j}}[f(\hat{w}_{j},x)] - \frac{\bar{f}\sqrt{\log(2/\delta)}}{\sqrt{n}}\\
    &\geq \mathbb{E}_{S_{j}}[f(w_j,x)]+\left\langle g(w_j,S_j),\hat{w}_{j}-w_j\right\rangle-\frac{\beta}{2}\|\hat{w}_{j}-w_j\|^2- \frac{\bar{f}\sqrt{\log(2/\delta)}}{\sqrt{n}}\\
    &\geq \mathbb{E}_{P_{\theta_j}}[f(w_j,x)]+\left\langle g(w_j,S_j),\hat{w}_{j}-w_j\right\rangle-\frac{\beta}{2}\|\hat{w}_{j}-w_j\|^2- \frac{2\bar{f}\sqrt{\log(2/\delta)}}{\sqrt{n}}\\
    &\geq \phi(w_j)-\Delta'_j +\left\langle g(w_j,S_j),\hat{w}_{j}-w_j\right\rangle-\frac{\beta}{2}\|\hat{w}_{j}-w_j\|^2- \frac{2\bar{f}\sqrt{\log(2/\delta)}}{\sqrt{n}}
\end{split}
\end{equation}
where the second inequality holds by applying McDiarmid's inequality on $\mathbb{E}_{P_{\theta_j}}[f(\hat{w}_j,x)]$ and $\bar{f}$ is the upper bound of $f$, the third inequality holds by \eqref{eqn:smoothness} and $g(w_j,S_j)=\mathbb{E}_{S_j}[\nabla_w f(w_j,x)]$, the forth inequality holds by applying McDiarmid's inequality on $\mathbb{E}_{S_{j}}[f(w_j,x)]$, and the last inequality holds by Theorem \ref{thm:analysis}. 

Now, it holds that
\begin{equation}\label{eqn:proofconvergence_3}
\begin{split}
    &\phi_{\frac{1}{2\beta}}(w_{j+1})\\
    =&\min_{w'}\phi(w')+\beta\|w_{j+1}-w'\|^2\\
    \leq &\phi(\hat{w}_j)+\beta\|w_{j}-\lambda g(w_{j},S_j)-\hat{w}_j\|^2\\
    =&\phi(\hat{w}_j)+\beta\|w_{j}-\hat{w}_j\|^2+2\beta\lambda <g(w_{j},S_j),\hat{w}_j-w_j>+\lambda^2\beta\|g(w_{j},S_j)\|^2 \\
    \leq& \phi_{\frac{1}{2\beta}}(w_{j})+2\beta\lambda <g(w_{j},S_j),\hat{w}_j-w_j>+\lambda^2\beta\|g(w_{j},S_j)\|^2 \\
    \leq& \phi_{\frac{1}{2\beta}}(w_{j})+2\beta\lambda \left( \phi(\hat{w}_j)-\phi(w_j)+\Delta'+\frac{\beta}{2}\|\hat{w}_{j}-w_j\|^2+\frac{2\bar{f}\sqrt{\log(2/\delta)}}{\sqrt{n}}\right)+\lambda^2\beta L^2
\end{split}
\end{equation}
where the last inequality holds by \eqref{eqn:bound_phi}.

Summing up inequality \eqref{eqn:proofconvergence_3} from $j=1$ to $j=H$, we have
\begin{equation}
\begin{split}
&\phi_{\frac{1}{2\beta}}(w_{j+1})\\
\leq &\phi_{\frac{1}{2\beta}}(w_{1})+2\beta\lambda \sum_{j=1}^H\left( \phi(\hat{w}_j)-\phi(w_j)+\Delta'_j+\frac{\beta}{2}\|\hat{w}_{j}-w_j\|^2+\frac{2\bar{f}\sqrt{\log(2/\delta)}}{\sqrt{n}}\right)+\lambda^2\beta L^2H
\end{split}
\end{equation} 
and we further have
\begin{equation}
\begin{split}
&\frac{1}{H}\sum_{j=1}^H\left( \phi(w_j)-\phi(\hat{w}_j)-\frac{\beta}{2}\|\hat{w}_{j}-w_j\|^2\right)\\
\leq& \Delta'_j+\frac{2\bar{f}\sqrt{\log(2/\delta)}}{\sqrt{n}}+\frac{\phi_{\frac{1}{2\beta}}(w_{1})-\min_w\phi(w)}{2\beta\lambda H}+\frac{\lambda L^2}{2}
\end{split}
\end{equation} 
Also, it holds that 
\begin{equation}
\begin{split}
  &\phi(w_j)-\phi(\hat{w}_j)-\frac{\beta}{2}\|\hat{w}_{j}-w_j\|^2\\
= & \phi(w_j)+\beta \|w_j-w_j\|^2-\phi(\hat{w}_j)-\beta\|\hat{w}_{j}-w_j\|^2+\frac{\beta}{2}\|\hat{w}_{j}-w_j\|^2\\
\geq & \frac{\beta}{2}\|\hat{w}_{j}-w_j\|^2=\frac{1}{4\beta}\|\nabla\phi_{\frac{1}{2\beta}}(w_j)\|^2
\end{split}
\end{equation}
where the inequality holds by the definition of $\hat{w}_j$, and the last equality holds by the property of Moreau envelope such that $\frac{1}{2\beta}\nabla\phi_{\frac{1}{2\beta}}(w)=w-\hat{w}_j$ for any $w\in\mathcal{W}$. 
Therefore, we have 
\begin{equation}
\begin{split}
&\frac{1}{H}\sum_{j=1}^H \|\nabla\phi_{\frac{1}{2\beta}}(w_j)\|^2\\
\leq& 4\beta \frac{1}{H}\sum_{j=1}^H\Delta'_j+\frac{8\beta \bar{f}\sqrt{\log(2/\delta)}}{\sqrt{n}}+\frac{2(\phi_{\frac{1}{2\beta}}(w_{1})-\min_w\phi(w))}{\lambda H}+2\beta\lambda L^2
\end{split}
\end{equation} 
By optimally choosing $\lambda=\sqrt{\frac{\phi_{\frac{1}{2\beta}}(w_{1})-\min_w\phi(w)}{\beta L^2 H}}$, we have
\begin{equation}
\begin{split}
&\mathbb{E}\left[\|\nabla\phi_{\frac{1}{2\beta}}(w_j)\|^2\right]\\
\leq& 4\beta \Delta'+\frac{8\beta \bar{f}\sqrt{\log(2/\delta)}}{\sqrt{n}}+4L\sqrt{\frac{(\phi_{\frac{1}{2\beta}}(w_{1})-\min_w\phi(w))\beta}{H}}
\end{split}
\end{equation}

\end{proof}

\subsubsection{Explanation of Convergence by Moreau envelope}\label{sec:Moreau_envelop}
The gradient bound of Moreau envelope indicates that the algorithm converges to an approximately stationary point, which is explained as below. 
The Moreau envelope $\nabla\phi_{\frac{1}{2\beta}}(w_j)$ satisfies 
$\nabla\phi_{\frac{1}{2\beta}}(w_j)=2\beta\cdot (w_j-\hat{w}_j)$. Since the proximal point is $\hat{w}_j=\arg\min_w \phi(w)+\beta \|w-w_j\|^2$, we have $\nabla\phi(\hat{w}_j)+2\beta (\hat{w}_j-w_j)=0$. Thus, it holds that $\nabla\phi_{\frac{1}{2\beta}}(w_j)=2\beta\cdot (w_j-\hat{w}_j)=\nabla\phi(\hat{w}_j)$ and $\|\hat{w}_j-w_j\|=\|\frac{1}{2\beta}\nabla\phi(\hat{w}_j)\|=\frac{1}{2\beta}\|\nabla\phi_{\frac{1}{2\beta}}(w_j)\|$.
Therefore, if the gradient of Moreau envelope is bounded for the decision variable $w$, i.e. $\|\nabla\phi_{\frac{1}{2\beta}}(w)\|\leq \Delta$, its proximal point $\hat{w}$ is an approximately stationary point for the optimal inner maximization function $\phi$ (bounded gradient of the inner maximization function $\|\nabla\phi(\hat{w})\|\leq \Delta$), and the distance between $w$ and $\hat{w}$ is close enough: 
$\|\hat{w}_j-w_j\|=\frac{1}{2\beta}\|\nabla\phi_{\frac{1}{2\beta}}(w_j)\|\leq \frac{\Delta}{2\beta}$. Thus, Algorithm \ref{DAG-DRL} approximately converges.

\subsection{Analysis on the effects of the capacity of generative models}\label{sec:expressivity}
\begin{corollary}\label{cor:expressive_inner}
Let $\tilde{P}=\arg\max_{\mathcal{P}}\mathbb{E}_{x \sim P}[f(w, x)]$ be the worst-case distribution from the set of possible testing distributions $\mathcal{P}$ given a variable $w$.
Assume that the generative model $P_{\theta}$ is $\Gamma-$expressive (Definition \ref{asm:expressive}), and $\epsilon$ is large enough such that there exists a $\tilde{\theta}$ which satisfies $J(\tilde{\theta},P_0)\leq \epsilon$ in \eqref{eqn:objective} and $D_{\mathrm{W}}(P_{\tilde{\theta}},\tilde{P})\leq \Gamma$.  
       If the objective function $f(w,x)$ is $L_x-$Lipschitz continuous with respect to $x$, the expected score-matching loss is bounded as $J(\theta)\leq \bar{J}$ and the step size is chosen as $\eta\sim\mathcal{O}(\frac{1}{\sqrt{K}})$, the overall error of inner maximization is    \begin{equation}\label{eqn:overall_error}
         \tilde{\Delta}':= \mathbb{E}_{x\sim \tilde{P}}[f(w,x)]-\mathbb{E}_k \mathbb{E}_{x\sim P_{\theta_k}}[f(w,x)]\leq \frac{1}{\sqrt{K}} \max\{\epsilon, \bar{J}\}\|\mu^{(1)}\|+L_x\Gamma
       \end{equation}
       where $L_x$ is the Lipschitz continuity parameter of $f(w,x)$ with respect to $x$.
       In addition, the inclusive KL-divergence with respect to the nominal distribution is bounded as
\begin{equation}
      \mathbb{E}_k[D_{\mathrm{KL}}(P_0||P_{\theta_k})]\leq \epsilon + \frac{\max\{\epsilon, \bar{J}\}|\mu_C-\mu^{(1)}|}{\sqrt{K} (\mu_C-\mu^*)}+ D_{\mathrm{KL}}(P_T||\pi)+C_1 ,
    \end{equation}
    where $\mu^{(1)}$ is initialized dual variable, $\mu_C>\mu^*$ and $C_1$ are constants, $P_T$ is the output distribution of the forward process, and $\pi$ is the initial distribution of the reverse process. 
\end{corollary}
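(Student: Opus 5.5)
The plan is to split the overall error $\tilde{\Delta}'$ into a representation-gap term and the inner-maximization error already controlled by Theorem~\ref{thm:analysis}. The pivot is the constrained maximizer $\theta^*$ of the inner problem in \eqref{eqn:objective}, together with the feasible parameter $\tilde{\theta}$ provided by the hypothesis. Concretely, we write
\begin{equation*}
\tilde{\Delta}' = \Big(\mathbb{E}_{x\sim \tilde{P}}[f(w,x)]-\mathbb{E}_{x\sim P_{\tilde{\theta}}}[f(w,x)]\Big) + \Big(\mathbb{E}_{x\sim P_{\tilde{\theta}}}[f(w,x)]-\mathbb{E}_{x\sim P_{\theta^*}}[f(w,x)]\Big) + \Delta',
\end{equation*}
where $\Delta'$ is the quantity in \eqref{eqn:adversarial_stren}. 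We then bound the three brackets separately.

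\emph{First bracket.} Since $f(w,\cdot)$ is $L_x$-Lipschitz in $x$, Kantorovich--Rubinstein duality gives
\[
\big|\mathbb{E}_{\tilde P}[f(w,x)]-\mathbb{E}_{P_{\tilde\theta}}[f(w,x)]\big|\le L_x D_{\mathrm{W}}(P_{\tilde\theta},\tilde P)\le L_x\Gamma .
\]
Here $D_{\mathrm{W}}$ must be read as the $1$-Wasserstein metric, which is the natural reading of Definition~\ref{asm:expressive}. If a higher-order $W_p$ were intended, the same bound still holds because $W_1\le W_p$.

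\emph{Second bracket.} By hypothesis $J(\tilde\theta,P_0)\le\epsilon$, so $\tilde\theta$ is feasible for the inner constrained maximization. Since $\theta^*$ is optimal over that feasible set, $\mathbb{E}_{P_{\tilde\theta}}[f]\le \mathbb{E}_{P_{\theta^*}}[f]$, and the second bracket is nonpositive.

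\emph{Third bracket.} The hypotheses $J\le\bar J$ and $\eta\sim\mathcal{O}(1/\sqrt K)$ are exactly those of Theorem~\ref{thm:analysis}. That theorem gives $\Delta'\le \frac{1}{\sqrt K}\max\{\epsilon,\bar J\}\|\mu^{(1)}\|$. Summing the three bounds yields \eqref{eqn:overall_error}.

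The inclusive-KL statement in the corollary does not involve $\tilde P$ or $\Gamma$ at all. It concerns only the iterates $\theta_k$ produced by \dagalg under the same step size and boundedness assumptions. We will therefore quote \eqref{eqn:consistency} from Theorem~\ref{thm:analysis} verbatim, identifying the prior matching term $R(\pi,P_0)$ with $D_{\mathrm{KL}}(P_T\|\pi)$ as in the diffusion instantiation of Lemma~\ref{KL_bound_diffusion}.

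The main obstacle is a bookkeeping one in the second step. The optimality of $\theta^*$ must be taken over the \emph{same} parameter class $\Theta$ that contains $\tilde\theta$; otherwise the comparison $\mathbb{E}_{P_{\tilde\theta}}[f]\le\mathbb{E}_{P_{\theta^*}}[f]$ fails. The assumption on $\epsilon$ in the statement is what guarantees this. Since $\theta^*$ is defined as the maximizer of \eqref{eqn:objective} over the feasible set, there is nothing further to verify once feasibility of $\tilde\theta$ is used.
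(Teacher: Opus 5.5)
Your proposal is correct and follows the paper's proof essentially step for step. The paper likewise uses feasibility of $\tilde{\theta}$ to get $\mathbb{E}_{x\sim P_{\theta^*}}[f(w,x)]\geq \mathbb{E}_{x\sim P_{\tilde{\theta}}}[f(w,x)]$, applies Kantorovich--Rubinstein duality for the $L_x\Gamma$ term, and quotes Theorem~\ref{thm:analysis} for both the $\Delta'$ bound and the KL bound; your explicit three-term telescoping and the $W_1\le W_p$ remark just make the same argument slightly more careful.
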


\begin{proof}
  Let $\theta^*$ be the optimal diffusion parameter that solves the inner maximization \eqref{eqn:objective} given a variable $w$. By Theorem~\ref{thm:analysis}, the inner maximization error holds that    \begin{equation}\label{eqn:theorem_1_Result}
         \Delta':= \mathbb{E}_{x\sim P_{\theta^*}}[f(w,x)]-\mathbb{E}_k \mathbb{E}_{x\sim P_{\theta_k}}[f(w,x)]\leq \frac{1}{\sqrt{K}} \max\{\epsilon, \bar{J}\}\|\mu^{(1)}\|
       \end{equation}
       where the outer expectation is taken over the randomness of output selection.
       In addition, the \emph{inclusive} KL-divergence with respect to the nominal distribution is bounded as
\begin{equation}
     \mathbb{E}_k[D_{\mathrm{KL}}(P_0||P_{\theta_k})]\leq \epsilon + \frac{\max\{\epsilon, \bar{J}\}|\mu_C-\mu^{(1)}|}{\sqrt{K} (\mu_C-\mu^*)}+ R(\pi,P_0)+C_1.
 \end{equation}
 That means the KL divergence with respect to the nominal distribution $P_0$ is bounded in the same way as Theorem~\ref{thm:analysis}. 
 
 It remains to bound the overall error in \eqref{eqn:overall_error}.
 Since $\tilde{\theta}$ satisfies $J(\tilde{\theta},P_0)\leq \epsilon$, it is within the ambiguity set defined in \eqref{eqn:objective}, so it holds that 
 \begin{equation}
     \mathbb{E}_{x\sim P_{\theta^*}}[f(w,x)]\geq \mathbb{E}_{x\sim P_{\tilde{\theta}}}[f(w,x)].
 \end{equation}
 Moreover, since $\tilde{\theta}$ satisfies $D_{\mathrm{W}}(P_{\tilde{\theta}},\tilde{P})\leq \Gamma$, it holds by Kantorovich-Rubinstei Lemma (\cite{Kantorovich_lemma_kantorovich1958space}, Theorem 3.2 in \cite{DRO_mohajerin2018data}) that
  \begin{equation}
     \mathbb{E}_{x\sim \tilde{P}}[f(w,x)]-\mathbb{E}_{x\sim P_{\tilde{\theta}}}[f(w,x)]\leq L_x\Gamma.
 \end{equation}
 Combing the above two inequalities with the inner maximization error in \eqref{eqn:theorem_1_Result}, we can bound the overall maximization error $\tilde{\Delta}'$ in \eqref{eqn:overall_error}.
\end{proof}

\begin{remark}
Corollary~\ref{cor:expressive_inner} shows the effect of the expressive capacity of the generative model on the performance of \ouralg. When the expressivity parameter $\Gamma$ is smaller, the generative model has a stronger ability to approximate the possible testing distributions. In this case, the inner maximization in \ouralg can identify adversarial distributions whose objective value is closer to that of the true worst-case distribution $\tilde{P}$. Consequently, by optimizing the objective parameter $w$ with respect to the parameterized adversarial distribution, \ouralg can achieve better robustness under the true worst-case distribution in the testing distribution set $\mathcal{P}$.
If the expressivity parameter of the generative model is $\Gamma = 0$, the generative model can approximate the possible testing distributions arbitrarily well. In this ideal case, the overall error bound of the inner maximization coincides with the inner maximization error bound in Theorem~\ref{thm:analysis}, which asymptotically converges to zero as the number of inner iterations $K$ increases.
\end{remark}

\section{Experiments On Time Series Prediction}\label{app:exp}
     \subsection{Datasets}\label{sec:datasets}
    The experiments are conducted based on  \cite{electricitymaps_platform}, a widely utilized global platform that provides high-resolution spatio-temporal data on electricity system operations, including carbon intensity (gCO$_2$eq/kWh) and energy mix, and is actively employed for carbon-aware scheduling and carbon footprint estimation in real systems such as data centers.

    We utilize datasets from Electricity Maps that record hourly electricity carbon intensity over the period 2021$\sim$2024 across four representative regions: California, United States (\textbf{BANC\_21$\sim$24}), Texas, United States (\textbf{ERCO\_21$\sim$24}), Queensland, Australia (\textbf{QLD\_21$\sim$24}), and the United Kingdom (\textbf{GB\_21$\sim$24}). These datasets capture fine-grained temporal variations in carbon intensity (measured in gCO$_2$eq/kWh) arising from different energy mixes in diverse geographical and regulatory contexts, thereby providing a comprehensive benchmark for evaluating carbon-aware forecasting models. For model training, we construct a dataset by merging \textbf{BANC\_23} and \textbf{BANC\_24}, resulting in 438 sequence samples. Model evaluation is then performed on multiple independent test sets, each consisting of 312 sequence samples drawn from other years and regions to ensure heterogeneous and challenging testing scenarios. To quantify the distributional shift between the training and test sets, we compute the Wasserstein distance, which provides a principled measure of discrepancy between probability distributions. The calculated distances are reported alongside the dataset names in Table~\ref{main_table_full}.
    
    \subsection{Baselines}\label{sec:baselines}
    The baselines which are compared with \ouralg in our experiments are introduced as below.
    
    \textbf{\ml}: This method trains the standard ML to minimize the time series forecasting error without \dro. 

    \textbf{\dml}: \dml represents an \ml model fine-tuned with diffusion-generated augmented datasets. 
    Compared to \ouralg, \dml performs standard training based on the augmented datasets rather than a distributionally robust training. 
        
    \textbf{\wdro}: \wdro represents a Wasserstein-based \dro framework, in which the ambiguity set is characterized by the Wasserstein metric. For our experiments, we employ the FWDRO algorithm proposed in \cite{staib2017distributionally}, which transforms the inner maximization of \wdro into an adversarial optimization with a mixed norm ball and then alternatively solves the adversarial examples and the \ml weights.
        
    \textbf{\kldro}: \kldro represents a KL-divergence-based \dro framework, in which the ambiguity set is characterized by the \kl- divergence. We employ the standard \kldro solution derived in \cite{KL-DRO_hu2013kullback}, which is commonly adopted in practice.

    \textbf{\dragen}: \dragen represents a \wdro framework, in which the ambiguity set is constructed as a Wasserstein ball in the latent space of a learned generative environment model(\vae) \cite{DRL_environment_generation_ren2022distributionally}. 
    
    \textbf{\pdro}: \pdro represents a DRO framework that parameterizes the adversarial distributions using a neural generative model within a KL-divergence--based ambiguity set.~\cite{michelmodeling}

    \subsection{Training Setups} \label{sec:training_setups_append}
        
        \textbf{Predictor}: The predictors in \ouralg and all the baselines share the same two-layer stacked LSTM architecture with 128 and 64 hidden neurons.

        \textbf{Diffusion Model}: The diffusion models in \ouralg and \pdro are both \ddpm \cite{NEURIPS2020_4c5bcfec} which has $T = 500$ steps in a forward or a backward process.  The U-Net architecture that the \ddpm used consists of four downsampling layers with channel sizes of 128, 256, 256, and 256, respectively.

        \textbf{\vae}: The VAE model used for \dragen is built from an encoder–decoder architecture in which both components contain two convolutional layers with 16 channels followed by two MLP layers. The encoder progressively downsamples the input image through stacked convolutions and outputs the mean and log-variance of a 2-dimensional latent distribution, after which a reparameterization step is applied. The decoder then maps the latent variables back to the image space through an MLP projection followed by successive upsampling and convolutional layers.

        \textbf{Training}: For \ouralg, we adopt the PPO-based  reformulation in \eqref{eqn:ppo} for inner maximization. We train the reference \ddpm $\theta_0$ in \eqref{eqn:ppo} based on the original training dataset \textbf{BANC\_2324} ( \textbf{BANC\_23} \& \textbf{BANC\_24}) and use it to generate an initial dataset $\zinit$ to calculate $r_{\theta}$ in \eqref{eqn:ppo}. The sampling variance of \ddpm is chosen from a range $[0.1, 0.5]$. To improve training efficiency, only the last $T'=15$ backward steps of the \ddpm model are fine-tuned by \eqref{eqn:ppo}. We choose a slightly higher clipping parameter $\kappa=0.4$ in \eqref{eqn:ppo} to encourage the maximization while maintaining stability. We choose $\epsilon=0.015$ as \dagalg's adversarial budget which gives the best average performance over all validation datasets. We choose $\eta = 0.01$ as the rate to update the Lagrangian parameter $\mu$ in Algorithm \ref{DAG-DRL}. We use the Adam optimizer with a learning rate of $1 \times10^{-5}$ for both the diffusion training in the maximization and the predictor update in minimization. The diffusion model is trained for 10 inner epochs with a batch size of 64.  The predictor is trained for 15 epochs with a batch size of 64.

        For the baseline methods, we choose the same neural network architecture as \ouralg. We carefully tuned the hyperparameters of the baseline algorithms to achieve optimal average performance over all validation datasets. For \wdro, we consider the Wasserstein distance with respect to $l_2-$norm and set the adversarial budget as $\epsilon = 0.3$ with a learning rate of $5 \times 10^{-4}$. For \kldro, we choose the adversarial budget $\epsilon = 4$ with a learning rate of $5 \times 10^{-4}$.
        For \dragen, we adopt the Wasserstein distance with respect to $l_2-$norm and set the adversarial budget as $\epsilon = 2$ with a learning rate of $5 \times 10^{-5}$.  It is worth noting that, we pre-trained a well-performing \vae for \dragen with a learning rate of $1 \times 10^{-2}$, which achieved an average Wasserstein distance of $0.0103$ on data reconstruction over the training set. For \pdro, its generative model uses the exact same DDPM architecture as \ouralg. We set the temperature as $\tau=0.1$ and set the learning rate as $1 \times 10^{-5}$ for the inner maximization and the outer minimization. In addition, \pdro maintains a window of consecutive minibatches for updates of the Normalizer, and the window length is set as 5. Finally, same as \ouralg, the diffusion model in \pdro is trained for 10 inner epochs with a batch size of 64 while the predictor is trained for 15 epochs with a batch size of 64.

\subsection{Experimental Results on Time Series Prediction} \label{sec:more_test}
\subsubsection{Mian Results}
  We tune the hyperparameters of all methods based on validation datasets and present their best performance  in Table~\ref{main_table_full} where all the datasets are \ood testing datasets with different discrepancies. 

 We can find that all \dro methods improve the testing performance for \ood datasets comparing to \ml by optimizing the worst-case expected loss while \dml improves upon \ml by leveraging diffusion-generated augmented datasets. Using \ml as the reference baseline, \ouralg achieves the highest performance gain of 63.7\%, followed by \dragen, \pdro and \dml with improvements of 48.9\%, 42.5\% and 39.7\%, respectively. In contrast, \kldro and \wdro yield improvements of 36.1\% and 24.0\%, respectively.

    Notably, \dml surpasses \kldro by 3.6\% only after augmentation training with diffusion-generated datasets, highlighting the significant positive impact of diffusion model. Moreover, \dml differs from \ouralg only in that it disables the \dro component, while their training procedures remain identical. Thus, \ouralg, \dml, and \ml together form two ablation studies: the performance gap between \ouralg and \dml confirms that \dro contributes a 39.7\% performance gain to \ouralg, while the gap between \dml and \ml verifies that the diffusion model also provides a 39.7\% performance gain.

  Although both \dml and the other four \dro methods deliver noticeable performance gains, \ouralg still outperforms them by an average margin of 25.4\% and by 27.8\% in the worst case. The underlying reason may lie in the fact that \ouralg leverages the distribution learning capability of the diffusion model when constructing the ambiguity set, enabling the generation of adversarial distributions that are both strong and realistic, thereby achieving superior \ood generalization. 
    
    In contrast, although \dragen also employs a generative model to construct its ambiguity set, it still relies on relaxing Wasserstein DRO framework which is limited in overcoming the inherent limitations of \wdro. In addition, \pdro, as a DRO algorithm that uses generative models to model the adversarial distributions, remains restricted by the KL-divergence-based ambiguity set, preventing it from modeling diverse worst-case distributions.  By contrast, \ouralg directly defines the ambiguity set on the parameterized space of generative models, enabling it to generate worst-case distributions that are both diverse and consistent with the nominal distribution, thereby offering a principled and tractable DRO algorithm.
    
    Furthermore, the \kl-divergence used in \kldro requires all distributions to be absolutely continuous with respect to the training distribution $P_0$, which constrains the support space of the ambiguity set and limits the search for strong adversaries. Among the baselines, \wdro performs the worst, likely due to the fact that solving Wasserstein-based \dro usually involves optimal transport problems or their relaxations, which are computationally more demanding and often rely on approximate methods such as dual reformulation or adversarial training, resulting in suboptimal solutions. Moreover, since \wdro allows adversarial distributions with supports different from the training distribution, it is theoretically more flexible and closer to real distribution shifts, but this flexibility can produce overly extreme adversaries and thus lead to overly conservative model training.

\subsubsection{Effect of Noisy Types}

\begin{wraptable}{r}{0.5\textwidth}
\vspace{-5mm}
\centering
\caption{Gaussian-Corrupted Test.}
\renewcommand{\arraystretch}{1.2}
\resizebox{0.5\textwidth}{!}{%
\begin{tabular}{lccccccc}
\toprule
\multicolumn{1}{c}{\multirow{2}{*}{\textbf{Dataset}}}   & \multicolumn{6}{c}{\textbf{MSE}} \\
\cmidrule(lr){2-7}
 & \textbf{\ouralg} & \textbf{\dragen} & \textbf{\kldro} & \textbf{\wdro} &  \textbf{\dml}  & \textbf{\ml}\\
\midrule

\multicolumn{1}{c}{\makecell{\textbf{BANC\_22}}} & \textbf{0.0170} &    0.0188 &	0.0197 &	0.0183 &	0.0189 &	0.0291    \\
\multicolumn{1}{c}{\makecell{\textbf{BANC\_21}}} & \textbf{0.0177} &    0.0199 &	0.0214 &	0.0216 &	0.0199 &	0.0337    \\
\midrule

\multicolumn{1}{c}{\makecell{\textbf{QLD\_24}}} & \textbf{0.0560} &	    0.0716 &	0.0839 &	0.0911 &	0.0847 &	0.0990    \\
\multicolumn{1}{c}{\makecell{\textbf{QLD\_23}}} & \textbf{0.0615} &	    0.0766 &	0.0925 &	0.0960 &	0.0934 &	0.1034    \\
\multicolumn{1}{c}{\makecell{\textbf{QLD\_22}}} & \textbf{0.0307} &	    0.0402 &	0.0476 &	0.0639 &	0.0445 &	0.0766    \\
\multicolumn{1}{c}{\makecell{\textbf{QLD\_21}}} & \textbf{0.0297} &	    0.0394 &	0.0475 &	0.0649 &	0.0431 &	0.0772    \\
\midrule

\multicolumn{1}{c}{\makecell{\textbf{GB\_24}}} & \textbf{0.0238} &      0.0253 &	0.0280 &	0.0276 &	0.0279 &	0.0381    \\
\multicolumn{1}{c}{\makecell{\textbf{GB\_23}}} & \textbf{0.0221} &	    0.0258 &	0.0260 &	0.0305 &	0.0269 &	0.0414    \\
\multicolumn{1}{c}{\makecell{\textbf{GB\_22}}} & \textbf{0.0227} &	    0.0263 &	0.0302 &	0.0343 &	0.0279 &	0.0458     \\
\multicolumn{1}{c}{\makecell{\textbf{GB\_21}}} & \textbf{0.0211} &	    0.0256 &	0.0279 &	0.0334 &	0.0256 &	0.0438     \\
\midrule

\multicolumn{1}{c}{\makecell{\textbf{ERCO\_24}}} & \textbf{0.0281} &    0.0317 &	0.0350 &	0.0369 &	0.0340 &	0.0471    \\
\multicolumn{1}{c}{\makecell{\textbf{ERCO\_23}}} & \textbf{0.0228} &	0.0251 &	0.0281 &	0.0294 &	0.0267 &	0.0418     \\
\multicolumn{1}{c}{\makecell{\textbf{ERCO\_22}}} & \textbf{0.0206} &	0.0231 &	0.0256 &	0.0293 &	0.0230 &	0.0416     \\
\multicolumn{1}{c}{\makecell{\textbf{ERCO\_21}}} & \textbf{0.0217} &	0.0244 &	0.0295 &	0.0357 &	0.0264 &	0.0475    \\
\midrule

\multicolumn{1}{c}{\textbf{Average}} & \textbf{0.0282} &	0.0338 &	0.0388 &	0.0438 &	0.0373 &	0.0547     \\
\midrule
\multicolumn{1}{c}{\textbf{Worst}} & \textbf{0.0615} &	0.0766 &	0.0925 &	0.0960 &	0.0934 &	0.1034    \\
\bottomrule

\end{tabular}
}
\label{gaussian_table}
\end{wraptable}

In this test, we add a certain amount of different types of noise into each test set to compare the noise robustness of different algorithms. The noise types include Gaussian Noise, Perlin Noise, and Cutout Noise.

\textbf{Gaussian Noise}:
In the Gaussian Noise test, we add Gaussian Noise with $\sigma=0.1$ to each test set. As shown in Table~\ref{gaussian_table}, all algorithms exhibit performance degradation compared to the noise-free setting. Nevertheless, \ouralg still significantly outperforms the others: taking \ml as the reference, our method achieves a 48.3\% improvement, followed by \dragen and \dml with gains of 38.1\% and 31.7\%, respectively. Among the remaining baselines, \kldro improves performance by 29.1\%, while \wdro is the weakest, surpassing \ml by only 20\%. In fact, \wdro is relatively adept at handling Gaussian Noise compared to other noise types, and thus maintains a noticeable advantage even under $\sigma=0.1$ Gaussian Noise, a trend further confirmed in subsequent experiments. On the other hand, both \ouralg and \dml are trained with diffusion-generated augmented datasets, which naturally incorporate Gaussian-based noise during the diffusion process. This exposure enables them to remain robust under Gaussian noise perturbations. In contrast, \dragen’s performance, which falls between \ouralg and \dml, can be attributed to the property of the \vae: although Gaussian noise in the input space appears random, it is often compressed and filtered when mapped into the latent space, thereby preventing severe disruption of latent representations. Consequently, \dragen also performs reasonably well under Gaussian noise, though not as strongly as \ouralg.

\textbf{Perlin Noise}: Perlin Noise is a smooth pseudo-random gradient noise commonly used to simulate natural textures such as clouds, terrains, and wood grains. By combining multiple Perlin Noise components with different frequencies and amplitudes (known as octaves), more complex fractal noise can be produced. In this experiment, we superimpose 8 layers of Perlin Noise, with the noise amplitude normalized to the range [-1, 1].

\begin{wraptable}{r}{0.5\textwidth}
\vspace{-5mm}
\centering
\caption{Perlin-Corrupted Test.}
\renewcommand{\arraystretch}{1.2}
\resizebox{0.5\textwidth}{!}{
\begin{tabular}{lccccccc}
\toprule
\multicolumn{1}{c}{\multirow{2}{*}{\textbf{Dataset}}}   & \multicolumn{6}{c}{\textbf{MSE}} \\
\cmidrule(lr){2-7}
 & \textbf{\ouralg} & \textbf{\dragen} & \textbf{\kldro} & \textbf{\wdro} &  \textbf{\dml}  & \textbf{\ml}\\
\midrule

\multicolumn{1}{c}{\makecell{\textbf{BANC\_22}}} & \textbf{0.0117} &	0.0111 &	0.0296 &	0.0620 &	0.0227 &	0.0663     \\
\multicolumn{1}{c}{\makecell{\textbf{BANC\_21}}} & \textbf{0.0110} &	0.0156 &	0.0288 &	0.0591 &	0.0211 &	0.0662     \\
\midrule

\multicolumn{1}{c}{\makecell{\textbf{QLD\_24}}} & \textbf{0.0355} &	0.0506 &	0.0636 &	0.0862 &	0.0588 &	0.0928     \\
\multicolumn{1}{c}{\makecell{\textbf{QLD\_23}}} & \textbf{0.0406} &	0.0490 &	0.0685 &	0.0787 &	0.0669 &	0.0860     \\
\multicolumn{1}{c}{\makecell{\textbf{QLD\_22}}} & \textbf{0.0186} &	0.0321 &	0.0476 &	0.0852 &	0.0341 &	0.0904     \\
\multicolumn{1}{c}{\makecell{\textbf{QLD\_21}}} & \textbf{0.0192} &	0.0288 &	0.0475 &	0.0865 &	0.0355 &	0.0940     \\
\midrule

\multicolumn{1}{c}{\makecell{\textbf{GB\_24}}} & \textbf{0.0147} &	0.0152 &	0.0272 &	0.0499 &	0.0252 &	0.0563     \\
\multicolumn{1}{c}{\makecell{\textbf{GB\_23}}} & \textbf{0.0133} &	0.0184 &	0.0295 &	0.0560 &	0.0239 &	0.0628     \\
\multicolumn{1}{c}{\makecell{\textbf{GB\_22}}} & \textbf{0.0132} &	0.0223 &	0.0311 &	0.0611 &	0.0247 &	0.0666      \\
\multicolumn{1}{c}{\makecell{\textbf{GB\_21}}} & \textbf{0.0114} &	0.0167 &	0.0311 &	0.0586 &	0.0224 &	0.0677      \\
\midrule

\multicolumn{1}{c}{\makecell{\textbf{ERCO\_24}}} & \textbf{0.0154} &	0.0224 &	0.0267 &	0.0436 &	0.0238 &	0.0525     \\
\multicolumn{1}{c}{\makecell{\textbf{ERCO\_23}}} & \textbf{0.0142} &	0.0200 &	0.0382 &	0.0767 &	0.0262 &	0.0829      \\
\multicolumn{1}{c}{\makecell{\textbf{ERCO\_22}}} & \textbf{0.0109} &	0.0195 &	0.0324 &	0.0655 &	0.0223 &	0.0732      \\
\multicolumn{1}{c}{\makecell{\textbf{ERCO\_21}}} & \textbf{0.0104} &	0.0211 &	0.0283 &	0.0506 &	0.0194 &	0.0605     \\
\midrule

\multicolumn{1}{c}{\textbf{Average}} & \textbf{0.0171} &	0.0245 &	0.0379 &	0.0657 &	0.0305 &	0.0727      \\
\midrule
\multicolumn{1}{c}{\textbf{Worst}} & \textbf{0.0406} &	0.0506 &	0.0685 &	0.0865 &	0.0669 &	0.0940     \\
\bottomrule
\end{tabular}
}
\label{perlin_table}
\end{wraptable}

As shown in Table~\ref{perlin_table}, taking \ml as the reference, our method outperforms \ml by 76.4\%, while \dragen achieves a 66.3\% improvement, and both perform better than in the Gaussian Noise test. \dml consistently ranks third in performance, trailing \dragen by approximately 6.2\%, a gap that is nearly identical to that observed in the Gaussian Noise test. Although \kldro also shows a larger gain relative to \ml, its MSE remains roughly the same as in the Gaussian Noise test; the apparent improvement is primarily due to the substantial performance drop of \ml in this experiment. In contrast, \wdro outperforms \ml by only 9.6\%, a significant decline compared to its performance under Gaussian Noise, with the MSE reduced by as much as 50\%. This indicates that \wdro is not well-suited for handling Perlin Noise, likely because the Wasserstein distance measures global distributional transport cost and is more effective in capturing smooth, small perturbations (e.g., Gaussian Noise), but fails to adequately model the long-range correlated patterns of Perlin Noise within the Wasserstein ball.

\begin{wraptable}{r}{0.5\textwidth}
\vspace{-3mm}
\centering
\caption{Cutout-Corrupted Test.}
\renewcommand{\arraystretch}{1.2}
\resizebox{0.5\textwidth}{!}{
\begin{tabular}{lccccccc}
\toprule
\multicolumn{1}{c}{\multirow{2}{*}{\textbf{Dataset}}}   & \multicolumn{6}{c}{\textbf{MSE}} \\
\cmidrule(lr){2-7}
 & \textbf{\ouralg} & \textbf{\dragen} & \textbf{\kldro} & \textbf{\wdro} &  \textbf{\dml}  & \textbf{\ml}\\
\midrule

\multicolumn{1}{c}{\makecell{\textbf{BANC\_22}}} & \textbf{0.0063} &	0.0164 &	0.0181 &	0.0297 &	0.0125 &	0.0385      \\
\multicolumn{1}{c}{\makecell{\textbf{BANC\_21}}} & \textbf{0.0095} &	0.0177 &	0.0302 &	0.0547 &	0.0192 &	0.0637      \\
\midrule

\multicolumn{1}{c}{\makecell{\textbf{QLD\_24}}} & \textbf{0.0404} &	0.0604 &	0.0798 &	0.1096 &	0.0697 &	0.1148      \\
\multicolumn{1}{c}{\makecell{\textbf{QLD\_23}}} & \textbf{0.0426} &	0.0644 &	0.0802 &	0.1028 &	0.0743 &	0.1092      \\
\multicolumn{1}{c}{\makecell{\textbf{QLD\_22}}} & \textbf{0.0196} &	0.0354 &	0.0507 &	0.0881 &	0.0384 &	0.0971      \\
\multicolumn{1}{c}{\makecell{\textbf{QLD\_21}}} & \textbf{0.0208} &	0.0294 &	0.0534 &	0.0916 &	0.0400 &	0.1005      \\
\midrule

\multicolumn{1}{c}{\makecell{\textbf{GB\_24}}} & \textbf{0.0145} &	0.0165 &	0.0351 &	0.0598 &	0.0252 &	0.0688      \\
\multicolumn{1}{c}{\makecell{\textbf{GB\_23}}} & \textbf{0.0122} &	0.0215 &	0.0343 &	0.0603 &	0.0230 &	0.0681      \\
\multicolumn{1}{c}{\makecell{\textbf{GB\_22}}} & \textbf{0.0129} &	0.0228 &	0.0361 &	0.0639 &	0.0245 &	0.0720       \\
\multicolumn{1}{c}{\makecell{\textbf{GB\_21}}} & \textbf{0.0127} &	0.0225 &	0.0356 &	0.0651 &	0.0245 &	0.0740       \\
\midrule

\multicolumn{1}{c}{\makecell{\textbf{ERCO\_24}}} & \textbf{0.0157} &	0.0214 &	0.0375 &	0.0643 &	0.0275 &	0.0732      \\
\multicolumn{1}{c}{\makecell{\textbf{ERCO\_23}}} & \textbf{0.0134} &	0.0225 &	0.0349 &	0.0599 &	0.0241 &	0.0678       \\
\multicolumn{1}{c}{\makecell{\textbf{ERCO\_22}}} & \textbf{0.0134} &	0.0162 &	0.0349 &	0.0599 &	0.0241 &	0.0678       \\
\multicolumn{1}{c}{\makecell{\textbf{ERCO\_21}}} & \textbf{0.0116} &	0.0227 &	0.0354 &	0.0671 &	0.0225 &	0.0758      \\
\midrule

\multicolumn{1}{c}{\textbf{Average}} & \textbf{0.0174} &	0.0278 &	0.0425 &	0.0699 &	0.0319 &	0.0782       \\
\midrule
\multicolumn{1}{c}{\textbf{Worst}} & \textbf{0.0426} &	0.0644 &	0.0802 &	0.1096 &	0.0743 &	0.1148      \\
\bottomrule

\end{tabular}
}
\label{cutout_table}
\end{wraptable}

\textbf{Cutout Noise}: Cutout Noise is a commonly used perturbation method that randomly selects a region of the input data and sets its values to a constant, thereby simulating partial information loss. In our experiment, we randomly mask 30\% of the sequence and set the masked values to a constant of 1. As shown in Table~\ref{cutout_table}, the performance of all algorithms is very close to their performance under Perlin Noise. We attribute this to the fact that, although Perlin and Cutout Noises differ in form, both represent structured local perturbations that disrupt the continuity of the input patterns, thereby posing similar challenges to all 
algorithms and resulting in comparable performance under these two types of noise at a given perturbation level. This is further confirmed in the subsequent gradient-perturbation tests.

\subsubsection{Effect of Noisy Levels}

In this experiment, we progressively increased the intensity of three types of noise. For Gaussian Noise, the perturbation range is set to $\sigma \in [0.05, 0.2]$; for Perlin Noise, the amplitude is controlled within the range $[0.05, 1]$; and for Cutout Noise, the Cutout Mask Ratio is adjusted between $[10\%, 40\%]$. As shown in Figures~\ref{Grad_Gaussian},~\ref{Grad_Perlin},~\ref{Grad_Cutout}, \ouralg consistently outperforms all baseline methods across different noise types and intensity levels. With increasing noise strength, we observe that variations in Gaussian Noise have a more pronounced impact on all methods compared to Perlin and Cutout Noise. In contrast, the impact of stronger Perlin and Cutout Noise on \ouralg remains limited, and even at higher noise levels, \ouralg maintains stable and superior performance, highlighting its strong robustness. For \wdro, however, the performance degradation under Perlin and Cutout Noise is much greater, with trends almost identical to \ml, indicating that \wdro is not effective in handling Perlin and Cutout Noise but is relatively better at coping with Gaussian Noise. Interestingly, when the Cutout Mask Ratio is 30\%, the performance of all algorithms is nearly identical to their performance under Perlin Noise with amplitude 1, suggesting that at specific noise levels, Perlin and Cutout—though different in form—both represent structured local perturbations that disrupt input continuity to a similar degree, thereby producing comparable impacts on the algorithms.

\subsubsection{Effects of Adversarial Budget}

    In this adversarial budget experiment, we examine the impact of the budget parameter $\epsilon$ in \eqref{eqn:objective} on the performance of \ouralg. As illustrated in Fig.~\ref{eps_select}, the loss–$\epsilon$ curves across all datasets display a concave trend, with the best average performance achieved around $\epsilon = 0.015$. When $\epsilon$ is smaller than this threshold, the diffusion-modeled distributions are overly restricted to the training data, thereby hindering the ability of \ouralg to generalize to \ood datasets. In contrast, when $\epsilon$ becomes excessively large, the enlarged ambiguity set causes \ouralg to conservatively optimize against irrelevant distributions, which degrades its performance on real \ood datasets.
    Hence, selecting an appropriate value of $\epsilon$ is essential for constructing effective adversarial distributions, ensuring a proper balance between average-case and worst-case performance.

    \begin{figure*}[htbp]
        \centering
        \resizebox{1\textwidth}{!}{
            \parbox{\textwidth}{
                \begin{minipage}[b]{0.49\textwidth}
                    \centering
                    \includegraphics[width=\textwidth]{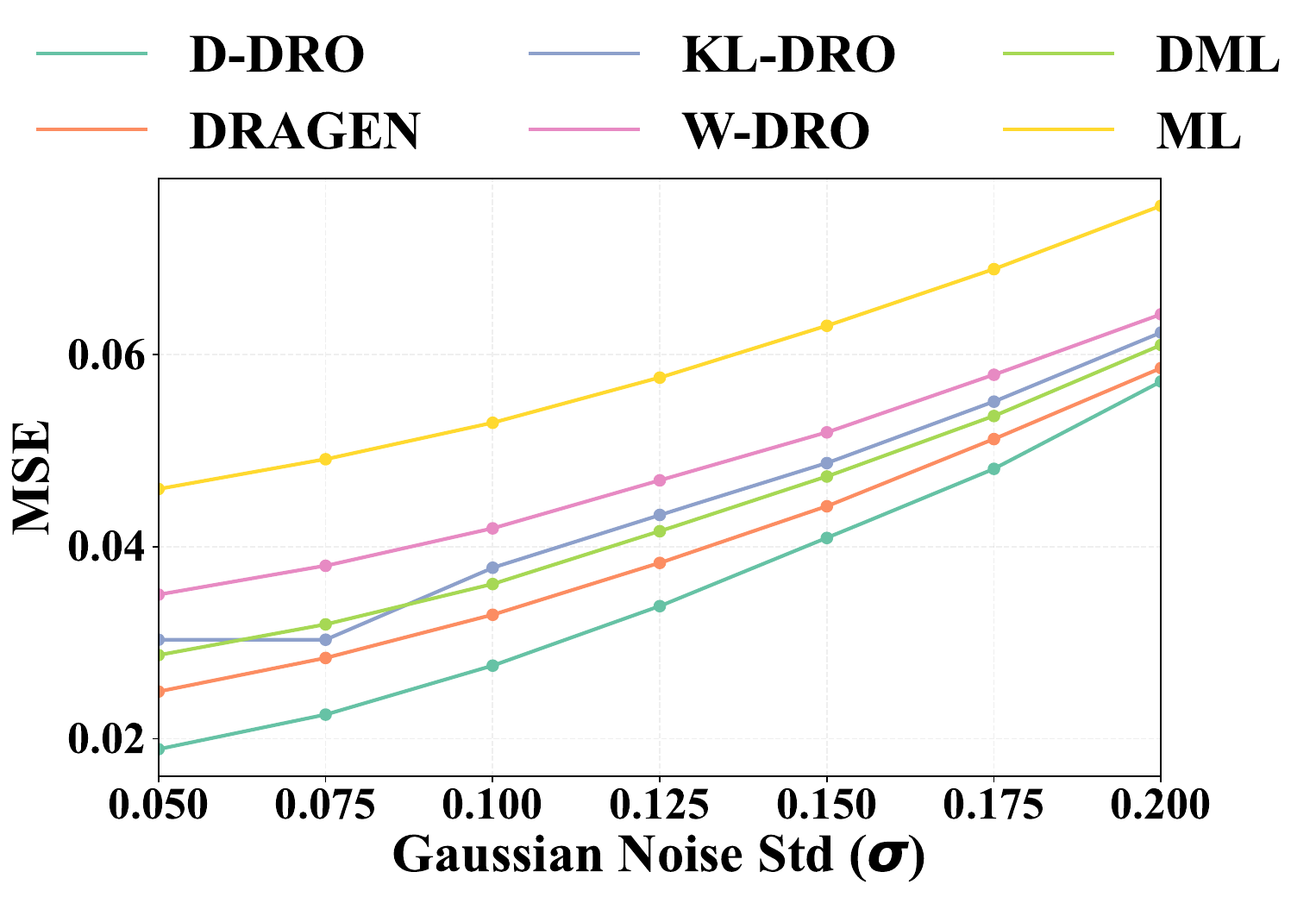}
                    \caption{Gaussian perturbation strength}
                    \label{Grad_Gaussian}
                \end{minipage}
                \hfill
                \begin{minipage}[b]{0.49\textwidth}
                    \centering
                    \includegraphics[width=\textwidth]{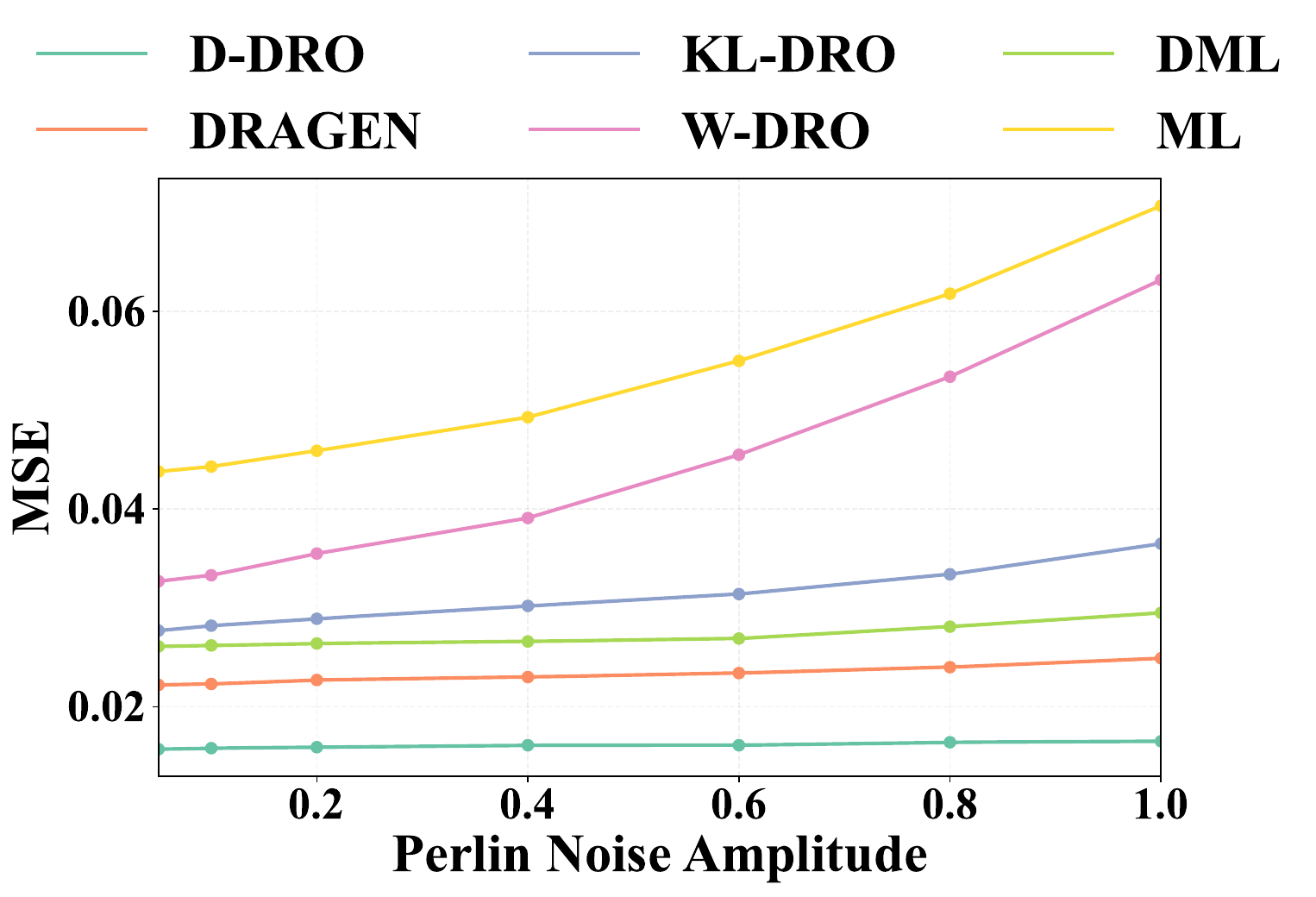}
                    \caption{Perlin perturbation strength}
                    \label{Grad_Perlin}
                \end{minipage}
                \vfill
                \begin{minipage}[b]{0.49\textwidth}
                    \centering
                    \includegraphics[width=\textwidth]{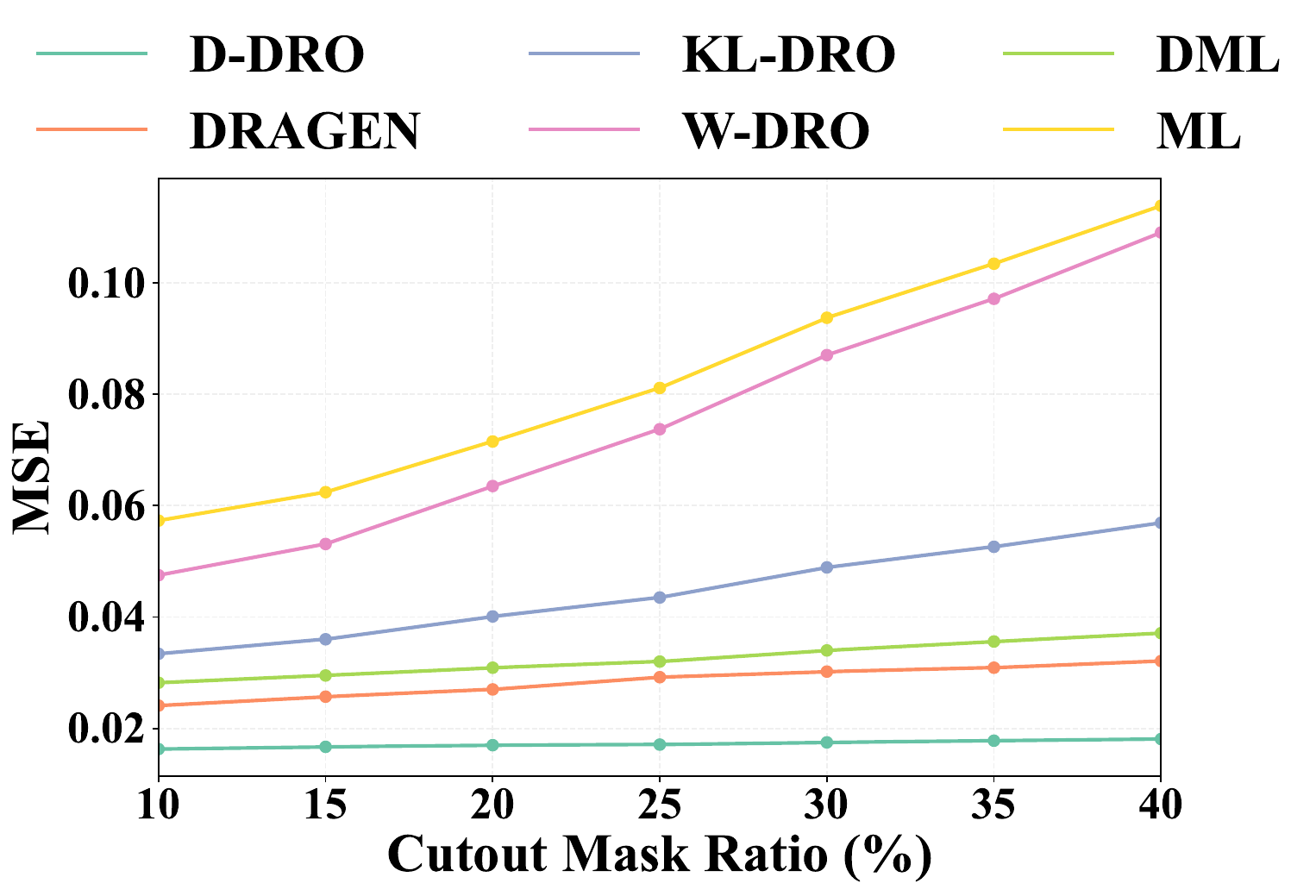}
                    \caption{Cutout perturbation strength}
                    \label{Grad_Cutout}
                \end{minipage}
                \hfill
                \begin{minipage}[b]{0.49\textwidth}
                    \centering
                    \includegraphics[width=\textwidth]{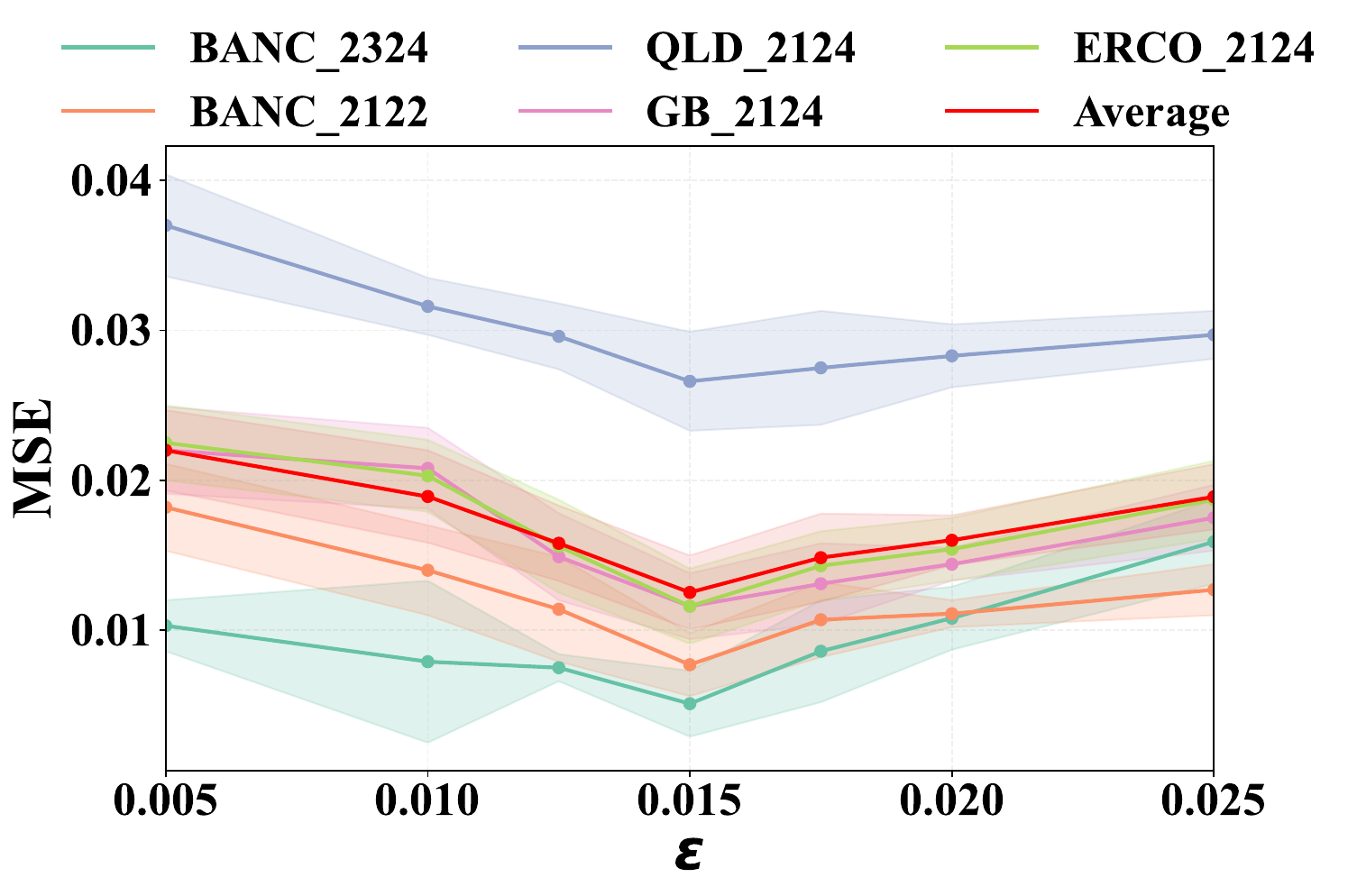}
                    \caption{Effect of budget $\epsilon$ in \ouralg }
                    \label{eps_select}
                \end{minipage}
            }%
        }
    \end{figure*}

    \subsubsection{Effect of the Capacity of Generative Models}\label{sec:capacity_expressive_empirical}
    \begin{wraptable}{r}{0.4\textwidth}
\centering
\caption{Comparison of \ouralg with Different Generative Models.}
\renewcommand{\arraystretch}{1.2}
\resizebox{0.4\textwidth}{!}{%
\begin{tabular}{lcc}
\toprule
\textbf{Model} & \textbf{Avg MSE} & \textbf{Worst MSE} \\
\midrule
\textbf{DFU-Full} & 0.0163 & 0.0509 \\
\textbf{DFU-$\frac{3}{4}$}  & 0.0171 & 0.0544 \\
\textbf{DFU-$\frac{1}{2}$}  & 0.0178 & 0.0608 \\
\textbf{VAE}     & 0.0168 & 0.0463 \\
\bottomrule
\end{tabular}}
\label{scales_table}
\end{wraptable}

To investigate the influence of diffusion model capacity on the performance of \ouralg, we vary the width of the UNet by modifying the channel dimensions at each level. Specifically, we denote the original four-level UNet with channels [128, 256, 256, 256] as \textbf{DFU-Full}. We then scale the model down to [96, 192, 192, 192], denoted as \textbf{DFU-$\frac{3}{4}$}, and further to [64, 128, 128, 128], denoted as \textbf{DFU-$\frac{1}{2}$}. In addition, we include a VAE-based variant of \ouralg for comparison, and the VAE architecture is given in Section~\ref{sec:training_setups_append}.

To evaluate the effect of generative model choice, we evaluate the four generative model configurations---three diffusion models of different UNet sizes and one VAE---on all original (noise-free) OOD datasets in the time-series forecasting task. The results in Table~\ref{scales_table} show that, for the diffusion-based variants of \ouralg, increasing the UNet width leads to a clear improvement: the average MSE decreases monotonically, and the worst-case MSE is also substantially reduced as the model becomes larger. This indicates that a stronger generative model can better characterize the ambiguity set, thereby enhancing both the robustness and accuracy of \ouralg.  The VAE-based variant achieves performance between DFU-Full and DFU-$\tfrac{3}{4}$, further illustrating that GAS-DRO does not depend on diffusion models specifically; rather, it is insensitive to the choice of generative architecture and exhibits strong generalization across model types.

    \subsubsection{Computational Overhead}\label{sec:computation_overhead}

   Finally, we tested all methods on the time series prediction task, specifically evaluating their runtime, memory overhead, and GPU memory overhead during both the training and inference phases. All methods are evaluated on a single NVIDIA RTX 6000 Ada GPU, with 48 GB of memory, and 1 TB of system storage. 
    
     \begin{wraptable}{r}{0.62\textwidth}
\centering
\caption{Training Cost Comparison.}
\renewcommand{\arraystretch}{1.2}
\resizebox{0.62\textwidth}{!}{
\begin{tabular}{lccc}
\toprule
\textbf{Method} & \textbf{Time (s)} & \textbf{Memory (MB)} & \textbf{GPU Memory (MB)} \\
\midrule
\textbf{\ouralg (DFU)} & 1897 & 2274 & 12239 \\
\textbf{\ouralg (VAE)} & 494  & 2449 & 5849  \\
\textbf{\dragen}       & 2191 & 2355 & 434   \\
\textbf{\pdro}         & 1799 & 2205 & 11576 \\
\textbf{\kldro}        & 149  & 2314 & 417   \\
\textbf{\wdro}         & 2896 & 2304 & 417   \\
\textbf{\dml}           & 130  & 2316 & 417   \\
\textbf{\ml}            & 143  & 1352 & 33    \\
\bottomrule
\end{tabular}
}
\label{cost_table}
\end{wraptable}

    The training computational overhead is shown in Table~\ref{cost_table}. \ouralg(DFU) represents \ouralg using the diffusion model, \ouralg(VAE) represents \ouralg using the VAE model, and other baselines have their names in Section~\ref{sec:baselines}.

  As shown in Table~\ref{cost_table}, \dml has a higher memory cost than \ml due to the need to store the augmented dataset. Compared to \ml and \dml, \ouralg (DFU) and \pdro both introduce significant additional training time overhead primarily due to sampling processes in diffusion models. The memory overhead of \ouralg and \pdro is also substantially increased due to the large size of the U-Net and the need to store sampling trajectories. However, the higher computational overhead of \ouralg~(DFU) and \pdro is traded for superior generalization performance across OOD testing datasets, particularly for \ouralg, as shown in Table~\ref{main_table_full}. \ouralg (VAE) and \dragen are based on VAEs, so they have less memory footprint. However, due to the complexity to search within the Wasserstein-based ambiguity set, they have the highest time overhead. \kldro is the most efficient DRO algorithm in terms of time and memory due to its efficient closed-form solution. 

    In test phase, however, since all methods use the same ML models for time-series prediction, they have the same runtime and memory for inference. 
    To perform inference on a testing dataset (\textbf{BANC\_22}) with 8760 samples, the inference runtime is 0.1s and GPU memory is 409 MB. 
\section{Experiments on Image Classification }\label{sec:classification}

\subsection{Task}
\note{We further compare the performance of \ouralg with baseline methods on an image classification task. Handwritten digit classification is a long-standing and fundamental benchmark in computer vision. It captures the essential structure of real-world image recognition tasks, including high-dimensional pixel dependencies, sensitivity to corruptions, and vulnerability to distributional shifts. As such, the handwritten digit classification has been a standard testbed for evaluating the robustness and generalization ability of machine learning models. Beyond that, handwritten digit recognition supports many real-world applications, including postal mail sorting, bank check processing, form digitization, and optical character recognition (OCR) systems.}

\note{In this experiment, we adopt handwritten digit recognition to assess whether \ouralg can effectively handle distribution shifts and image corruptions in vision domains. The task is particularly relevant for DRO evaluation because even small perturbations—such as noise, occlusions, or contrast changes—can significantly alter the underlying data distribution. A robust DRO method should therefore maintain high accuracy under these shifts. This makes handwritten digit classification an important and well-defined environment for comparing different designs and validating the performance of \ouralg.}

\subsection{Datasets}
\note{This experiment is a standard image classification task, where the training set is the MNIST dataset \cite{726791}, and the evaluation is conducted on both MNIST and an OOD dataset USPS \cite{291440} test sets. The USPS dataset, collected by the U.S. Postal Service for automatic mail sorting, serves as a natural OOD benchmark due to its distinct handwriting styles and distributional characteristics. In addition to the original test sets, we construct more OOD datasets by applying various perturbations to both MNIST and USPS, including Gaussian noise, Perlin noise, and Cutout noise, consistent with the noise types used in our time-series forecasting experiments. As shown in Table \ref{image_table}, for MNIST, the corrupted variants include \textbf{M-Gaussian}, \textbf{M-Perlin}, and \textbf{M-Cutout}, corresponding to Gaussian noise with a standard deviation of 0.4, Perlin noise with amplitude 2.5, and Cutout with a cover rate of 0.3, respectively. For USPS, we adopt milder perturbations due to its inherent OOD nature, resulting in \textbf{U-Gaussian}, \textbf{U-Perlin}, and \textbf{U-Cutout}, which come with Gaussian noise with a standard deviation of 0.03, Perlin noise with amplitude 0.5, and Cutout with a cover rate of 0.05, respectively. Both \textbf{M-Original} and \textbf{U-Original} mean the original dataset of MNIST and USPS. The training is performed on \textbf{M-Original}, and the classifiers are tested on other OOD datasets. We select the first 4096 images from the MNIST training dataset as the training set for classifier training in all algorithms. For evaluation, we use the original MNIST test set consisting of 10000 images, together with 7291 images from the USPS dataset. All corrupted test sets used in our experiments are generated by applying various perturbations to these 17291 images.}

\note{During data preprocessing, we apply the standard normalization to the MNIST dataset so that the mean of each pixel is 0.1307 and variance of each pixel is 0.3081. For the USPS dataset, we use the same data preprocessing method as \cite{DRL_environment_generation_ren2022distributionally}: Each original $16\times16$ grayscale digit image is reconstructed from the HDF5 file, resized to $30 \times30$, and then centered on a $48\times48$ black background to match the spatial layout of MNIST-like inputs. The images are subsequently downsampled to a $14\times14$ resolution, paired with their corresponding labels for training and evaluation. Since the classifier used in this experiment is a lightweight two-layer DNN, all images across datasets are ultimately downsampled to a $14\times14$ resolution to facilitate efficient training and testing.}

\subsection{Baselines}
    The baselines which are compared with \ouralg in our vision experiments are introduced as below.
    
    \textbf{\ml}: This method trains the standard ML model to minimize the classification error without using \dro. 
    
    \textbf{\dml}: 
    \dml represents an \ml model fine-tuned with data augmented training set. Unlike the \dml setting described in Appendix~\ref{app:exp}, here we apply Gaussian noise with a standard deviation of 0.02 to augment the training set for data augmentation.
    
    \textbf{\wdro}: We employ the same implementation of \wdro described in Appendix~\ref{app:exp}, with only minor modifications to its input and output formats to accommodate training and inference on image data.

    \textbf{\kldro}: We employ the same implementation of \kldro described in Appendix~\ref{app:exp}, with only minor modifications to its input and output formats to accommodate training and inference on image data.
    
    \textbf{\dragen}: We employ the same implementation of \dragen described in Appendix~\ref{app:exp}, with only minor modifications to its input and output formats to accommodate training and inference on image data.

\subsection{Training Setups}

\textbf{Predictor}: The classifiers in \ouralg and all the baselines share the same two-layer DNN architecture with a hidden layer of 8 neurons followed by a 10-dimensional output layer. We use MSE instead of cross-entropy while training the predictor, because the predictor is a very small two-layer network, and MSE provides more stable gradients.

\textbf{\vae For \dragen \& \ouralg}: In this experiment, both \dragen and \ouralg use the same VAE, and its architecture is identical to the VAE employed by \dragen in the time-series forecasting task in Section~\ref{sec:training_setups_append}. The only difference is that we increase the latent dimensionality to 8 in order to preserve more image information. Specifically, 
the encoder is a convolutional neural network that progressively downsamples the input image and maps it into a low-dimensional latent representation. It first applies a stack of 2 convolutional layers, each with a stride of 2 and ReLU activation, where the number of channels doubles at each layer (from 1 to an initial low-width channel count and then up to the specified inner channel width). After spatial downsampling, the resulting feature map is flattened and passed through an MLP. This MLP outputs the parameters of an 8-dimensional latent Gaussian distribution, producing both the mean and log-variance vectors through two separate linear heads.
 
The decoder first maps the latent vector into a low-resolution convolutional feature map using an MLP, effectively expanding the compressed latent representation back into a spatial structure. This feature map is then progressively upsampled through a sequence of layers, where each layer doubles the spatial resolution using interpolation and refines the features with a 3×3 convolution followed by ReLU activation. As the resolution increases, the number of channels is gradually reduced to match the structure of a natural image. Finally, a 1×1 convolution projects the features into a single output channel, and a Sigmoid activation constrains the pixel intensities to the [0,1] range.
Based on the decoder output, we can model a Gaussian distribution for the image as \(p_\theta(x|z)
=
\mathcal{N}
\left(
x;\mu_\theta(z),\sigma^2 I
\right)\) where $\mu_\theta(z)$ is the decoder network. Given a pretrained encoder $q_\varphi(z|x)$, the reconstruction loss of VAE $J_{\mathrm{VAE}}=\mathbb{E}_{P_0(x)}\mathbb{E}_{q_\varphi^*(z \mid x)} \big[ -\log p_\theta(x \mid z) \big]$ is in a linear relationship with the expected squared error based on the latent distribution, i.e. \(
\mathbb{E}_{P_0(x)}\mathbb{E}_{q_\varphi(z|x)}
\left[
\|x-\mu_\theta(z)\|^2
\right]
\). Thus, we directly use this expected squared loss to replace the VAE reconstruction loss in \eqref{eqn:lagrangian_objective_vae} and obtain the inner-maximization objective following \eqref{eqn:ppo_vae}.

\note{\textbf{Training For \ouralg}: Before training \ouralg, we freeze the parameters $\varphi$ of the pretrained VAE encoder. We choose $\epsilon=65$ as \dagalg's adversarial budget which gives the best average performance over all validation datasets. We choose $\eta = 0.1$ as the rate to update the Lagrangian parameter $\mu$ in Algorithm \ref{DAG-DRL}. We use the Adam optimizer with a learning rate of $1 \times10^{-4}$ for the VAE training in the maximization and a learning rate of $2 \times10^{-4}$ for the predictor update in minimization. The VAE model is trained for 10 inner epochs with a batch size of 64.  The predictor is trained for 10 epochs with a batch size of 64. During the predictor training, we encode the training data into latent variable $z$ using the VAE encoder. Then we introduce a small perturbation to the latent space with the perturbation radius as $\epsilon_{z} = 0.25$. Next, we generate new samples  by decoding the latent variable $z$ with the updated decoder $p_{\theta}$. }

\note{\textbf{Training For \dragen}: For \dragen, we adopt the Wasserstein distance with respect to $l_2-$norm and set the adversarial budget as $\epsilon = 1.5$. Then a \wdro is applied on the latent space with a learning rate of $5 \times 10^{-5}$.} 

\note{\textbf{Training For \wdro}: For \wdro, we consider the Wasserstein distance with respect to $l_2-$norm and set the adversarial budget as $\epsilon = 3$ with a learning rate of $1 \times 10^{-5}$.}

\note{\textbf{Training For \kldro}: For \kldro, we choose the adversarial budget $\epsilon = 1$ with a learning rate of $5 \times 10^{-3}$.}

\note{\textbf{Training For \dml}:\note{ The training configuration of \dml is identical to that of \ml; the only difference is that we replace the original training samples with their Gaussian-noised versions, where noise with a standard deviation of 0.02 is added to each image.}}

\note{All methods are trained for a total of 100 epochs; for \ouralg that adopt a nested training scheme, the product of the numbers of outer and inner iterations is also set to 100.}

\subsection{Experimental Results on Image Classification}
\note{We conducted comparative tests on the MNIST and USPS datasets. As shown in Table \ref{image_table}, \ouralg achieves the best overall performance with an average accuracy of 71.52\%, outperforming \dragen by 3.27\%, KL\text{-}DRO by 6.24\%, and standard \ml by 9.49\%, demonstrating the strongest generalization and robustness across all distributions and corruption types. \ouralg also achieves the best worst\text{-}case performance with an accuracy of 50.58\%, clearly outperforming \dragen (46.84\%), \kldro (45.38\%), \wdro(46.11\%), \dml(43.08\%) and \ml (42.38\%), making it the only method that maintains above 50\% accuracy under the most challenging OOD conditions. Across the OOD testing settings, \ouralg achieves the best or near-best accuracy on MNIST and consistently ranks first in all USPS OOD scenarios, demonstrating its superiority under stronger distribution shifts.}

\begin{wraptable}{r}{0.6\textwidth}
\vspace{-3mm}
\centering
\caption{Image Classification Test.}
\renewcommand{\arraystretch}{1.2}
\resizebox{0.6\textwidth}{!}{
\begin{tabular}{lccccccc}
\toprule
\multicolumn{1}{c}{\multirow{2}{*}{\textbf{Dataset}}}   & \multicolumn{6}{c}{\textbf{Accuracy (\%)}} \\
\cmidrule(lr){2-7}
 & \textbf{\ouralg} & \textbf{\dragen} & \textbf{\kldro} & \textbf{\wdro} &  \textbf{\dml}  & \textbf{\ml}\\
\midrule

\multicolumn{1}{c}{\makecell{\textbf{M-Original}}}  & 85.31 & 82.71 & \textbf{86.92} & 85.70 & 79.02 & 85.63 \\
\multicolumn{1}{c}{\makecell{\textbf{M-Gaussian}}}  & \textbf{83.13} & 81.23 & 70.26 & 74.16 & 81.70 & 65.89 \\
\multicolumn{1}{c}{\makecell{\textbf{M-Perlin}}}  & 81.52 & \textbf{81.57} & 71.74 & 75.11 & 79.68 & 67.15 \\
\multicolumn{1}{c}{\makecell{\textbf{M-Cutout}}}  & 62.10 & \textbf{65.22} & 59.62 & 62.86 & 61.28 & 59.31 \\
\midrule

\multicolumn{1}{c}{\makecell{\textbf{U-Original}}} 
    & \textbf{71.81} & 64.09 & 63.87 & 65.49 & 63.05 & 60.22 \\

\multicolumn{1}{c}{\makecell{\textbf{U-Gaussian}}} 
    & \textbf{70.81} & 63.49 & 62.23 & 63.98 & 62.94 & 58.62 \\

\multicolumn{1}{c}{\makecell{\textbf{U-Perlin}}} 
    & \textbf{66.86} & 60.83 & 62.24 & 62.95 & 59.51 & 57.07 \\

\multicolumn{1}{c}{\makecell{\textbf{U-Cutout}}} 
    & \textbf{50.58} & 46.84 & 45.38 & 46.11 & 42.08 & 42.38 \\
\midrule

\multicolumn{1}{c}{\textbf{Average}} 
    & \textbf{71.52} & 68.25 & 65.28 & 67.05 & 66.16 & 62.03 \\
\midrule
\multicolumn{1}{c}{\textbf{Worst}} 
    & \textbf{50.58} & 46.84 & 45.38 & 46.11 & 42.08 & 42.38 \\
\bottomrule

\end{tabular}
}
\label{image_table}
\end{wraptable}

\note{
Compared with \dml, which benefits only from data augmentation, \ouralg leverages both generative modeling and worst-case optimization, yielding significantly superior OOD generalization.
More importantly, we note that the superiority of \ouralg persists even when it is implemented with the same VAE backbone used by \dragen, which demonstrates that the advantage of \ouralg rely on the design of its ambiguity set and optimization approach. Unlike \dragen, which still relaxes traditional Wasserstein DRO framework to perturb the latent space of VAE, \ouralg performs inner maximization directly in the parameterized generative model space, enabling more stable convergence and the construction of more realistic adversarial distributions. Moreover, the reconstruction-loss-based ambiguity set used in \ouralg avoids the issues in traditional ambiguity modeling, including the restrictions on the support space inherent to \kldro and the overly conservative adversaries introduced by relaxing \wdro. Thus, it outperforms the traditional DRO methods and the generative DRO method based on traditional ambiguity modeling.
}

\section{Score-matching based diffusion models}\label{sec:score_matching}

The score-based diffusion models learn the distributions through a forward process and a backward process. We introduce a score-based generative modeling by Stochastic Differential Equations (SDEs) \cite{score-based_diffusion_SDE_song2021maximum}.

\textbf{Forward Process.} 
The forward process incrementally injects noise into the data, generating a sequence of perturbed samples. It begins with an initial sample $X_0\in\mathcal{R}^d$ drawn from a training dataset $S_0$, and evolves according to SDE defined as:
\begin{equation}\label{eqn:forward}
\mathrm{d}x = b(x,t) dt + g(t) dw
\end{equation}
where $b(\cdot,t): \mathcal{R}^d\rightarrow \mathcal{R}^d$ is a drift coefficient, $g(t)\in\mathcal{R}$, and $w$ is a standard Wiener process. By the SDE, we get variable \(x_t\) which represents the data corrupted by noise at time \(t\). We use $P_t$ to represent the distribution of $x_t$ and $P_{t\mid 0}$ to denote the conditional distribution of $x_t$ given $x_0$. 
In the framework of score-based diffusion models, the forward diffusion process terminates at a sufficiently large time \(T\), so that the marginal distribution $P_T$ approximates a tractable distribution $p'$ which is typically chosen as the standard Gaussian distribution \(\mathcal{N}(0, \mathbf{I}_d)\).

\textbf{Backward Process.} By reversing the forward process in time, it's possible to recover the original data distribution starting from pure noise. The backward SDE reverses the time evolution of the forward equation in \eqref{eqn:forward}:
\begin{equation}\label{eqn:backward}
\mathrm{d}x = \left( b(x,t) -g(t)^2 \nabla_x \log P_t(x) \right) \mathrm{d}t + g(t)\mathrm{d}\bar{w}
\end{equation}
where $\bar{w}$ is a standard Wiener process in the reverse-time direction, $\nabla_x \log P_t(x)$ is the time-dependent score function relying on the marginal probability density of \(x_t\) in the forward process. 

\textbf{Score Matching.}
In the backward SDE, the score function \(\nabla \log P_t(\cdot)\) plays a critical role in directing the generative dynamics. To estimate the score function \(\nabla \log P_t(\cdot)\), we train a score-based model \(s_\theta(x, t)\) based on samples generated from the forward diffusion process. The score-based model is optimized by minimizing the following denoising score-matching loss given a dataset $S_0$:
\begin{equation}\label{eqn:score_match}
J(\theta,S_0) =\int_0^T \mathbb{E}_{x_0\sim P_0} \mathbb{E}_{x_t\sim P_{t\mid 0}} \left[\iota(t)\left\| \nabla_{x_t} \log P_{t\mid 0}(x_t \mid x_0) - s_{\theta}(x_t, t) \right\|^2 \right] \mathrm{d}t
\end{equation}
where $\iota(t)>0$ is a weighting function.

\subsection{Bound of the Score-Matching Loss by KL Divergence}

\begin{lemma}\label{lma:KLbound_scorematching}
Given the assumptions in Appendix ~\ref{sec:assumptions}, if the score-matching loss satisfies $J(\theta, S_0) = J_{DSM}(\theta, \{r_t^2\}_{t=1}^T) \leq \epsilon$, the output distribution of the diffusion model $P_{\theta}$ satisfies
\[
D_{\mathrm{KL}}(P_0\|P_\theta) \;\leq\; \epsilon + \mathbb{E}_{q(x_{0})}[D_{\mathrm{KL}}(P_T\|p')] + C_1
\]
where $P_T$ is the forward process distribution at step $T$, $p'$ is the prior distribution (by design $P_T \approx p'$), and $C_1$ is a constant that does not depend on $\theta$ as below \begin{equation}
\begin{split}
   C_1=\int_{t=0}^T g(t)^2 C_1'(x_0,x_t)\mathrm{d}t 
   \end{split}
\end{equation}
where $C_1'(x_0,x_t)=\mathbb{E}_{P_{0,t}(x_0,x_t)}\left[\frac{1}{2}\|\nabla_{x_t}\log P_t(x_t)\|_2^2-\frac{1}{2}\|\nabla_{x_t} \log P_{t\mid 0}(x_t \mid x_0)\|_2^2\right]$.
\end{lemma}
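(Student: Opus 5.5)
The plan is to follow the standard route of Song et al.\ (maximum likelihood training of score-based diffusion models). It compares the path measures of the forward SDE \eqref{eqn:forward} and of the model's reverse-time SDE, then converts the implicit (exact-score) score-matching error into the denoising score-matching loss $J(\theta,S_0)$ in \eqref{eqn:score_match}. Let $Q$ be the law of the true reverse process started from $P_T$ (whose time-$0$ marginal is $P_0$). Let $\mathbb{P}_\theta$ be the law of the model reverse SDE $\mathrm{d}x=(b(x,t)-g(t)^2 s_\theta(x,t))\mathrm{d}t+g(t)\mathrm{d}\bar w$ started from $p'$, whose time-$0$ marginal is $P_\theta$. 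By the data-processing inequality,
\[
D_{\mathrm{KL}}(P_0\|P_\theta)\le D_{\mathrm{KL}}(Q\|\mathbb{P}_\theta).
\]
By the chain rule for KL, this splits into $D_{\mathrm{KL}}(P_T\|p')$ plus the expected KL of the path laws conditioned on the initial point $x_T$.

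For the conditional term I will apply Girsanov's theorem. Both reverse SDEs share diffusion coefficient $g(t)$ and differ only in drift, by $g(t)^2(\nabla\log P_t - s_\theta)$. Hence the conditional path KL equals
\[
\frac12\int_0^T g(t)^2\,\mathbb{E}_{P_t}\|\nabla_{x}\log P_t(x)-s_\theta(x,t)\|^2\,\mathrm{d}t .
\]
Justifying this identity is where the assumptions of Appendix~\ref{sec:assumptions} are invoked. These include regularity and growth conditions on $b,g$, finite second moments, and a Novikov-type integrability condition on $s_\theta$ and $\nabla\log P_t$. I will simply cite them. The weighting is the likelihood weighting $\iota(t)=g(t)^2/2$, which I take to be what $r_t^2$ denotes in the statement.

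Next I convert the explicit score-matching term into the denoising one. Expanding both squares gives two facts:
\[
\mathbb{E}_{P_t}\|\nabla\log P_t-s_\theta\|^2=\mathbb{E}_{P_t}\|\nabla\log P_t\|^2-2\,\mathbb{E}_{P_t}\langle\nabla\log P_t,s_\theta\rangle+\mathbb{E}_{P_t}\|s_\theta\|^2,
\]
and the analogous expansion for $\mathbb{E}_{P_{0,t}}\|\nabla\log P_{t|0}-s_\theta\|^2$. The cross terms coincide by the standard identity $\mathbb{E}_{P_{0,t}}\langle\nabla_{x_t}\log P_{t|0}(x_t|x_0),s_\theta\rangle=\mathbb{E}_{P_t}\langle\nabla\log P_t,s_\theta\rangle$. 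To prove it, write $\nabla P_t(x_t)=\int P_0(x_0)\nabla P_{t|0}(x_t|x_0)\,\mathrm{d}x_0$ and integrate. The $\|s_\theta\|^2$ terms coincide as well. Therefore the explicit loss equals the denoising loss plus $\mathbb{E}_{P_{0,t}}[\|\nabla\log P_t\|^2-\|\nabla\log P_{t|0}\|^2]$, which does not depend on $\theta$. Multiplying by $g(t)^2/2$ and integrating over $t$ yields exactly $J(\theta,S_0)+C_1$ with $C_1=\int_0^T g(t)^2C_1'\,\mathrm{d}t$ as in the statement. Combining with $J(\theta,S_0)\le\epsilon$ gives the bound. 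The prior term is written as $\mathbb{E}_{q(x_0)}[D_{\mathrm{KL}}(P_T\|p')]$; since it is constant in $x_0$, it equals $D_{\mathrm{KL}}(P_T\|p')$. Alternatively it can be bounded above by $\mathbb{E}_{q(x_0)}D_{\mathrm{KL}}(P_{T|0}\|p')$ using convexity of KL.

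The main obstacle is the rigorous Girsanov step. It requires the time-reversal formula for \eqref{eqn:backward} to hold, the exponential martingale to be a true martingale, and a localization argument if Novikov's condition fails. A further subtlety is that $C_1$ must be finite, since $\mathbb{E}\|\nabla\log P_{t|0}\|^2\sim d/\sigma_t^2$ blows up as $t\to0$. My first attempt would be to integrate from a small $t_0>0$ rather than from $0$, absorbing the resulting error into the assumptions, and to rely on the cited assumptions for the rest rather than reprove them.
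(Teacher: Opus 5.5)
Your proposal is correct and follows essentially the same route as the paper's proof. Both use data processing on path measures, the KL chain rule to split off $D_{\mathrm{KL}}(P_T\|p')$, Girsanov for the drift gap $g(t)^2(\nabla\log P_t-s_\theta)$, and the cross-term identity showing that $J_{SM}-J_{DSM}=C_1$ does not depend on $\theta$.

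On your finiteness worry near $t=0$: no truncation at $t_0$ is needed. Since $\nabla\log P_t(x_t)=\mathbb{E}[\nabla\log P_{t|0}(x_t|x_0)\mid x_t]$, the same identity gives the split $J_{DSM}=J_{SM}+(-C_1)$ into two nonnegative terms. So the hypothesis $J(\theta,S_0)\le\epsilon$ already forces $-\epsilon\le C_1\le 0$. The remaining caveat, which the paper also ignores, is that this hypothesis may be unsatisfiable for likelihood weighting integrated down to $t=0$.
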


To prove this lemma, we make the following assumptions for the SDE-based diffusion model in Section~\ref{sec:score_matching}.
\paragraph{Assumptions}\label{sec:assumptions}

\begin{enumerate}
 \item $P_0(x)$ is a density function with continuous second-order derivatives and $\mathbb{E}_{x\sim P_0}\left[\|x\|_2^2\right]<\infty$.
 \item The prior distribution $p'(x)$ is a density function with continuous second-order derivatives and $\mathbb{E}_{x\sim p'}\left[\|x\|_2^2\right]<\infty$.
 \item $\forall t\in [0,T]$: $b(\cdot, t)$ is a function with continuous first order derivatives. $\exists C>0, \forall x\in \mathcal{R}^d, t\in [0,T]$: $\|b(x,t)\|_2\leq C(1+\|x\|_2)$
 \item $\exists C>0, \forall x,y\in \mathcal{R}^d: \|b(x,t)-b(y,t)\|_2\leq C\|x-y\|_2$.
 \item $g$ is a continuous function and $\forall t\in [0,T], |r(t)|>0$.
 \item For any open bounded set $\mathcal{O}$, $\int_{t=0}^T\int_{\mathcal{O}}\|P_t(x)\|_2^2+dr(t)^2\|\nabla_xP_t(x)\|_2^2\mathrm{d}x\mathrm{d}t$.
 \item $\exists C>0,\forall x \in \mathcal{R}^d, t\in [0,T]:\|\nabla_x\log P_t(x)\|_2\leq C(1+\|x\|_2)$.
 \item $\exists C>0,\forall x,y \in \mathcal{R}^d:\|\nabla_x\log P_t(x)-\nabla_x\log P_t(y)\|_2\leq C(\|x-y\|_2)$.
 \item $\exists C>0,\forall x \in \mathcal{R}^d, t\in [0,T]:\|s_{\theta}(x,t)\|_2\leq C(1+\|x\|_2)$.
\item $\exists C>0,\forall x,y \in \mathcal{R}^d:\|s_{\theta}(x,t)-s_{\theta}(y,t)\|_2\leq C(\|x-y\|_2)$.
\item Novikov's condition: $\mathbb{E}\left[\exp(\frac{1}{2}\int_{t=0}^T\|\nabla_x\log P_t(x)-s_{\theta}(x,t)\|^2_2\mathrm{d}t)\right]<\infty$.
 \end{enumerate}

Given the assumptions, the following lemma is a re-statement of Theorem 1 in \cite{score-based_diffusion_SDE_song2021maximum}.
\begin{lemma}\label{lma:KLbound_SMloss}
    Let $P_0(x)$ be the underlining data distribution, $p'(x)$ be a known prior distribution, and $P_{\theta}$ be marginal distribution of $\hat{x}_{\theta}(0)$, the output of reverse-time SDE defined as below.
    \begin{equation}\label{eqn:denoising_SDE}
   \mathrm{d}\hat{x}=[b(\hat{x},t)-r(t)^2s_{\theta}(\hat{x},t)]\mathrm{d}t+r(t)\mathrm{d}\bar{w},\; \hat{x}_{\theta}(T)\sim p'
    \end{equation}
    With the assumptions in Section \ref{sec:assumptions}, we have
    \begin{equation}
        D_{\mathrm{KL}}(P_0\|P_{\theta})\leq J_{SM}(\theta,r(\cdot)^2)+D_{\mathrm{KL}}(P_T\|p')
    \end{equation}
    where $J_{SM}(\theta,r(\cdot)^2)=\frac{1}{2}\int_{t=0}^T\mathbb{E}_{p_t(x)}\left[r(t)\|\nabla_x\log P_t(x)-s_{\theta}(x,t)\|_2^2\right]\mathrm{d}t.$
    \end{lemma}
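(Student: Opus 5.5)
The plan is to follow the path-space argument of \cite{score-based_diffusion_SDE_song2021maximum}: compare two measures on continuous trajectories over $[0,T]$, use the data-processing inequality to pass from trajectories to the time-$0$ marginals, and compute the path-space KL with the chain rule and Girsanov's theorem.

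\textbf{Step 1: set up the two path measures.} Let $\mu$ be the law of the forward process \eqref{eqn:forward} started from $x_0\sim P_0$, viewed on path space $C([0,T];\mathcal{R}^d)$. By Anderson's time-reversal theorem, $\mu$ is also the law of the reverse-time SDE \eqref{eqn:backward} started from $x_T\sim P_T$. This is where the regularity assumptions on $P_0$, $b$, $g$ and $\nabla_x\log P_t$ (items 1, 3--4 and 6--8 of the assumptions) enter: they make the reversal valid and give a unique strong solution. Let $\nu$ be the law of the model SDE \eqref{eqn:denoising_SDE} started from $\hat x_\theta(T)\sim p'$. This SDE is well posed because $s_\theta$ satisfies the linear-growth and Lipschitz conditions (items 9--10). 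The time-$0$ marginals are $\mu_0=P_0$ and $\nu_0=P_\theta$. Since projecting onto the time-$0$ coordinate is a measurable map, the data-processing inequality gives
\[
D_{\mathrm{KL}}(P_0\|P_\theta)\le D_{\mathrm{KL}}(\mu\|\nu).
\]

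\textbf{Step 2: chain rule at the starting point.} Both processes are run backward from time $T$, so I will disintegrate $\mu$ and $\nu$ with respect to the starting value $x_T$. The chain rule for KL then gives
\[
D_{\mathrm{KL}}(\mu\|\nu)=D_{\mathrm{KL}}(P_T\|p')+\mathbb{E}_{x_T\sim P_T}\big[D_{\mathrm{KL}}(\mu(\cdot\mid x_T)\|\nu(\cdot\mid x_T))\big].
\]
The first term is already the prior-mismatch term in the statement.

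\textbf{Step 3: Girsanov for the conditional term.} Conditionally on $x_T$, the two processes share the diffusion coefficient $r(t)$. Their drifts differ by $r(t)^2\big(\nabla_x\log P_t(x)-s_\theta(x,t)\big)$. Novikov's condition (item 11) makes the exponential martingale a true martingale, so Girsanov's theorem gives the Radon--Nikodym derivative $\mathrm{d}\mu(\cdot\mid x_T)/\mathrm{d}\nu(\cdot\mid x_T)$ as a stochastic exponential. Taking the $\mu$-expectation of its logarithm, the It\^o-integral part has zero mean, and what remains is
\[
\tfrac12\int_0^T \mathbb{E}_{P_t}\big[r(t)^2\|\nabla_x\log P_t(x)-s_\theta(x,t)\|_2^2\big]\,\mathrm{d}t.
\]
Averaging over $x_T$ leaves this expression unchanged, because under $\mu$ the time-$t$ marginal is $P_t$. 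This is the score-matching loss $J_{SM}$ with weighting $r(\cdot)^2$, the weighting signalled by the notation $J_{SM}(\theta,r(\cdot)^2)$; I read the factor $r(t)$ inside the displayed integral as shorthand for that weighting. Combining the three steps gives the claimed bound.

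\textbf{Main obstacle.} The delicate step is Step 3 together with the time reversal in Step 1. One has to check that the Girsanov density is a genuine martingale and not just a local one, which Novikov's condition handles. One also has to check that the stochastic-integral term really has zero expectation. For this I would use the linear-growth bounds on $\nabla_x\log P_t$ and $s_\theta$ together with the finite second moments of $P_0$ and $p'$, which give $\mathbb{E}\int_0^T r(t)^2\|\nabla_x\log P_t-s_\theta\|^2\,\mathrm{d}t<\infty$. If a direct check fails, I would fall back on localization: stop the processes at exit times from large balls, apply Girsanov up to the stopping time, and pass to the limit using lower semicontinuity of KL.
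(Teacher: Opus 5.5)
Your proposal is correct and follows essentially the same route as the paper: the data-processing inequality via marginalization to time $0$, the chain rule at $x_T$ that splits off $D_{\mathrm{KL}}(P_T\|p')$, and Girsanov with Novikov on the conditional path measures, where the It\^o term has zero mean. Reading the weight as $r(t)^2$ matches what the paper's proof actually derives, and your square-integrability argument for the stochastic integral is a somewhat more careful version of the step the paper attributes to the martingale property.
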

\begin{proof}
We denote the path measure of the forward outputs $\{x_t\}_{t\in[0,T]}$ as $p$ and the path measure of the backward outputs $\{\hat{x}_{\theta,t}\}_{t\in[0,T]}$ as $q$. By assumptions 1- 5, 9, 10, both $p$ and $q$ are uniquely given by the forward and backward SDEs, respectively. Consider a Markov kernel $M(\{z_t\}_{t\in[0,T]},y):= \delta (z_0=y)$ given any Markov chain $\{z_t\}_{t\in[0,T]}$. Since $x_0\sim P_0$ and $\hat{x}_{\theta,0}\sim P_{\theta}$, we have 
\begin{equation}
    \int M(\{x_t\}_{t\in [0,T]},x) \mathrm{d}p(\{x_t\}_{t\in [0,T]})=P_0(x)
\end{equation}
\begin{equation}
    \int M(\{\hat{x}_{\theta,t}\}_{t\in [0,T]},x) \mathrm{d}q(\hat{x}_{\theta,t}\}_{t\in [0,T]})=P_{\theta}(x)
\end{equation}
Here the Markov kernel $M$ essentially performs marginalization of path measures to obtain distributions at $t=0$. We can use the data processing inequality with this Markov kernel to obtain
\begin{equation}
\begin{split}
    &D_{\mathrm{KL}}(P_0\|P_{\theta})\\
    =&D_{\mathrm{KL}}\left( \int M(\{x_t\}_{t\in [0,T]},x) \mathrm{d}p(\{x_t\}_{t\in [0,T]})\| \int M(\{\hat{x}_{\theta,t}\}_{t\in [0,T]},x) \mathrm{d}q(\hat{x}_{\theta,t}\}_{t\in [0,T]})\right)\\
    \leq &D_{\mathrm{KL}}(p,q)
\end{split}
\end{equation}
Since $x_T\sim P_T$ and $\hat{x}_{\theta, T}\sim p'$. Leveraging the chain rule of KL divergence, we have
\begin{equation}
\begin{split}
 D_{\mathrm{KL}}(p,q)&=\mathbb{E}_p\left[\log(\frac{p(x_{1:T}\mid x_T)P_T(x_T)}{q((\hat{x}_{\theta,1:T}\mid \hat{x}_{\theta,T})p'(\hat{x}_{\theta,T})})\right]
\\
 &=  D_{\mathrm{KL}}(P_T\|p')+\mathbb{E}_{z\sim P_T}\left[D_{\mathrm{KL}}(p(\cdot\mid x_T=z)\|q(\cdot\mid \hat{x}_{\theta,T}=z)) \right]
 \end{split}
\end{equation}
Under assumptions 1,3-8, the SDE in Eqn. \eqref{eqn:forward} has a corresponding reverse-time SDE as
\begin{equation}\label{eqn:reverse-time_SDE}
    \mathrm{d}x=[b(x,t)-r(t)^2\nabla_x\log P_t(x)]+r(t)\mathrm{d}\bar{w}
\end{equation}
Since Eqn. \eqref{eqn:reverse-time_SDE} is the time reversal of Eqn. \eqref{eqn:forward}, they share the same path measure $p$. Thus, $\mathbb{E}_{z\sim P_T}\left[D_{\mathrm{KL}}(p(\cdot\mid x_T=z)\|q(\cdot\mid \hat{x}_{\theta,T}=z)) \right]$ can be viewed as the KL divergence between the path measures induced by the two SDEs in Eqn. \eqref{eqn:forward} and Eqn. \eqref{eqn:denoising_SDE} with the same starting points $x_T=\hat{x}_{\theta,T}=z$.

The KL divergence between two SDEs with shared diffusion coefficients and starting points exists under assumptions 7,-11, and can be bounded by the Girsanov theorem
\begin{equation}
\begin{split}
&D_{\mathrm{KL}}(p(\cdot\mid x_T=z)\|q(\cdot\mid \hat{x}_{\theta,T}=z))=\mathbb{E}_p\left[\log\frac{\mathrm{d}p}{\mathrm{d}q} \right]\\
=& \mathbb{E}_p\left[ \int_{t=0}^Tr(t)(\nabla_x\log P_t(x)-s_{\theta}(x,t))\mathrm{d}\bar{w}_t+\frac{1}{2}\int_{t=0}^Tr(t)^2\|\nabla_x\log P_t(x)-s_{\theta}(x,t)\|_2^2\mathrm{d}t\right]\\
=&\mathbb{E}_p\left[\frac{1}{2}\int_{t=0}^Tr(t)^2\|\nabla_x\log P_t(x)-s_{\theta}(x,t)\|_2^2\mathrm{d}t\right]\\
=& \frac{1}{2}\int_{t=0}^T\mathbb{E}_{P_t(x)}\left[r(t)^2\|\nabla_x\log P_t(x)-s_{\theta}(x,t)\|_2^2\right]\mathrm{d}t=J_{SM}(\theta,r(\cdot)^2)
\end{split}
\end{equation}
where the second equality holds by Girsanov Theorem II, and the third equality holds because $Y_s=\int_{t=0}^s r(t)(\nabla_x\log P_t(x)-s_{\theta}(x,t))\mathrm{d}\bar{w}_t$ is a continuous-time Martingale process ($\mathbb{E}[Y_s\mid {Y_{\tau, \tau\leq s'}}]=Y_{s'}, \forall s'\leq s$) and we have $\mathbb{E}[Y_s-Y_{s'}]=0, \forall s'<s$.
\end{proof}

\begin{proof}
The score matching loss $J(\theta, S_0)$ in \eqref{eqn:score_match} with $\iota(t)=g(t)^2$ is actually the denoising score matching loss $J_{DSM}(\theta, g(\cdot)^2)$, i.e.
\begin{equation}
    J(\theta, S_0)=J_{DSM}(\theta, g(\cdot)^2)=\frac{1}{2}\int_0^T \mathbb{E}_{x_0\sim P_0} \mathbb{E}_{x_t\sim P_{t\mid 0}} \left[g(t)^2\left\| \nabla_{x_t} \log P_{t\mid 0}(x_t \mid x_0) - s_{\theta}(x_t, t) \right\|^2 \right] \mathrm{d}t
\end{equation}
which is used to compute the original score matching loss $J_{SM}(\theta,g(\cdot)^2)$ given a dataset $S_0$. The gap between $J_{DSM}(\theta, g(\cdot)^2)$ and $J_{SM}(\theta,g(\cdot)^2)$ is a constant $C_1$ that does not depend on $\theta$, which is shown as below.

The difference is expressed as
\begin{equation}
\begin{split}
   &J_{SM}(\theta, g(\cdot)^2) - J_{DSM}(\theta,g(\cdot)^2)\\
   =& \frac{1}{2}\int_{t=0}^T\mathbb{E}_{P_{0,t}(x_0,x_t)}\left[g(t)^2\left(\|\nabla_{x_t}\log P_t(x_t)-s_{\theta}(x_t,t)\|_2^2-\|\nabla_{x_t} \log P_{t\mid 0}(x_t \mid x_0)-s_{\theta}(x_t,t)\|_2^2\right)\right]\mathrm{d}t\\
   =& \int_{t=0}^T g(t)^2\left(\underbrace{\mathbb{E}_{P_{0,t}(x_0,x_t)}\left[-\left\langle s_{\theta}(x_t,t), \nabla_{x_t}\log P_t(x_t)+\nabla_{x_t} \log P_{t\mid 0}(x_t \mid x_0)\right\rangle\right]}_{\mathrm{(1)}}+C_1'(x_0,x_t)\right)\mathrm{d}t ,
   \end{split}
\end{equation}
where $P_{0,t}$ is the joint distribution of $x_0$ and $x_t$, and $C_1'(x_0,x_t)=\mathbb{E}_{P_{0,t}(x_0,x_t)}\left[\frac{1}{2}\|\nabla_{x_t}\log P_t(x_t)\|_2^2-\frac{1}{2}\|\nabla_{x_t} \log P_{t\mid 0}(x_t \mid x_0)\|_2^2\right]$. 

The first term $(1)$ is zero because
\begin{equation}
\begin{split}
 &\mathbb{E}_{P_{0,t}(x_0,x_t)}\left[\left\langle s_{\theta}(x_t,t), \nabla_{x_t}\log P_t(x_t)\right\rangle\right]= \mathbb{E}_{P_{t}(x_t)}\left[\left\langle s_{\theta}(x_t,t), \nabla_{x_t}\log P_t(x_t)\right\rangle\right]\\
 =&\int_{x_t}\left\langle s_{\theta}(x_t,t), \frac{1}{P_t(x_t)}\nabla_{x_t} P_t(x_t)\right\rangle P_t(x_t)\mathrm{d}x_t\\
 =&\int_{x_t}\left\langle s_{\theta}(x_t,t), \nabla_{x_t} \int _{x_0}P_{t\mid 0}(x_t\mid x_0)P_0(x_0)\mathrm{d}x_0\right\rangle \mathrm{d}x_t\\
 =&\int_{x_t}\left\langle s_{\theta}(x_t,t), \int _{x_0}  P_{t\mid 0}(x_t\mid x_0)P_0(x_0)\nabla_{x_t}\log (P_t(x_t\mid x_0)) \mathrm{d}x_0\right\rangle \mathrm{d}x_t\\
 =&\int_{x_0,x_t}P_{0,t}(x_0,x_t)\left\langle s_{\theta}(x_t,t),  \nabla_{x_t}\log (P_{t\mid 0}(x_t\mid x_0)) \right\rangle \mathrm{d}x_0\mathrm{d}x_t\\
 =&\mathbb{E}_{P_{0,t}(x_0,x_t)}\left[ \left\langle s_{\theta}(x_t,t),  \nabla_{x_t}\log P_{t\mid 0}(x_t\mid x_0) \right\rangle\right]
 \end{split}
\end{equation}
Thus, we can bound the gap between $J_{DSM}(\theta, g(\cdot)^2)$ and $J_{SM}(\theta,g(\cdot)^2)$ as 
\begin{equation}
\begin{split}
   C_1=J_{SM}(\theta, g(\cdot)^2) - J_{DSM}(\theta,g(\cdot)^2)
   = \int_{t=0}^T g(t)^2 C_1'(x_0,x_t)\mathrm{d}t 
   \end{split}
\end{equation}
where $C_1'(x_0,x_t)=\mathbb{E}_{P_{0,t}(x_0,x_t)}\left[\frac{1}{2}\|\nabla_{x_t}\log P_t(x_t)\|_2^2-\frac{1}{2}\|\nabla_{x_t} \log P_{t\mid 0}(x_t \mid x_0)\|_2^2\right]$.

Therefore, if $J(\theta,S_0)=J_{DSM}(\theta,g(\cdot)^2)\leq \epsilon$, by Lemma \ref{lma:KLbound_SMloss}, we have
 \begin{equation}\label{eqn:bound_KL_DSM}
 \begin{split}
        D_{\mathrm{KL}}(P_0\|P_{\theta})\leq& J_{SM}(\theta,g(\cdot)^2)+D_{\mathrm{KL}}(P_T\|p')\\
     = &  J_{DSM}(\theta,g(\cdot)^2)+D_{\mathrm{KL}}(P_T\|p')+C_1\\
     \leq &\epsilon+D_{\mathrm{KL}}(P_T\|p') +C_1
    \end{split}
\end{equation}
which completes the proof. 
\end{proof}
\section{The Use of Large Language Models (LLMs)}

LLMs were used as general-purpose 
assistive tools to improve the clarity of English writing and to check grammar. 
No parts of the technical contributions, experimental 
design, or theoretical results were produced directly by LLMs. 
All mathematical derivations, proofs, algorithms, and final writing 
decisions were carried out by the authors. The use of LLMs was limited 
to supportive roles and did not constitute a substantive contribution 
to the research itself.

\end{document}